\tikzstyle{mybox} = [draw=blue!50, fill=blue!20, very thick,
\tikzstyle{fancytitle} =[fill=gray!20, rounded corners, text=black]
\numberwithin{equation}{section}
\numberwithin{figure}{section}
\theoremstyle{plain}
\newtheorem{theorem}{Theorem}[section]
\newtheorem{lemma}[theorem]{Lemma}
\newtheorem{proposition}[theorem]{Proposition}
\newtheorem{corollary}[theorem]{Corollary}
\theoremstyle{definition}
\newtheorem{remark}{Remark}[section]
\newcommand{\bitem}{\begin{itemize}}
\newcommand{\eitem}{\end{itemize}}
\newcommand{\mc}[1]{\mathcal{#1}}
\newcommand{\mb}[1]{\mathbb{#1}}
\newcommand{\N}{\mathbb{N}}
\newcommand{\R}{\mathbb{R}}
\newcommand{\B}{\mathbb{B}}
\newcommand{\bpm}{\begin{pmatrix}}
\newcommand{\epm}{\end{pmatrix}}
\newcommand{\bvm}{\begin{vmatrix}}
\newcommand{\evm}{\end{vmatrix}}
\newcommand{\bsm}{\left(\begin{smallmatrix}}
\newcommand{\esm}{\end{smallmatrix}\right)}
\newcommand{\T}{\top}
\newcommand{\ol}[1]{\overline{#1}}
\newcommand{\la}{\langle}
\newcommand{\ra}{\rangle}
\newcommand{\veps}{\varepsilon}
\newcommand{\gdw}{\Leftrightarrow}
\newcommand{\eins}{\mathbb{1}}
\DeclareMathOperator{\Diag}{Diag}
\DeclareMathOperator{\rint}{rint}
\DeclareMathOperator{\argmin}{arg min}
\DeclareMathOperator{\argmax}{arg max}
\DeclareMathOperator{\mean}{mean}
\DeclareMathOperator{\logexp}{logexp}
\DeclareMathOperator{\vecmax}{vecmax}
\title[Image Labeling Based on Graphical Models Using Wasserstein Messages and Geometric Assignment]{Image Labeling Based on Graphical Models \\ Using Wasserstein Messages and Geometric Assignment}
\author[R.~H\"{u}hnerbein, F.~Savarino, F.~\r{A}str\"{o}m, C.~Schn\"{o}rr]{Ruben H\"{u}hnerbein, Fabrizio Savarino, Freddie \r{A}str\"{o}m, Christoph Schn\"{o}rr}
\address[R.~H\"{u}hnerbein]{Image and Pattern Analysis Group, Heidelberg University, Germany} 
\email{ruben.huehnerbein@iwr.uni-heidelberg.de}
\urladdr{\url{http://ipa.math.uni-heidelberg.de}}
\address[F.~Savarino]{Image and Pattern Analysis Group, Heidelberg University, Germany} 
\email{fabrizio.savarino@iwr.uni-heidelberg.de}
\urladdr{\url{http://ipa.math.uni-heidelberg.de}}
\address[F.~\r{A}str\"{o}m]{Heidelberg Collaboratory for Image Processing, Heidelberg University, Germany} 
\email{freddie.astroem@iwr.uni-heidelberg.de}
\urladdr{\url{https://hciweb.iwr.uni-heidelberg.de/user/fastroem}}
\address[C.~Schn\"{o}rr]{Image and Pattern Analysis Group, Heidelberg University, Germany} 
\email{schnoerr@math.uni-heidelberg.de}
\urladdr{\url{http://ipa.math.uni-heidelberg.de}}
\date{\today} 
\thanks{Support by the German Science Foundation, grant GRK 1653, is gratefully acknowledged.}
\keywords{image labeling, assignment manifold, Fisher--Rao metric, Riemannian gradient flow, discrete optimal transport, Wasserstein distance, entropic regularization, graphical models}
\subjclass[2010]{62H35, 62M40, 65K10, 68U10}
\begin{document}

\begin{abstract}
We introduce a novel approach to Maximum A Posteriori inference based on discrete graphical models. By utilizing local Wasserstein distances for coupling assignment measures across edges of the underlying graph, a given discrete objective function is smoothly approximated and restricted to the assignment manifold. A corresponding multiplicative update scheme combines in a single process (i) geometric integration of the resulting Riemannian gradient flow and (ii) rounding to integral solutions that represent valid labelings. Throughout this process, local marginalization constraints known from the established LP relaxation are satisfied, whereas the smooth geometric setting results in rapidly converging iterations that can be carried out in parallel for every edge. 
\end{abstract}

\maketitle
\tableofcontents

\section{Introduction}
\label{sec:Introduction}

\subsection{Overview and Motivation}
\label{sec:Overview}\label{sec:Overview}
Let $\Omega \subset \R^{2}$ be a domain where image data are observed, and let $\mc{G}=(\mc{V},\mc{E}),\; |\mc{V}|=m$, denote a grid graph embedded into $\Omega$. Each vertex $i \in \mc{V}$ indexes the location of a pixel, to which a random variable 
\begin{equation}\label{eq:def-X-labels}
x_{i} \in \mc{X}=\{\ell_{1},\dotsc,\ell_{n}\}
\end{equation}
is assigned which takes values in a finite set $\mc{X}$ of \textit{labels}. 
The \textit{image labeling problem} is the task to assign to each $x_{i}$ a label such that the discrete \textit{objective function}
\begin{equation}\label{eq:def-E}
\min_{x \in \mathcal{X}^{m}} E(x),\qquad
E(x) = \sum_{i \in \mathcal{V}} E_{i}(x_{i}) + \sum_{ij \in \mathcal{E}} E_{ij}(x_{i},x_{j})
\end{equation}
is minimized. This function comprises for each pixel $i \in \mc{V}$ local energy terms $E_{i}(x_{i})$ that evaluate local label predictions for each possible value of $x_{i} \in \mc{X}$. In addition, $E(x)$ comprises for each edge $ij \in \mc{E}$ local distance functions $E_{ij}(x_{i},x_{j})$ that evaluate the joint assignment of labels to $x_{i}$ and $x_{j}$. If the local energy functions $E_{ij}(x_{i},x_{j})=d(x_{i},x_{j})$ are defined by a metric $d \colon \mc{X} \times \mc{X} \to \R$, then \eqref{eq:def-E} is called the \textit{metric labeling problem} \cite{Kleinberg2002}. In general, the presence of these latter terms makes image labeling a combinatorially hard task.  
Function $E(x)$ has the common format of variational problems for image analysis comprising a data term and a regularizer. From a Bayesian perspective, therefore, minimizing $E$ corresponds to \textit{Maximum A-Posteriori} inference with respect to the probability distribution $p(x) = \frac{1}{Z} \exp(-E(x))$. We refer to \cite{Kappes2015} for a recent survey on the image labeling problem and on algorithms for solving either approximately or exactly problem \eqref{eq:def-E}.

A major class of algorithms for approximately solving \eqref{eq:def-E} is based on the \textit{linear} (programming) \textit{relaxation} \cite{WernerPAMI07} (see Section \ref{sec:LP-relaxation} for details)
\begin{equation}\label{eq:LP-relaxation}
\min_{\mu \in \mc{L}_{\mc{G}}} \la \theta, \mu \ra.
\end{equation}
Solving the linear program (LP) \eqref{eq:LP-relaxation} returns a globally optimal \textit{relaxed indicator vector} $\mu$ whose components take values in $[0,1]$. If $\mu$ is a binary vector, then it corresponds to a solution of problem \eqref{eq:def-E}. In realistic applications, this is not the case, however, and the relaxed solution $\mu$ has to be rounded to an integral solution in a post-processing step. 

In this paper, we present an alternative inference algorithm that deviates from the traditional two-step process: convex relaxation and rounding. It is based on the recently proposed geometric approach \cite{Astroem2017} to image labeling. The basic idea underlying this approach is to restrict indicator vector fields to the relative interior of the probability simplex, equipped with the Fisher-Rao metric, and to regularize label assignments by iteratively computing Riemannian means (see Section \ref{sec:Labeling-Assignment-Manifold} for details). This results in a highly parallel, multiplicative update scheme, that rapidly converges to an integral solution. Because this model of label assignment does not interfere with data representation, the approach applies to any data given in a metric space. The recent paper \cite{Bergmann2017a} reports a convergence analysis and the application of our scheme to a range of challenging labeling problems of manifold-valued data.

Adopting this starting point, the objectives of the present paper are:
\begin{itemize}
\item Show how the approach \cite{Astroem2017} can be used to efficiently compute high-quality (low-energy) solution for an arbitrary given instance of the labeling problem \eqref{eq:def-E}.
\item Devise a novel labeling algorithm that tightly integrates both relaxation and rounding to an integral solution in a single process.
\item Stick to the smooth geometric model suggested by \cite{Astroem2017} so as to overcome the inherent non-smoothness of convex polyhedral relaxations and the slow convergence of corresponding first-order iterative methods of convex programming.
\end{itemize}
Regarding the last point, a key ingredient of our approach is a \textit{smooth} approximation 
\begin{equation}\label{eq:def-J-smooth}
E_{\tau}(\mu_{\mc{V}}) = \la \theta_{\mc{V}}, \mu_{\mc{V}} \ra
+ \sum_{ij \in \mc{E}} d_{\theta_{ij},\tau}(\mu_{i},\mu_{j}),\qquad \tau > 0
\end{equation}
of problem \eqref{eq:LP-relaxation}, where $d_{\theta_{ij},\tau}$ denotes the local \textit{smoothed} Wasserstein distance 
between the discrete label assignment measures $\mu_{i}, \mu_{j}$ coupled along the edge $ij$ of the underlying graph. Besides achieving the degree of smoothness required for our geometric setting, this approximation also properly takes into account the regularization parameters that are specified in terms of the local energy terms $E_{ij}$ of the labeling problem \eqref{eq:def-E}. Our approach restricts the function $E_{\tau}$ to the so-called assignment manifold and iteratively determines a labeling by tightly combining geometric optimization with rounding to an integral solution in a smooth fashion.

\subsection{Related Work}
\label{sec:Related-Work}

Problem sizes of linear program (LP) \eqref{eq:LP-relaxation} are large in typical applications of image labeling, which rules out the use of standard LP codes. In particular, the theoretically and practically most efficient interior point methods based on self-concordant barrier functions
\cite{nesterov1987,Renegar1995} are infeasible due to the dense linear algebra steps required to determine search and update directions. 

Therefore, the need for dedicated solvers for the LP relaxation \eqref{eq:LP-relaxation} has stimulated a lot of research. A prominent example constitute subclasses of objective functions \eqref{eq:def-E} as studied in \cite{EnergyGraphCuts-PAMI04}, in particular binary submodular functions, that enable to reformulate the labeling problem as maximum-flow problem in an associated network and the application of discrete combinatorial solvers \cite{GraphCuts-01,Boykov2004}. 

Since the structure of such algorithms inherently limits fine-grained parallel implementations, however, \textit{belief propagation} and variants \cite{Yedidia-GenBP-05} have been popular among practitioners. These fixed point schemes in terms of dual variables iteratively enforce the so-called local polytope constraints that define the feasible set of the LP relaxation \eqref{eq:LP-relaxation}. They can be efficiently implemented  using `message passing' and exploit the structure of the underlying graph. Although convergence is not guaranteed on cyclic graphs, the performance in practice may be good \cite{TRBP-CPLEX-06}. The theoretical deficiencies of basic belief propagation in turn stimulated research on \textit{convergent} message passing schemes, either using heuristic damping or utilizing in a more principled way \textit{convexity}. Prominent examples of the latter case are 
\cite{WaiJaaWil-TRMAP-05, Hazan2010}. We refer to  \cite{Kappes2015} for many more references and a comprehensive experimental evaluation of a broad range of algorithms for image labeling.

The feasible set of the relaxation \eqref{eq:LP-relaxation} is a superset of the original feasible set of \eqref{eq:def-E}. Therefore, globally optimal solutions to \eqref{eq:LP-relaxation} generally do \textit{not} constitute valid labelings but comprise \textit{non}-integral components $\mu_{i}(x_{i}) \in (0,1),\, x_{i} \in \mc{X},\, i \in \mc{V}$. Randomized rounding schemes for converting a relaxed solution vector $\ol{\mu}$ to a valid labeling $x \in \mc{X}^{m}$, along with suboptimality bounds, were studied in \cite{Kleinberg2002, Chekuri:2005aa}. The problem to infer components $x^{\ast}_{i}$ of the unknown globally optimal \textit{combinatorial} labeling that minimizes \eqref{eq:def-E}, through partial optimality and persistency, was studied in \cite{Swoboda2016}. We refer to \cite{WernerPAMI07} for the history and more information about the LP relaxation of labeling problems, and to \cite{WainrightJordan08} for connections to discrete probabilistic graphical models from the variational viewpoint.

The approach \cite{Ravikumar:2010aa} applies the mirror descent scheme \cite{Nemirovsky:1983aa} to the LP \eqref{eq:LP-relaxation}. This amounts to sequential proximal minimization \cite{Rockafellar:1976aa}, yet using a Bregman distance as proximity measure instead of the squared Euclidean distance \cite{Censor1992}. A key technical aspect concerns the proper choice of entropy functions related to the underlying graphical model, that qualify as convex functions of Legendre type (cf.~\cite{Bauschke:1997aa}). The authors of \cite{Ravikumar:2010aa} observed a fast convergence rate. However, the scheme  does not scale up to the typically large problem sizes used in image analysis, especially when graphical models with higher edge connectivity are considered, due to the memory requirements when working entirely in the primal domain.

\textit{Optimal transport} and the \textit{Wasserstein distance} have become a major tool of signal modeling and analysis \cite{Kolouri2017}. 
In connection with the metric labeling problem, using the Wasserstein distance (aka.~optimal transport costs, earthmover metrics) was proposed before by \cite{Archer:2004aa} and \cite{Chekuri:2005aa}. These works study bounds on the integrality gap of an `earthmover LP' and performance guarantees of rounding procedures applied as post-processing. While the earthmover LP corresponds to our approach \eqref{eq:def-J-smooth} \textit{without} smoothing, authors do not specify how to solve such LPs efficiently, especially when the LP relates to a large-scale graphical models as in image analysis. Moreover, the bounds derived by \cite{Archer:2004aa} become weak with increasing numbers of variables, which are fairly large in typical problems of image analysis. In contrast, the focus of the present paper is on a \textit{smooth geometric} problem reformulation that scales well with both the problem size and the number of labels, and performs rounding \textit{simultaneously}. If and how theoretical guarantees regarding the integrality gap and rounding carry over to our setting, is an interesting open research problem of future research.

Regarding the finite-dimensional formulation of optimal discrete transport in terms of linear programs, the design of efficient algorithms for large-scale problems requires sophisticated techniques \cite{Schmitzer2016}. The problems of discrete optimal transport studied in this paper, in connection with the local Wasserstein distances of \eqref{eq:def-J-smooth}, have a small or moderate size ($n^{2}$: number of labels squared). We apply the standard device of enhancing convexity through entropic regularization, which increases smoothness in the dual domain. We refer to \cite{Schneider1990} and \cite[Ch.~9]{Brualdi2006} for basic related work and the connection to matrix scaling algorithms and the history. When entropic regularization is very weak and for large problem sizes, the related fixed point iteration suffers from numerical instability, and dedicated methods for handling them have been proposed \cite{Schmitzer16}. Smoothing of the Wasserstein distance and  Sinkhorn's algorithm has become popular in machine learning due to \cite{Cuturi2013}. The authors of \cite{Peyre2015,Cuturi2016} comprehensively investigated barycenters and interpolation based on the Wasserstein distance. Our approach to image labeling, in conjunction with the geometric approach of \cite{Astroem2017}, is novel and elaborates \cite{Astroem2017a}.

Finally, since our approach is defined on a graph and works with data on a graph, our work may be assigned to the broad class of nonlocal methods for image analysis on graphs, from a more general viewpoint. Recent major related work includes \cite{Bertozzi:2012aa} on the connection between the Ginzburg-Landau functional for binary regularized segmentation and spectral clustering, and \cite{Bergmann:2017aa} on generalizing PDE-like models on graphs to manifold-valued data. We refer to the bibliography in these works and to the seminal papers \cite{Ambrosio:1989aa} on regularized variational segmentation using $\Gamma$-convergence and to \cite{Gilboa:2008aa,Elmoataz:2008aa} on nonlocal variational image processing on graphs, that initiated these fast evolving lines of research. The focus on the present paper however is on discrete graphical models and the corresponding labeling problem, in terms of any discrete objective function of the form \eqref{eq:def-E}.

\subsection{Contribution and Organization}

We collect basic notation, background material and details of the LP relaxation \eqref{eq:LP-relaxation} in Section \ref{sec:Preliminaries}. 
Section \ref{sec:Labeling-Assignment-Manifold} summarizes the basic concepts of the geometric labeling approach of \cite{Astroem2017}, in particular the so-called assignment manifold, and the general framework of \cite{SavHuen17} for numerically integrating Riemannian gradient flows of functionals defined on the assignment manifold. This section provides the basis for the two subsequent sections that contain our main contribution.

Section \ref{sec:Gradients} studies the approximation \eqref{eq:def-J-smooth} and provides explicit expressions for the Riemannian gradient of the restriction of $E_{\tau}$ to the assignment manifold. A key property of this set-up concerns the local polytope constraints that define the feasible set $\mc{L}_{\mc{G}}$ of the LP relaxation \eqref{eq:LP-relaxation}: by construction, they are \textit{always} satisfied throughout the resulting iterative process of label assignment. Thus, our formulation is \textit{both more tightly constrained and smooth}, in contrast to the established convex programming approaches based on \eqref{eq:LP-relaxation}.

Section \ref{sec:Graphical-Models} details the combination of all ingredients into a \textit{single}, smooth, geometric approach that performs simultaneously minimization of the objective function \eqref{eq:def-J-smooth} and rounding to an integral solution (label assignment). This tight integration is a second major property that distinguishes our approach from related work. Section \ref{sec:Graphical-Models} also explains the notion `Wasserstein messages' in the title of this paper due to the dual variables that are numerically utilized to evaluate gradients of local Wasserstein distances, akin to how dual (multiplier) variables in basic belief propagation schemes are used to enforce local marginalization constraints. Unlike the latter computations they have the structure of message passing on a dataflow architecture, however, message passing induced by our approach is fully parallel along all edges of the underlying graph and hence  resembles the structure of numerical solvers for PDEs.

The remaining two sections are devoted to numerical evaluations of our approach. To keep this paper at a reasonable length, we merely consider the most elementary iterative update scheme, based on the geometric integration of the Riemannian gradient flow with the (geometric) explicit Euler scheme. The potential of the framework outlined by \cite{SavHuen17} for more sophisticated numerical schemes will be explored elsewhere along with establishing bounds for parameter values that provably ensure stability of numerical integration of the underlying gradient flow. Furthermore, working out any realistic application is beyond the scope of this paper. Rather, the experimental results demonstrate major properties of our approach.

Section \ref{sec:Numerical-Optimization} provides all details of our implementation that are required to reproduce our computational results. Section \ref{sec:Experiments} reports and discusses the results of four types of experiments:
\begin{enumerate}
\item
The interplay between two parameters $\tau$ and $\alpha$ that control smoothness of the approximation \eqref{eq:def-J-smooth} and rounding, respectively, is studied. In order to miminize efficiently \eqref{eq:def-E}, the Riemannian flow with respect to the smooth approximation \eqref{eq:def-J-smooth} must reveal proper descent directions. This imposes an upper bound on the smoothing parameter $\tau$. Naturally, the effect of rounding has to be stronger to make the iterative process converge to an integral solution. A corresponding choice of $\alpha$ controls the compromise between quality of integral labelings in terms of the energy \eqref{eq:def-J-smooth} and speed of convergence. Fortunately, the upper bound on $\tau$ is large enough to achieve attractive convergence rates.
\item
We comprehensively explore numerically the entire model space of the minimal binary graphical model on the \textit{cyclic} triangle graph $\mc{K}^{3}$, whose relaxation in terms of the so-called \textit{local} polytope already constitutes a superset of the \textit{marginal} polytope as admissible set for valid integral labelings. In this way, we explore the performance of our approach in view of the LP relaxation and established inference based on convex programming, and with respect to the (generally intractable) feasible set of integral solutions. Corresponding phase diagrams display and support quantitatively the trade-off between accuracy of optimization and rate of convergence through the choice of the single parameter $\alpha$.
\item 
A labeling problem of the usual size was conducted to confirm and demonstrate that the finding of the preceding points for `all' models on $\mc{K}^{3}$ also hold in a typical application.  A comparison to sequential tree-reweighted message passing (TRWS) \cite{Kolmogorov-TRMP06} which defines the state of the art, and to loopy belief propagation (BP) based on the OpenGM package \cite{opengm-library}, shows that our approach is on par with these methods regarding the energy level $E(x)$ of the resulting labeling $x$.
\item
A final experiment based on the graphical model with a pronounced non-uniform (non-Potts) prior demonstrates that our approach is able to perform inference for any given graphical model.
\end{enumerate}
We conclude in Section \ref{sec:Conclusion} and relegate some proofs to an Appendix in order not to  interrupt too much the overall line of reasoning.

\section{Preliminaries}\label{sec:Preliminaries}
We introduce basic notation in Section \ref{sec:Notation} and the common linear programming (LP) relaxation of the labeling problem in Section \ref{sec:LP-relaxation}. In order to clearly distinguish between the LP relaxation and our geometric approach to the labeling problem based on \cite{Astroem2017} (see Section \ref{sec:Assignment-Manifold}), we keep the standard notation in the literature for the former approach and the notation from \cite{Astroem2017} for the latter one. Remark \ref{rem:W-vs-mu} below identifies variables of both approaches that play a similar role.

\subsection{Basic Notation}
\label{sec:Notation} 
For an \textit{undirected} graph $\mc{G}=(\mc{V},\mc{E})$, the adjacency relation $i \sim j$ means that vertices $i$ and $j$ are connected by an undirected edge $ij \in \mc{E}$, where the latter denotes the \textit{unordered} pair $\{i,j\}=ij=ji$. The neighbors of vertex $i$ form the set 
\begin{equation}\label{eq:def-Ni}
\mc{N}(i)=\{j \in \mc{V} \colon i \sim j\}
\end{equation}
of all vertices adjacent to $i$, and its cardinality $d(i)=|\mc{N}(i)|$ is the degree of $i$. $\mc{G}$ is turned into a \textit{directed} graph by assigning an \textit{orientation} to every edge $ij$, which then form \textit{ordered} pairs denoted by $(i,j)=ij \neq ji=(j,i)$. We only consider graphs \textit{without multiple} edges between any pair of nodes $i,j \in \mc{V}$.

We use the abbreviation $[n]=\{1,2,\dotsc,n\}$ for $n \in \N$. 
$\ol{\R} = \R\cup\{+\infty\}$ denotes the extended real line. All vectors are regarded as column vectors, and $x^{\T}$ denotes transposition of a vector $x$. We ignore transposition however when vectors are explicitly specified by their components; e.g.~we write $x=(y,z)$ instead of the more cumbersome $x=(y^{\T},z^{\T})^{\T}$. We set $\eins_{n} = (1,1,\dotsc,1) \in \N^{n}$ and write $\eins$ if $n$ is clear from the context. $\la x, y \ra = \sum_{i \in [n]} x_{i} y_{i}$ denotes the Euclidean inner product. Given a matrix 
\begin{equation}
A = \bpm A_{1} \\ \vdots \\ A_{m} \epm
= \bpm A^{1} \dotsc A^{n}\epm \in \R^{m \times n},
\end{equation}
we denote the row vectors by $A_{i},\; i \in [m]$ and the column vectors by $A^{j},\; j \in [n]$. The canonical matrix inner product is $\la A, B \ra = \tr(A^{\T} B)$, where $\tr$ denotes the trace of a matrix, i.e.~$\tr(A^{\T} B)=\sum_{i \in [m]} \la A_{i}, B_{i} \ra = \sum_{j \in [n]} \la A^{j}, B^{j} \ra = \sum_{i \in [m], j \in [n]} A_{ij} B_{ij}$. Superscripts in brackets, e.g.~$A_{i}^{(k)}$, index iterative steps.

The set of nonnegative vectors $x \in \R^{n}$ is denoted by $\R_{+}^{n}$ and the set of strictly positive vectors by $\R_{++}^{n}$. 
The probability simplex $\Delta_{n}=\{p \in \R_{+}^{n} \colon \la\eins_{n},p\ra=1\}$ contains all discrete distributions on $[n]$. A doubly stochastic matrix $\mu_{ij} \in \R_{+}^{n \times n}$, also called \textit{coupling measure} in this paper in connection with discrete optimal transport, has the property: $\mu_{ij} \eins_{n} \in \Delta_{n}$ and $\mu_{ij}^{\T} \eins_{n} \in \Delta_{n}$. We denote these two \textit{marginal distributions} of $\mu_{ij}$ by $\mu_{i}$ and $\mu_{j}$, respectively, and the linear mapping for extracting them by 
\begin{subequations}
\begin{align}\label{eq:def-A}
\mc{A} \colon \R^{n \times n} \to \R^{2 n},\qquad
\mu_{ij} &\mapsto \mc{A} \mu_{ij} = \bpm \mu_{ij} \eins_{n} \\ \mu_{ij}^{\T} \eins_{n} \epm
= \bpm \mu_{i} \\ \mu_{j} \epm.
\intertext{Its transpose is given by}
\label{eq:def-AT}
\mc{A}^{\T} \colon \R^{2 n} \to \R^{n \times n},\qquad
(\nu_{i},\nu_{j}) &\mapsto 
\mc{A}^{\T} \bpm \nu_{i} \\ \nu_{j} \epm 
= \nu_{i} \eins_{n}^{\T} + \eins_{n} \nu_{j}^{\T}.
\end{align}
\end{subequations}
The kernel (nullspace) of a linear mapping $\mc{A}$ is denoted by $\mc{N}(\mc{A})$ and its range by $\mc{R}(\mc{A})$.  

The functions $\exp, \log$ apply \textit{componentwise} to strictly positive vectors $x \in \R_{++}^{n}$, e.g.~$e^{x}=(e^{x_{1}},\dotsc,e^{x_{n}})$, and similarly for strictly positive matrices. Likewise, if $x, y \in \R_{++}^{n}$, then we simply write 
\begin{equation}
x \cdot y = (x_{1} y_{1}, \dotsc, x_{n} y_{n}),\qquad
\frac{x}{y} = \Big(\frac{x_{1}}{y_{1}},\dotsc,\frac{x_{n}}{y_{n}}\Big)
\end{equation}
for the \textit{componentwise} multiplication and division.

We define $\mc{F}_{0}$ to be the class of proper, lower-semicontinuous and convex functions defined on $\R^{n}$. For any function $f \in \mc{F}_{0}$, $\partial f(x)$ denotes its subdifferential at $x$, and the conjugate function $f^{\ast}\in \mc{F}_{0}$ of $f$ is given by the Legendre-Fenchel transform (cf.~\cite[Section 11.A]{Rockafellar2009})
\begin{equation}\label{eq:def-f-conjugate}
f^{\ast}(y) = \sup_{x \in \R^{n}}\{\la y, x \ra - f(x)\}.
\end{equation}
For a given closed convex set $C$, its indicator function is denoted by
\begin{equation}
\delta_{C}(x) = \begin{cases}
0, &\text{if}\; x \in C, \\
+\infty, &\text{otherwise},
\end{cases}
\end{equation}
and 
\begin{equation}\label{eq:def-PC}
P_{C} \colon \R^{n} \to C,\qquad
P_{C}(x) = \argmin_{y \in C} \|x-y\|
\end{equation}
denotes the orthogonal projection onto $C$. The shorthand ``s.t.'' means: ``subject to'' in connection to the specification of constraints. 

The \textit{log-exponential} function $\logexp_{\veps} \in \mc{F}_{0}$ is defined as 
\begin{subequations}\label{eq:def-logexp}
\begin{align}
\logexp_{\veps}(x) &= \veps \log\bigg(\sum_{i \in [n]} e^{\frac{x_{i}}{\veps}}\bigg).
\intertext{
It uniformly approximates the function $\vecmax \in \mc{F}_{0}$ \cite[Ex.~1.30]{Rockafellar2009}, i.e.}
\lim_{\veps\searrow 0} \logexp_{\veps}(x) 
&= \vecmax(x)=\max\{x_{i}\}_{i \in [n]}.
\end{align}
\end{subequations}
We will use the following basic result from convex analysis (cf., e.g.~\cite[Ch.~11]{Rockafellar2009}), where $\partial f(x)$ denotes the subdifferential of a function $f \in \mc{F}_{0}$ at $x$.
\begin{theorem}[inversion rule for subgradients]\label{tmh:inversion-subgradients}
Let $f \in \mc{F}_{0}$. Then
\begin{equation}
\hat p \in \partial f(\hat x)
\quad\gdw\quad
\hat x \in \partial f^{\ast}(\hat p)
\quad\gdw\quad
f(\hat x) + f^{\ast}(\hat p) = \la \hat p, \hat x \ra
\end{equation}
\end{theorem}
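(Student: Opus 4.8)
The plan is to prove the classical equivalences by establishing a cycle of implications, treating the middle condition --- the Fenchel--Young equality $f(\hat x) + f^{\ast}(\hat p) = \la \hat p, \hat x \ra$ --- as the pivot. First I would recall from the definition \eqref{eq:def-f-conjugate} the Fenchel--Young inequality $f(x) + f^{\ast}(p) \ge \la p, x \ra$, valid for all $x, p \in \R^{n}$, which comes directly from $f^{\ast}(p) = \sup_{x}\{\la p, x\ra - f(x)\} \ge \la p, x\ra - f(x)$. The content of the theorem is precisely characterizing when equality holds in this inequality.

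For the implication $\hat p \in \partial f(\hat x) \Rightarrow f(\hat x) + f^{\ast}(\hat p) = \la \hat p, \hat x \ra$: by definition of the subdifferential, $\hat p \in \partial f(\hat x)$ means $f(x) \ge f(\hat x) + \la \hat p, x - \hat x \ra$ for all $x$, equivalently $\la \hat p, x \ra - f(x) \le \la \hat p, \hat x \ra - f(\hat x)$ for all $x$. Taking the supremum over $x$ on the left gives $f^{\ast}(\hat p) \le \la \hat p, \hat x \ra - f(\hat x)$, i.e. $f(\hat x) + f^{\ast}(\hat p) \le \la \hat p, \hat x \ra$; combined with Fenchel--Young this forces equality (and incidentally $f(\hat x)$ is finite). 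Conversely, if the equality holds, then for every $x$ we have $\la \hat p, x \ra - f(x) \le f^{\ast}(\hat p) = \la \hat p, \hat x \ra - f(\hat x)$, which rearranges to the subgradient inequality $f(x) \ge f(\hat x) + \la \hat p, x - \hat x\ra$, so $\hat p \in \partial f(\hat x)$. This already gives the equivalence of the first and third conditions.

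The remaining step is to connect the equality with $\hat x \in \partial f^{\ast}(\hat p)$. The key observation is that the Fenchel--Young equality is symmetric in the roles of $f$ and $f^{\ast}$ once we invoke biconjugation: since $f \in \mc{F}_{0}$ is proper, lower-semicontinuous and convex, the Fenchel--Moreau theorem gives $f^{\ast\ast} = f$. Applying the first equivalence to the function $f^{\ast} \in \mc{F}_{0}$ (with the pair $(\hat p, \hat x)$ in place of $(\hat x, \hat p)$) yields: $\hat x \in \partial f^{\ast}(\hat p) \iff f^{\ast}(\hat p) + f^{\ast\ast}(\hat x) = \la \hat x, \hat p \ra \iff f^{\ast}(\hat p) + f(\hat x) = \la \hat p, \hat x \ra$, which is exactly the pivot equality. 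This closes the chain of equivalences.

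The main obstacle --- really the only nontrivial external ingredient --- is the biconjugation identity $f^{\ast\ast} = f$, which relies on the hypothesis that $f$ belongs to $\mc{F}_{0}$ (proper, lsc, convex); I would simply cite the Fenchel--Moreau theorem from \cite{Rockafellar2009} rather than reprove it. Everything else is elementary manipulation of the defining inequalities, so the proof is short. One should take a little care that $f(\hat x)$ and $f^{\ast}(\hat p)$ are finite wherever the equality is asserted: this is automatic, since $\hat p \in \partial f(\hat x)$ requires $\hat x \in \dom f$ with $f(\hat x)$ finite, and the equality then forces $f^{\ast}(\hat p)$ finite as well (it cannot be $+\infty$ by Fenchel--Young, nor $-\infty$ since $f^{\ast}$ is proper).
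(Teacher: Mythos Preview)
The paper does not actually supply a proof of this theorem: it is stated as a classical result from convex analysis with the parenthetical reference ``(cf., e.g.~\cite[Ch.~11]{Rockafellar2009})'' and then used later (e.g.\ in the proof of Lemma~\ref{lem:dtau-primal-dual}). Your argument is the standard textbook proof --- Fenchel--Young inequality, subgradient inequality, and biconjugation via Fenchel--Moreau --- and it is correct; there is nothing to compare against on the paper's side.
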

We will also apply the following classical theorem of Danskin and its extension by Rockafellar.
\begin{theorem}[\cite{Danskin1966,DanskinCorrolar-91}]\label{thm:Danskin}
Let $f(z)=\max_{w \in W} g(z,w)$, where $W$ is compact and the function $g(\cdot,w)$ is differentiable and $\nabla_{z}g(z,w)$ is continuously depending 
on $(z,w)$. If in addition $g(z,w)$ is convex in $z$, and if $\overline{z}$ is a point such that $\arg\max_{w \in W} g(\overline{z},w) = \{\overline{w}\}$, 
then $f$ is differentiable at $\overline{z}$ with 
\begin{equation}\label{eq:Danskin}
\nabla f(\overline{z}) = \nabla_{z}g(\overline{z},\overline{w}).
\end{equation}
\end{theorem}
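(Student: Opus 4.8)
The plan is to reduce everything to two standard facts about convex functions: (i) a supremum of convex functions is convex, and (ii) a convex function that is finite near a point is differentiable there precisely when its subdifferential at that point is a singleton, in which case that element is the gradient (cf.~\cite[Ch.~23, 25]{Rockafellar2009}). Since each $g(\cdot,w)$ is convex, $f=\sup_{w\in W} g(\cdot,w)$ is convex; it is finite on a neighbourhood of $\overline{z}$ because $W$ is compact and $g$ is continuous, so the maximum is attained and finite. Thus it suffices to prove $\partial f(\overline{z})=\{\nabla_{z}g(\overline{z},\overline{w})\}$.

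First I would record the easy inclusion. Put $v:=\nabla_{z}g(\overline{z},\overline{w})$. Convexity of $g(\cdot,\overline{w})$ together with $f(z)\ge g(z,\overline{w})$ and $f(\overline{z})=g(\overline{z},\overline{w})$ gives, for every $z$,
\[
f(z)\ \ge\ g(z,\overline{w})\ \ge\ g(\overline{z},\overline{w})+\langle v,z-\overline{z}\rangle\ =\ f(\overline{z})+\langle v,z-\overline{z}\rangle,
\]
so $v\in\partial f(\overline{z})$. In particular $\partial f(\overline{z})\neq\emptyset$, the one-sided directional derivatives $f'(\overline{z};d)=\lim_{t\downarrow 0}t^{-1}\big(f(\overline{z}+td)-f(\overline{z})\big)$ exist, and every $p\in\partial f(\overline{z})$ satisfies $\langle p,d\rangle\le f'(\overline{z};d)$ for all directions $d$; in particular $f'(\overline{z};d)\ge\langle v,d\rangle$.

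The substantive step is the matching upper bound $f'(\overline{z};d)\le\langle v,d\rangle$. Fix $d$ and an arbitrary sequence $t_{k}\downarrow 0$. By compactness of $W$ and continuity of $g$ choose maximizers $w_{k}\in\argmax_{w\in W}g(\overline{z}+t_{k}d,w)$, so $f(\overline{z}+t_{k}d)=g(\overline{z}+t_{k}d,w_{k})$. Applying the mean value theorem to $s\mapsto g(\overline{z}+sd,w_{k})$ yields a point $\xi_{k}$ on the segment $[\overline{z},\overline{z}+t_{k}d]$ with
\[
\frac{f(\overline{z}+t_{k}d)-f(\overline{z})}{t_{k}}\ \le\ \frac{g(\overline{z}+t_{k}d,w_{k})-g(\overline{z},w_{k})}{t_{k}}\ =\ \langle\nabla_{z}g(\xi_{k},w_{k}),d\rangle,
\]
the first inequality using $f(\overline{z})\ge g(\overline{z},w_{k})$. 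Passing to a subsequence with $w_{k}\to w^{\ast}$ (compactness of $W$) and noting $\xi_{k}\to\overline{z}$, continuity of $\nabla_{z}g$ in $(z,w)$ sends the right-hand side to $\langle\nabla_{z}g(\overline{z},w^{\ast}),d\rangle$. Moreover $g(\overline{z}+t_{k}d,w_{k})\ge g(\overline{z}+t_{k}d,w)$ for every $w$; letting $k\to\infty$ and using continuity of $g$ gives $g(\overline{z},w^{\ast})\ge g(\overline{z},w)$ for all $w$, hence $w^{\ast}\in\argmax_{w\in W}g(\overline{z},w)=\{\overline{w}\}$, i.e.~$w^{\ast}=\overline{w}$. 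Therefore $\limsup_{k}t_{k}^{-1}\big(f(\overline{z}+t_{k}d)-f(\overline{z})\big)\le\langle v,d\rangle$, and since the sequence $t_{k}\downarrow 0$ was arbitrary, $f'(\overline{z};d)\le\langle v,d\rangle$.

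Combining the two bounds, $f'(\overline{z};d)=\langle v,d\rangle$ for all $d$. Hence for any $p\in\partial f(\overline{z})$ we get $\langle p,d\rangle\le f'(\overline{z};d)=\langle v,d\rangle$ for every $d$, which forces $p=v$. Thus $\partial f(\overline{z})=\{v\}$, and by fact (ii) $f$ is differentiable at $\overline{z}$ with $\nabla f(\overline{z})=v=\nabla_{z}g(\overline{z},\overline{w})$, which is \eqref{eq:Danskin}. The main obstacle is the upper bound: one must produce the near-maximizers $w_{k}$, invoke the mean value theorem, and use compactness of $W$ together with upper semicontinuity of $w\mapsto\argmax_{w}g(\overline{z}+t_{k}d,w)$ to drive $w^{\ast}$ onto the unique maximizer $\overline{w}$ — this is precisely where the hypothesis $\argmax_{w\in W}g(\overline{z},w)=\{\overline{w}\}$ is consumed, and where (joint) continuity of $g$, beyond the stated continuity of $\nabla_{z}g$, is implicitly needed. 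For Rockafellar's extension one simply drops uniqueness: the same subsequence argument shows every accumulation point of $(w_{k})$ lies in $\argmax_{w}g(\overline{z},w)$, and one obtains $\partial f(\overline{z})=\conv\{\nabla_{z}g(\overline{z},w)\colon w\in\argmax_{w\in W}g(\overline{z},w)\}$, which collapses to a singleton in the present setting.
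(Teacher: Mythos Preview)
The paper does not prove this theorem; it is stated as a classical result and merely cited to \cite{Danskin1966,DanskinCorrolar-91}. So there is no proof in the paper to compare against.

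Your argument is essentially the standard proof of Danskin's theorem and is correct. The structure --- show $v=\nabla_{z}g(\overline{z},\overline{w})\in\partial f(\overline{z})$ by the subgradient inequality, then bound the directional derivative from above by extracting near-maximizers $w_{k}$, applying the mean value theorem, and using compactness of $W$ together with uniqueness of the maximizer to force $w^{\ast}=\overline{w}$ --- is precisely the classical route. Your own caveat is the only point worth reiterating: to attain the maximum over $W$ and to pass to the limit $g(\overline{z}+t_{k}d,w_{k})\to g(\overline{z},w^{\ast})$ you need continuity of $g$ in $w$ (or joint continuity), which is not literally among the listed hypotheses but is part of the standard Danskin assumptions and is how the result is used later in the paper (where the relevant $g$ is smooth in both arguments). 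With that understood, the proof is complete.
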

\subsection{The Local Polytope Relaxation of the Labeling Problem}
\label{sec:LP-relaxation}

We sketch in this section the transition from the discrete energy minimization problem \eqref{eq:def-E} to the LP relaxation \eqref{eq:LP-relaxation} and thereby introduce additional notation needed in subsequent sections.

The first step concerns the definition of \textit{local model parameter vectors} and \textit{matrices}
\begin{equation}\label{eq:def-theta-i-theta-ij}
\theta_{i} := \big(\theta_{i}(\ell_{k})\big)_{k\in[n]} \in \mathbb{R}^{n}, \qquad
\theta_{ij} := \big(\theta_{ij}(\ell_{k},\ell_{r})\big)_{k,r \in [n]} \in \mathbb{R}^{n \times n},\qquad
\text{with}\quad \ell_k, \ell_r \in \mc X,
\end{equation}
which merely encode the values of the discrete objective function \eqref{eq:def-E}: $\theta_{i}(\ell_{k}) = E_{i}(\ell_{k})$, $\theta_{ij}(\ell_{k},\ell_{r}) = E_{ij}(\ell_{k},\ell_{r})$.
These local terms are commonly called \textit{unary} and \textit{pairwise terms} in the literature. Recall from the discussion of \eqref{eq:def-E} that the unary terms represent the data and the pairwise terms specify a regularizer. All these local terms are indexed by the vertices $i \in \mc{V}$ and edges $ij \in \mc{E}$ of the underlying graph $\mc{G}=(\mc{V},\mc{E})$ and assembled into the vectors
\begin{equation}\label{eq:def-theta-V-theta-E}
\theta := (\theta_{\mc{V}},\theta_{\mc{E}}),\qquad \text{where}\qquad
\theta_{\mc{V}} := (\theta_{i})_{i\in\mc V},\qquad \text{and}\qquad
\theta_{\mc{E}} := (\theta_{ij})_{ij \in \mc{E}},
\end{equation}
where we conveniently regard $\theta_{ij} \in \R^{n^{2}}$ either as local vector or as local matrix $\theta_{ij} \in \R^{n \times n}$, depending on the context. Next we define \textit{local indicator vectors}
\begin{equation}\label{eq:def-mu-i-mu-ij}
\mu_{i} := \big(\mu_{i}(\ell_{k})\big)_{k\in[n]} \in \{0,1\}^{n},\quad
\mu_{ij} := \big(\mu_{ij}(\ell_{k},\ell_{r})\big)_{k, r\in[n]} \in \{0,1\}^{n \times n},\quad
\text{with}\quad \ell_k, \ell_r \in \mc X,
\end{equation}
indexed in the same way as \eqref{eq:def-theta-i-theta-ij} and assembled into the vectors
\begin{equation}\label{eq:def-mu-V-mu-E}
\mu := (\mu_{\mc{V}},\mu_{\mc{E}}),\qquad \text{where}\qquad
\mu_{\mc{V}} := (\mu_{i})_{i \in \mc{V}},\qquad \text{and}\qquad
\mu_{\mc{E}} := (\mu_{ij})_{ij \in \mc{E}}.
\end{equation}
The combinatorial optimization problem \eqref{eq:def-E} now reads $\min_{\mu} \la \theta, \mu \ra$. The corresponding linear programming relaxation consists in replacing the discrete feasible set of \eqref{eq:def-mu-i-mu-ij} by the convex polyhedral sets
\begin{subequations}\label{eq:def-mu-Pi}
\begin{align}
\mu_{i} &\in \Delta_{n},\quad 
\mu_{ij} \in \Pi(\mu_{i},\mu_{j}),\qquad 
i \in \mc{V}, \; ij \in \mc{E}, 
\label{eq:def-mui-muj-convex} \\ \label{eq:def-Pi-mui-muj}
\Pi(\mu_{i},\mu_{j}) &= \big\{\mu_{ij} \in \R_{+}^{n \times n} \colon \mu_{ij} \eins = \mu_{i},\; \mu_{ij}^{\T} \eins = \mu_{j},\; \mu_{i}, \mu_{j} \in \Delta_{n}\big\}.
\end{align}
\end{subequations}
As a result, the linear programming relaxation \eqref{eq:LP-relaxation} of \eqref{eq:def-E} reads more explicitly
\begin{equation}\label{eq:LP-relaxation-b}
\min_{\mu \in \mc{L}_{\mc{G}}} \langle \theta, \mu \rangle 
= \min_{\mu \in \mc{L}_{\mc{G}}} \la \theta_{\mc{V}}, \mu_{\mc{V}} \ra + \la \theta_{\mc{E}}, \mu_{\mc{E}} \ra, 
\end{equation}
where the so-called \textit{local polytope} $\mc{L}_{\mc{G}}$ is the set of all vectors $\mu$ of the form \eqref{eq:def-mu-V-mu-E} with components ranging over the sets specified by  \eqref{eq:def-mu-Pi}. The adjective ``local'' refers to the local marginalization constraints \eqref{eq:def-Pi-mui-muj}.

\section{Image Labeling on the Assignment Manifold}
\label{sec:Labeling-Assignment-Manifold}

This section sets the stage for our approach to solving approximately the labeling problem \eqref{eq:def-E}. 
We first introduce in Section \ref{sec:Assignment-Manifold} in terms of the assignment manifold the setting for the smooth approach to image labeling \cite{Astroem2017}, to be sketched in Section \ref{sec:labeling-by-assignment}. Section \ref{sec:Geometric Integration} summarizes the general framework of \cite{SavHuen17} for numerically integrating Riemannian gradient flows of functionals defined on the assignment manifold.

\subsection{The Assignment Manifold} \label{sec:Assignment-Manifold}

The relative interior of the probability simplex $\mc S := \rint(\Delta_n)$, given by
$\mc S = \{ p\in \mb{R}^n_{++} \colon \la \eins, p\ra = 1\}$, is a $n-1$ dimensional smooth manifold with constant tangent space
\begin{equation}\label{eq:def-tangent-space-T}
  T_p\mc S = \{ v \in \mb{R}^n\colon \la \eins, v\ra = 0\} =: T \subset \mb{R}^n\;,\quad \text{for}\quad p\in \mc S.
\end{equation}
Due to $\la \eins, v\ra=0$ for all $v\in T$, we have the orthogonal decomposition $\mb{R}^n = T \oplus \mb R \eins$. The orthogonal projection onto $T$ is given by
\begin{equation}\label{eq:projection_matrix_Rn_to_T}
  P_T \colon \mb{R}^n \to T\;,\quad x\mapsto P_T(x) = x - \frac{1}{n} \la \eins, x \ra \eins = \left ( I - \frac{1}{n} \eins\eins^{\T} \right ) x,
\end{equation}
where $I$ denotes the $(n \times n)$ identity matrix.
The manifold $\mc S$ becomes a Riemannian manifold  by endowing it with the Fisher-Rao metric. At a point $p \in \mc S$, this metric is given by
\begin{equation}
  \la \cdot, \cdot \ra_p \colon T_p \mc S \times T_p \mc S \to \mb R\;,\quad
  (u, v) \mapsto \la u, v \ra_p = \Big\la \frac{u}{\sqrt{p}}, \frac{v}{\sqrt{p}} \Big\ra. 
\end{equation}
In this setting, there is an important map, called the \textit{lifting map} (cf.~\cite[Def.~4]{Astroem2017}), defined as
\begin{equation}\label{eq:def-tilde-L}
  \tilde{L} \colon \mb{R}^{n} \to \mc S,\qquad x \mapsto \tilde{L}_p(x) := \frac{p\cdot e^x}{\la p, e^x\ra}.
\end{equation}
By restricting $\tilde{L}$ onto the tangent space, we obtain a diffeomorphism
\begin{equation}\label{eq:def-L}
L := \tilde{L}|_T \colon T \to \mc{S},\qquad
\tilde{L} = L \circ P_T.
\end{equation}
This restricted lifting map $L$ is also a local first order approximation to the exponential map of the Riemannian manifold $\mc S$ 
(cf~\cite[Prop.~3]{Astroem2017}), with the inverse mapping given by
\begin{equation}\label{eq:def-L-inverse}
  L_p^{-1} \colon \mc S \to T\;, \quad q \mapsto L_p^{-1}(q) := P_T\Big(\log\frac{q}{p}\Big).
\end{equation}\hfill

The \textit{assignment manifold} is defined as the product manifold $\mc W := \prod_{i\in[m]} \mc S$ and can be identified
with the space $\mc W = \{ W \in \mb{R}^{m\times n}_{++} \colon W \eins = \eins \}$ of row-stochastic matrices with full support. With the Riemannian
product metric, $\mc W$ also becomes a Riemannian manifold with constant tangent space
\begin{equation}
  T_W \mc W = \prod_{i\in[m]} T = \{ V \in \mb{R}^{m\times n}\colon V\eins = 0\} =: T^m \qquad \text{at}\quad W \in \mc W.
\end{equation}
The Fisher-Rao product metric reads
\begin{equation}
  \la U, V \ra_W = \sum_{i\in[m]} \Big\la \frac{U_i}{\sqrt{W_i}}, \frac{V_i}{\sqrt{W_i}}\Big\ra\qquad \text{at}\quad W \in \mc W,\quad U, V \in T^m.
\end{equation}
The orthogonal decomposition of $T$ induces the orthogonal decomposition 
\begin{equation}
\mb{R}^{m\times n} = T^m \oplus \{ \lambda \eins_n^\top \in \mb{R}^{m\times n}\colon \lambda \in \mb{R}^m \}
\end{equation} 
together with the orthogonal projection
\begin{equation}\label{eq:projection_matrix_Rn_to_Tm}
  P_{T^m} \colon \mb{R}^{m\times n} \to T^m,\qquad X \mapsto P_{T^m}(X) =  X \left ( I - \frac{1}{n} \eins\eins^{\T} \right ).
\end{equation}
Thus, the projection of a matrix $X$ onto $T^m$ is just the 
projection \eqref{eq:projection_matrix_Rn_to_T} applied to every row of $X$. The lifting map, the restricted lifting map and its inverse are naturally extended to
\begin{equation}\label{eq:def-L-W}
  \tilde{L}_W \colon \mb{R}^{m\times n} \to \mc W,\qquad  L_W \colon T^m \to \mc W\qquad \text{and} \qquad L_W^{-1}\colon \mc W \to T^m
\end{equation}
for every $W \in \mc W$, by applying $\tilde{L}\colon \mb{R}^n \to \mc S$, $L\colon T \to \mc S$ and $L^{-1}\colon \mc S \to T$ from \eqref{eq:def-tilde-L}, \eqref{eq:def-L}, \eqref{eq:def-L-inverse} to every row,
\begin{equation}
  \big(\tilde{L}_W(X)\big)_i := \tilde{L}_{W_i}(X_i),\qquad  \big(L_W(V)\big)_i := L_{W_i}(V_i)\qquad 
  \text{and} \qquad \big(L_W^{-1}(Q)\big)_i := L_{W_i}^{-1}(Q_i),
\end{equation}
for $i\in [m]$, $X \in \mb{R}^{m\times n}$, $V\in T^m$ and $Q \in \mc W$.

\subsection{Image Labeling on $\mc W$}
\label{sec:labeling-by-assignment}
In \cite{Astroem2017} the following approach was proposed. Let $\mc G = (\mc V, \mc E)$ be a graph with vertex set $\mc V = [m]$.
Suppose a function is given on this graph with values in some feature space $\mc F$,
\begin{equation}
  f\colon \mc V = [m] \to \mc F,\qquad i\mapsto f_i.
\end{equation}
Furthermore, let the set $\mc{X}=\{\ell_{1},\dotsc,\ell_{n}\}$ from \eqref{eq:def-X-labels} denote a set of prototypes or labels (possibly $\mc X \subset \mc F$) 
and assume a distance function is specified,
\begin{equation}
  d\colon \mc F \times \mc X \to \mb R,
\end{equation}
measuring how well a feature is represented by a certain prototype. We are interested in the assignment of the prototypes to the data in terms
of an \textit{assignment matrix} $W \in \mc W \subset \mb{R}^{m \times n}$. The elements of $W$ can be interpreted as the \textit{posterior probability}
\begin{equation}
  W_{i,j} = \Pr(\ell_j| f_i), \quad i\in [m],\quad j\in[n],
\end{equation}
that $\ell_j$ generated the observation $f_i$. The assignment task of determining an optimal assignment $W^\ast$ can thus be interpreted as finding
an `explanation' of the data in terms of the prototypes $\mc X$.
\begin{remark}[$W$ vs.~$\mu$]\label{rem:W-vs-mu}
Each row vector $W_{i},\, i \in [m]$ plays the role of a corresponding vector $\mu_{i}$ of the basic LP relaxation as defined by \eqref{eq:def-mu-i-mu-ij}, with relaxed domain due to \eqref{eq:def-mu-Pi}. Unlike $\mu_{i}$, however, vectors $W_{i} \in \R_{++}^{n}$ always have full support and live on the manifold $\mc{S}$.
\end{remark}

The objective function for measuring the quality of an assignment involves three matrices defined next. First, all distance information between observed feature vectors and prototypes (labels) 
are gathered by the \textit{distance matrix}
\begin{equation}\label{eq:def-D-matrix}
  D \in \mb{R}^{m\times n},\qquad 
  D_{i,j} = d(f_i, \ell_j)
\end{equation}
and then lifted onto the assignment manifold at $W \in \mc W$. By using \eqref{eq:def-L-W} we obtain the \textit{likelihood matrix}
\begin{equation}\label{eq:-def-L-matrix}
  L = \tilde{L}_W\Big(-\frac{1}{\rho} D\Big) = L_W\Big(-\frac{1}{\rho}P_{T^m}(D)\Big), \quad \rho > 0
\end{equation}
where each row $i$ of $L$ is given by $L_i = \tilde{L}_{W_i}(-\frac{1}{\rho} D_i)$ and $P_{T^m}$ is given by \eqref{eq:projection_matrix_Rn_to_Tm}.
Finally, the \textit{similarity matrix} 
\begin{equation}
S = S(W) \in \mc W
\end{equation}
is defined as a local geometric average of assignment vectors at neighboring nodes, i.e. the $i$-th 
row $S_i$ is defined to be the Riemannian mean 
(cf.~\cite[Def.~2]{Astroem2017})
\begin{equation}\label{eq:def-S-original}
  S_i = \mean_{\mc S} \{ L_j \}_{j\in\ol{\mc N}(i)}
\end{equation}
of the lifted distances $L_{j}$ in the neighborhood $\ol{\mc N}(i) = \mc N(i) \cup \{i\}$.

\vspace{0.2cm}
The correlation between $W$ and the local averages defining $S(W)$, as measured by the basic matrix inner product, is used as the objective function 
\begin{equation}
  \sup_{W \in\mc W} J(W),\qquad J(W) := \la W, S(W) \ra
\end{equation}
to be maximized. 
The optimization strategy is to follow the Riemannian gradient ascent flow on $\mc W$ (see Section \ref{sec:Geometric Integration} for the formal definition of the Riemannian gradient)
\begin{equation}
  \dot{W}(t) = \nabla_{\mc W} J(W(t)),\qquad W(0) = \frac{1}{n} \eins_m \eins_n^\top =: C.
\end{equation}
The initialization $W_i(0) = \frac{1}{n}\eins_n^\top$ with the barycenter of $\mc S$ constitutes an \textit{uninformative} uniform assignment 
which is not biased towards any prototype.

To obtain an efficient numerical algorithm, the Riemannian mean is approximated using the geometric mean 
\begin{equation}\label{eq:def-SW-original}
  S_i(W) = \frac{\mean_{g}\{ L_j\}_{j \in \ol{\mc N}(i)}}{\big\la \eins, \mean_{g}\{ L_j\}_{j \in \ol{\mc N}(i)}\big\ra}\;,\quad
  \mean_{g}\{ L_j\}_{j \in \ol{\mc N}(i)} = \Big(\prod_{j \in \ol{\mc N}(i)} L_j\Big)^{\frac{1}{|\ol{\mc N}(i)|}}\;.
\end{equation}
Based on the simplifying, plausible assumption that the mean only changes slowly and by using the explicit Euler-method directly on $\mc W$ with a certain adaptive
step-size (cf.~\cite[Sect.~3.3]{Astroem2017}), the following multiplicative update scheme is obtained
\begin{equation}\label{eq:W-update-original}
  W_i^{(k+1)} = \frac{W_i^{(k)} \cdot S_i(W^{(k)})}{\la W_i^{(k)}, S_i(W^{(k)})\ra}\;, \quad W_i^{(0)} = \frac{1}{n}\eins_n^\top, \quad i\in [m].
\end{equation}

\subsection{Geometric Integration of Gradient Flows}
\label{sec:Geometric Integration}
In this section we collect the basic ingredients needed in the remainder of this paper, of a general framework due to \cite{SavHuen17} for integrating a Riemannian gradient flow of an arbitrary function $J \colon \mc W \to \mb R$ defined on the assignment manifold.  

We first recall the definition of the Riemannian gradient. Let $M$ be a Riemannian manifold with an inner product $g^M_x$ on each tangent space $T_xM$ varying smoothly with $x \in M$ and $f\colon M \to \mb{R}$ a smooth function. 
Using the identification $T_r\mb{R} = \mb R$ for $r\in \mb R$, the Riemannian gradient $\nabla_Mf(x) \in T_x M$ of $f$ at $x\in M$ can be defined 
as the unique element of $T_xM$ satisfying
\begin{equation}\label{eq:def-Rgrad}
  g^M_x(\nabla_M f(x), v) = Df(x)[v], \quad \forall v \in T_xM,
\end{equation}
where $Df(x) : T_xM \rightarrow T_{f(x)}\mathbb{R} = \mathbb{R}$ is the differential of $f$.

\vspace{0.2cm}
Suppose $J \colon \mc W \to \mb R$ is a general smooth objective function modeling an assignment problem and we are interested in minimizing $J$ by following
the Riemannian gradient descent flow 
\begin{equation}\label{eq:minimizing_R_gradient_descent_flow_W}
  \dot{W}(t) = -\nabla_{\mc W} J(W(t))\;, \quad W(0) = C \in \mc W\;,
\end{equation}
with the barycenter $C = \frac{1}{n} \eins_m \eins_n^\top$. Instead of directly minimizing $J$ on $\mc W$, the basic idea of \cite{SavHuen17} is to pull
the optimization problem back onto the tangent space $T^m = T_C \mathcal W$ by setting 
\begin{equation}
\ol J := J \circ L_C,
\end{equation}
using the diffeomorphism $L_C \colon T^{m} \to \mc{W}$ given by \eqref{eq:def-L-W}.
Furthermore, the pullback of the Fisher-Rao metric under $L_C$ is used to equip $T^m$ with a Riemannian metric and to turn $L_C$ into
an isometry. In this setting, the Riemannian gradient of $\ol J \colon T^m \to \mb R$ at $V \in T^m$ is given by \cite[Sec.~3]{SavHuen17}
\begin{equation}
  \nabla_{T^m} \ol J(V) = \nabla J\big(L_C(V)\big) \in T^m\;,
\end{equation}
where $\nabla J$ denotes the standard Euclidean gradient of $J \colon \mc W \to \mb R$.
Based on this construction, solving the gradient flow \eqref{eq:minimizing_R_gradient_descent_flow_W} is equivalent to 
\begin{equation}
W(t) = L_C(V(t)),
\end{equation}
where $V(t) \in T^m$ solves
\begin{equation}
  \dot{V}(t) = -\nabla_{T^m} \ol J(V(t)) = -\nabla J \big( W(t) \big)\;,\quad V(0) = 0\;.
\end{equation}
Choosing the explicit Euler method for solving this gradient flow problem on the vector space $T^m$, results in the numerical update scheme for every row $i \in [m]$
\begin{equation}
  V_i^{(k+1)} = V_i^{(k)} - h \nabla J \big( L_C(V_i^{(k)})\big),\quad V_i^{(0)} = 0,
\end{equation}
with step-size $h \in \mb R$. Lifting this update scheme to the assignment manifold $\mc W$ yields a multiplicative update rule 
\begin{equation}\label{eq:geometric_minimizing_multiplicative_update_on_W}
  W_i^{(k+1)} = \frac{W_i^{(k)}\cdot e^{- h \nabla J( W_i^{(k)})}}{\big\la W_i^{(k)}, e^{- h \nabla J( W_i^{(k)})}\big\ra}\;, \quad W_i^{(0)} = \frac{1}{n} \eins_n, \quad i \in [m].
\end{equation}

\section{Energy, Gradients and Wasserstein Messages}
\label{sec:Gradients}

In this section we study the smooth objective function \eqref{eq:def-J-smooth} \textit{restricted} to the assignment manifold, in order to prepare the application of the approach of Section \ref{sec:Labeling-Assignment-Manifold} to graphical models in Section \ref{sec:Graphical-Models}. 

After detailing the rationale behind \eqref{eq:def-J-smooth} in Section \ref{sec:smooth-LP-approximation}, we compute the Euclidean gradient of the objective function in Section \ref{sec:nabla-Jtau} on which the Riemannian gradient will be based. This gradient involves the gradients of local Wasserstein distances that are considered in Section \ref{sec:Objective-Gradient}. From the viewpoint of belief propagation, these gradients can be considered as `Wasserstein messages', as discussed in Section \ref{sec:Graphical-Models}. 

\subsection{Smooth Approximation of the LP Relaxation}\label{sec:smooth-LP-approximation}
The starting point \eqref{eq:def-D-matrix} for applying the labeling approach of Section \ref{sec:labeling-by-assignment} to a given problem is a definition of suitable distances. Regarding problem \eqref{eq:def-E} and the corresponding model parameter vector $\theta$ defined by \eqref{eq:def-theta-V-theta-E}, this is straightforward to do for the \textit{unary} terms $\theta_{i}$ that typically measure a  local distance to observed data. But this is less obvious for the \textit{pairwise} terms $\theta_{ij}$ that do not have a direct counterpart in the geometric labeling approach.

The following Lemma explains why the 
local Wasserstein distances 
\begin{equation}\label{def-WD}
d_{\theta_{ij}}(\mu_{i},\mu_{j}) := \min_{\mu_{ij} \in \Pi(\mu_{i},\mu_{j})} \la \theta_{ij},\mu_{ij} \ra\;,
\end{equation}
defined for every edge $ij\in\mc E$ with $\Pi(\mu_{i},\mu_{j})$ due to \eqref{eq:def-Pi-mui-muj}, are natural candidates for taking into account pairwise model parameters $\theta_{ij}$. 

\begin{lemma}\label{lem:LP-reformulation}
 The local polytope relaxation \eqref{eq:LP-relaxation-b} 
is equivalent to the problem 
 \begin{equation}\label{eq:LP-reformulation}
   \min_{\mu_{\mathcal{V}} \in \Delta_{n}^{m} } \Big(\sum_{i \in \mathcal{V}}\langle\theta_{i},\mu_{i}\rangle + \sum_{ij \in \mathcal{E}} d_{\theta_{ij}}(\mu_{i},\mu_{j})\Big)
\end{equation}
involving the local Wasserstein distances \eqref{def-WD}.
\end{lemma}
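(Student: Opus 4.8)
The plan is to show that the feasible set $\mc{L}_{\mc{G}}$ of \eqref{eq:LP-relaxation-b} decomposes edge-by-edge once the vertex marginals $\mu_{\mc{V}} = (\mu_i)_{i \in \mc{V}}$ are fixed, so that minimization over the pairwise blocks $\mu_{\mc{E}}$ can be carried out independently for each edge. Concretely, the objective $\la \theta, \mu \ra = \la \theta_{\mc{V}}, \mu_{\mc{V}} \ra + \sum_{ij \in \mc{E}} \la \theta_{ij}, \mu_{ij} \ra$ is separable in the variables $\mu_{ij}$, and the constraints coupling $\mu_{ij}$ to the rest are, by the definition of $\mc{L}_{\mc{G}}$ via \eqref{eq:def-mu-Pi}, exactly $\mu_{ij} \in \Pi(\mu_i, \mu_j)$ — that is, $\mu_{ij}$ only interacts with its own two endpoint marginals $\mu_i, \mu_j$ and with no other block. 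Hence
\begin{equation*}
  \min_{\mu \in \mc{L}_{\mc{G}}} \la \theta, \mu \ra
  = \min_{\mu_{\mc{V}} \in \Delta_n^m} \Big( \la \theta_{\mc{V}}, \mu_{\mc{V}} \ra + \sum_{ij \in \mc{E}} \min_{\mu_{ij} \in \Pi(\mu_i,\mu_j)} \la \theta_{ij}, \mu_{ij} \ra \Big),
\end{equation*}
and the inner minima are by definition \eqref{def-WD} the quantities $d_{\theta_{ij}}(\mu_i,\mu_j)$, which gives \eqref{eq:LP-reformulation}.

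The one genuine point to check carefully is that the interchange of the joint minimization with the two-level (outer over $\mu_{\mc{V}}$, inner over $\mu_{\mc{E}}$) minimization is legitimate — i.e., that for every admissible choice of $\mu_{\mc{V}} \in \Delta_n^m$ the inner feasible sets $\Pi(\mu_i,\mu_j)$ are all nonempty, so that no spurious $+\infty$ appears and the outer problem ranges over the same effective domain as the original one. This is where I expect the only real work: I would invoke that $\Pi(\mu_i,\mu_j) \neq \emptyset$ whenever $\mu_i, \mu_j \in \Delta_n$, exhibiting the product coupling $\mu_{ij} = \mu_i \mu_j^{\T}$ as an explicit feasible point (it is entrywise nonnegative, and $\mu_i \mu_j^{\T} \eins = \mu_i \la \eins, \mu_j \ra = \mu_i$, likewise $(\mu_i \mu_j^{\T})^{\T}\eins = \mu_j$). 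With nonemptiness in hand, and since each $\Pi(\mu_i,\mu_j)$ is compact (closed and bounded in $\R_+^{n\times n}$) and $\la \theta_{ij}, \cdot\ra$ is linear, the inner minima in \eqref{def-WD} are attained and finite, so $d_{\theta_{ij}}$ is a well-defined real-valued function on $\Delta_n \times \Delta_n$.

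The remaining steps are bookkeeping: given any $\mu \in \mc{L}_{\mc{G}}$, its vertex part $\mu_{\mc{V}}$ is feasible for the outer problem and $\la \theta, \mu \ra \ge \la \theta_{\mc{V}}, \mu_{\mc{V}} \ra + \sum_{ij} d_{\theta_{ij}}(\mu_i,\mu_j)$, giving ``$\ge$''; conversely, given $\mu_{\mc{V}} \in \Delta_n^m$, choosing for each edge a minimizer $\mu_{ij}^\ast \in \Pi(\mu_i,\mu_j)$ of \eqref{def-WD} assembles (via \eqref{eq:def-mu-V-mu-E}) a point $\mu^\ast = (\mu_{\mc{V}}, (\mu_{ij}^\ast)_{ij}) \in \mc{L}_{\mc{G}}$ with $\la \theta, \mu^\ast \ra = \la \theta_{\mc{V}}, \mu_{\mc{V}} \ra + \sum_{ij} d_{\theta_{ij}}(\mu_i,\mu_j)$, giving ``$\le$''. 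Taking the minimum over $\mu_{\mc{V}}$ on the right-hand side yields equality of the two optimal values, which is the assertion. No obstacle beyond the nonemptiness check is anticipated; the argument is essentially the standard ``partial minimization of a separable convex program'' lemma specialized to this marginal-polytope structure.
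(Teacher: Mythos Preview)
Your proposal is correct and follows essentially the same route as the paper: decompose $\la\theta,\mu\ra$ into vertex and edge parts, use separability of the local polytope constraints to push the minimization over $\mu_{\mc{E}}$ inside the edge sum, and identify each inner minimum as $d_{\theta_{ij}}(\mu_i,\mu_j)$. The paper's proof is simply the resulting chain of equalities without further comment; your additional care in verifying $\Pi(\mu_i,\mu_j)\neq\emptyset$ via the product coupling and attainment of the inner minima is a welcome rigorization of a step the paper leaves implicit.
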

\begin{proof}
The claim follows from reformulating the LP-relaxation based on the local polytope constraints \eqref{eq:def-mu-Pi} as follows.
    \begin{align*}
      \min_{\mu \in \mc{L}_{\mc{G}}} \langle \theta, \mu \rangle 
      &= \min_{\mu \in \mc{L}_{\mc{G}}}\la \theta_{\mc{V}}, \mu_{\mc{V}} \ra + \la \theta_{\mc{E}}, \mu_{\mc{E}} \ra \\
      &= \min_{\mu_{\mathcal{V}} \in \Delta_{n}^{m} }\Big(\langle\theta_{\mathcal{V}},\mu_{\mathcal{V}}\rangle + \min_{\mu_{\mathcal{E}}}\sum_{ij\in\mathcal{E}} \big (\langle\theta_{ij},\mu_{ij}\rangle + \delta_{\Pi(\mu_{i},\mu_{j})}(\mu_{ij}) \big ) \Big )\\
      &= \min_{\mu_{\mathcal{V}} \in \Delta_{n}^{m} } 
      \Big(
      \sum_{i \in \mathcal{V}} \langle\theta_{i},\mu_{i}\rangle
      + \sum_{ij \in \mathcal{E}} \min_{\mu_{ij} \in \Pi(\mu_{i},\mu_{j})} \langle\theta_{ij},\mu_{ij}\rangle \Big)
      \\
      &= \min_{\mu_{\mathcal{V}} \in \Delta_{n}^{m} } \Big(\sum_{i \in \mathcal{V}}\langle\theta_{i},\mu_{i}\rangle + \sum_{ij \in \mathcal{E}} d_{\theta_{ij}}(\mu_{i},\mu_{j})\Big).
    \end{align*}
\end{proof}
In order to conform to our smooth geometric setting, we regularize the convex but non-smooth (piecewise-linear (cf.~\cite[Def.~2.47]{Rockafellar2009})) local Wasserstein distances \eqref{def-WD} with a general convex 
\textit{smoothing function} $F_{\tau}$,
\begin{equation}\label{eq:def-WD-smoothed}
d_{\theta_{ij},\tau}(\mu_{i},\mu_{j})
= \min_{\mu_{ij} \in \Pi(\mu_{i},\mu_{j})} \big\{
\la\theta_{ij},\mu_{ij} \ra + F_{\tau}(\mu_{ij})\big\},\quad
ij \in \mc{E},\quad F_{\tau} \in \mc{F}_{0},\quad \tau>0,
\end{equation}
with smoothing parameter $\tau$. 
\begin{remark}[role of the smoothing]\label{rem:role-of-smoothing}
The influence of the smoothing parameter $\tau$ will be examined in detail in the remainder of this paper. We wish to point out from the beginning, however, that the ability of our smooth geometric approach to compute \textit{integral} labeling assignments does \textit{not} necessarily imply values of $\tau \approx 0$ close to zero, because the rounding mechanism to integral assignments is a \textit{different one}, as will be shown in Section \ref{sec:Graphical-Models}. As a consequence, larger feasible values of $\tau$ weaken the nonlinear relation \eqref{eq:def-WD-smoothed} and considerably speed up the convergence of numerical algorithm for iterative label assignment.
\end{remark}
\begin{remark}[local polytope constraints]\label{rem:local-polytope-constraints}
Using the regularized local Wasserstein distances \eqref{eq:def-WD-smoothed} implies by their definition that the local marginalization constraints \eqref{eq:def-mu-Pi} are \textit{always} satisfied. This is in sharp contrast to alternative labeling schemes, like loopy belief propagation, were these constraints are gradually enforced during the iteration and are guaranteed to hold only \textit{after} convergence of the entire iteration process. 

This elucidates two key properties that distinguish the manifold setting of our labeling approach from established work: 
\begin{enumerate}[(i)]
\item inherent smoothness and 
\item anytime validity of the local polytope constraints.
\end{enumerate}
\end{remark}
Based on Lemma \ref{lem:LP-reformulation} and the regularized local Wasserstein distances \eqref{eq:def-WD-smoothed}, we study in this paper the objective function \eqref{eq:def-J-smooth},
which is a \textit{smooth} approximation of the local polytope relaxation \eqref{eq:LP-relaxation-b} of the original labeling problem \eqref{eq:def-E}, with the local polytope constraints \eqref{eq:def-mu-Pi} \textit{built in}.

\vspace{0.2cm}
In order to get an intuition about suitable smoothing functions $F_\tau$, we inspect the smoothed local Wasserstein distance \eqref{eq:def-WD-smoothed} in more detail.
To this end, it will be convenient to simplify temporarily our notation in the remainder of this section by dropping indices as follows.
\begin{subequations}\label{eq:simplified_notation}
\begin{align}
\text{\textbf{Notation } for any edge $ij$}:\quad
M &= \mu_{ij} \in \R^{n \times n}, &
\Theta &= \theta_{ij} \in \R^{n \times n}, \\
\mu &= \bpm \mu_{1} \\ \mu_{2} \epm 
= \bpm M \eins_{n} \\ M^{\T} \eins_{n} \epm, &
\nu &= \bpm \nu_{1} \\ \nu_{2} \epm,
\end{align}
\end{subequations}
with the marginal vector $\mu$ playing the role of $\bsm \mu_{i} \\ \mu_{j} \esm$ in \eqref{eq:def-mu-Pi}. The local (non-smooth) Wasserstein distance \eqref{def-WD} then reads, for any edge $ij \in \mc{E}$,
\begin{equation}\label{eq:simplyNotationLocalWasserstein}
d_{\Theta}(\mu_{1},\mu_{2})
= \min_{M \in \Pi(\mu_{1},\mu_{2})} \la\Theta, M \ra \;.
\end{equation}
Using the linear map $\mathcal{A}$ defined by \eqref{eq:def-A}, we rewrite expression
 \eqref{eq:simplyNotationLocalWasserstein} as 
\begin{equation}\label{eq:W-distance}
d_{\Theta}(\mu_{1},\mu_{2}) =\min_{M \in \R^{n \times n}}\, 
\la\Theta,M\ra\quad\text{s.t.}\quad \mc{A} M = \bpm \mu_{1} \\ \mu_{2} \epm,\quad M \geq 0\;.
\end{equation}
The corresponding dual LP of \eqref{eq:W-distance} is given by
\begin{equation}\label{eq:dual-W-distance}
\max_{\nu \in \R^{2n}} \la\mu,\nu\ra \quad\text{s.t.}\quad
\mc{A}^{\T} \nu \leq \Theta\;.
\end{equation}
The \textit{smoothed} local Wasserstein distance \eqref{eq:def-WD-smoothed} is given by
\begin{equation}\label{eq:W-distance-smoothed}
\begin{aligned}
d_{\Theta, \tau}(\mu_{1},\mu_{2}) 
&:=\min_{M \in \R^{n \times n}}\, \la\Theta,M\ra + F_{\tau}(M)
\quad\text{s.t.}\quad \mc{A} M = \bpm \mu_{1} \\ \mu_{2} \epm,\quad M \geq 0,\\
&= \min_{M \in \R^{n \times n}}\, \la\Theta,M\ra + F_{\tau}(M)+ \delta_{\R_{+}^{n \times n}}(M) + \delta_{\{0\}}\big(\mc{A} M - \bsm \mu_{1} \\ \mu_{2} \esm\big),
\end{aligned}
\end{equation}
for $F_{\tau} \in \mc{F}_{0}$ and $\tau>0$, and the dual problem to \eqref{eq:W-distance-smoothed} reads 
\begin{equation}\label{eq:dual-smooth-local-W-distance}
\max_{\nu \in \R^{2n}} \la \mu,\nu \ra - G_{\tau}^{\ast}\big(\mc{A}^{\T} \nu - \Theta\big),
\end{equation}
with the conjugate function $G_{\tau}^{\ast}$ of 
\begin{equation}
G_{\tau}(M) = F_{\tau}(M) + \delta_{\R_{+}^{n \times n}}(M).
\end{equation}

Suitable candidates of functions $G_{\tau}$ for smoothing $d_{\Theta}$ suggest themselves by comparing the 
dual LP \eqref{eq:dual-W-distance} with the dual problem \eqref{eq:dual-smooth-local-W-distance} of the smoothed
 LP. Rewriting the constraints of \eqref{eq:dual-W-distance} in the form
\begin{equation}\label{eq:dual-constraints}
\delta_{\R_{-}^{n \times n}}(\mc{A}^{\T} \nu - \Theta)
\end{equation}
and comparing with \eqref{eq:dual-smooth-local-W-distance} shows that $G_{\tau}^{\ast}$ should be a smooth 
approximation of the indicator function $\delta_{\R_{-}^{n \times n}}$. We get back to this point in Section \ref{sec:Wasserstein-Numerics}.

\subsection{Energy Gradient $\nabla E_\tau$}\label{sec:nabla-Jtau}
The pairwise model parameters $\theta_{\mc{E}}$ may not be symmetric, $\theta_{ij} \neq \theta_{ij}^{\T},\; ij \in \mc{E}$, in general, which implies that the smoothed local Wasserstein distances are not symmetric either: \\ $d_{\theta_{ij}, \tau}(W_i, W_j) \neq d_{\theta_{ij}, \tau}(W_j, W_i)$. In order to compute the Euclidean gradient $\nabla E_{\tau}$ of the objective function \eqref{eq:def-J-smooth}, we therefore introduce an \textit{arbitrary fixed orientation} $(i,j)$ (ordered pair) of all edges $ij \in \mc{E}$, which means $ij \in \mc{E} \;\implies\; ji \not\in \mc{E}$.
As a consequence, \eqref{eq:def-J-smooth} reads
\begin{equation}\label{eq:J-smooth_rewritten}
  E_\tau(W) = \sum_{i\in V} \Big( \la \theta_i, W_i\ra + \sum_{j \colon (i, j) \in \mc E} d_{\theta_{ij}, \tau}(W_i, W_j)\Big)\;.
\end{equation}
The following proposition specifies the gradient $\nabla E_{\tau}$ in terms of an expression that involves local gradients of the smoothed Wasserstein distances $d_{\theta_{ij},\tau}$. These latter gradients are studied in Section \ref{sec:Objective-Gradient} (Theorem \ref{theo:gradientWasserstein}).
\begin{proposition}[objective function gradient]\label{prop:euclidean_gradient_smooth_energy_general}
  Suppose the edges $\mc E$ have an arbitrary fixed orientation. Then the Euclidean gradient of the objective function $E_\tau \colon \mathcal{W} \to \mb{R}$ due to \eqref{eq:def-J-smooth}, at $W \in \mc W$, is the matrix $\nabla E_\tau(W) \in T^m$ whose $i$-th row is given by 
\begin{equation}\label{eq:euclidean_gradient_smooth_energy_general}
    \nabla_i E_\tau(W) = P_{T}(\theta_i) + \sum_{j \colon (i,j) \in \mc E} \nabla_1 d_{\theta_{ij}, \tau}(W_i, W_j) + \sum_{j \colon (j,i) \in \mc E} \nabla_2 d_{\theta_{ji}, \tau}(W_j, W_i)\;,
  \end{equation}
where $\nabla_1 d_{\theta_{ij}, \tau}(W_i, W_j) \in T$ and $\nabla_2 d_{\theta_{ji}, \tau}(W_j, W_i) \in T$ are the Euclidean gradients of 
\begin{equation}
d_{\theta_{ij}, \tau}(\cdot, W_j)\colon \mc S 
\to \mb \R,  
\qquad
d_{\theta_{ij}, \tau}(W_j, \cdot)\colon \mc S 
\to \mb \R.
\end{equation}
\end{proposition}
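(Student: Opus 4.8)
The plan is to differentiate $E_\tau$ at $W\in\mc W$ term by term, according to the representation \eqref{eq:J-smooth_rewritten}, and to read off the gradient as the matrix in $T^m$ that represents the resulting linear form. Since $E_\tau$ is defined on $\mc W$, which is relatively open in an affine subspace with constant tangent space $T^m$, its Euclidean gradient $\nabla E_\tau(W)$ is by definition the unique matrix in $T^m$ satisfying $\la\nabla E_\tau(W),V\ra=DE_\tau(W)[V]$ for every $V=(V_1,\dots,V_m)\in T^m$; equivalently, writing out the block inner product of $T^m=\prod_{i\in\mc V}T$, the rows $\nabla_iE_\tau(W)\in T$ are characterized by $\sum_{i\in\mc V}\la\nabla_iE_\tau(W),V_i\ra=DE_\tau(W)[V]$. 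So it suffices to compute $DE_\tau(W)[V]$ and to group the coefficients of the $V_i$.

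First I would use linearity of the differential to split $DE_\tau(W)[V]$ over vertices and oriented edges. The unary contribution from vertex $i$ is $D(\la\theta_i,\cdot\ra)(W_i)[V_i]=\la\theta_i,V_i\ra$; since $V_i\in T$ and $P_T$ is the orthogonal projection onto $T$, this equals $\la P_T(\theta_i),V_i\ra$, which accounts for the first summand of \eqref{eq:euclidean_gradient_smooth_energy_general}. Next, for each oriented edge $(i,j)\in\mc E$ I would invoke the joint differentiability of $(\mu_1,\mu_2)\mapsto d_{\theta_{ij},\tau}(\mu_1,\mu_2)$ on $\mc S\times\mc S$ — precisely what Danskin's Theorem \ref{thm:Danskin}, applied to the dual problem \eqref{eq:dual-smooth-local-W-distance} with the smoothing $F_\tau$ chosen so that the dual maximizer is unique, provides, the explicit formulas being the content of Theorem \ref{theo:gradientWasserstein}. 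The chain rule then yields
\begin{equation*}
D\big(d_{\theta_{ij},\tau}\big)(W_i,W_j)[V_i,V_j]=\la\nabla_1 d_{\theta_{ij},\tau}(W_i,W_j),V_i\ra+\la\nabla_2 d_{\theta_{ij},\tau}(W_i,W_j),V_j\ra,
\end{equation*}
where $\nabla_1 d_{\theta_{ij},\tau}(W_i,W_j)\in T$ is the gradient of $d_{\theta_{ij},\tau}(\cdot,W_j)$ at $W_i$ and $\nabla_2 d_{\theta_{ij},\tau}(W_i,W_j)\in T$ is the gradient of $d_{\theta_{ij},\tau}(W_i,\cdot)$ at $W_j$.

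It then remains to reindex the double sum $\sum_{i\in\mc V}\sum_{j\colon(i,j)\in\mc E}$ over edges. Because the fixed orientation satisfies $ij\in\mc E\Rightarrow ji\notin\mc E$, each undirected edge appears exactly once as an ordered pair $(i,j)$, and its differential contributes a first-slot gradient $\nabla_1 d_{\theta_{ij},\tau}(W_i,W_j)$ to the coefficient of $V_i$ and a second-slot gradient $\nabla_2 d_{\theta_{ij},\tau}(W_i,W_j)$ to the coefficient of $V_j$, and to no other vertex. Collecting, for fixed $i$, the unary term with all edge terms in which $i$ is the tail and all edge terms in which $i$ is the head reproduces exactly \eqref{eq:euclidean_gradient_smooth_energy_general}. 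Finally I would check that $\nabla_iE_\tau(W)\in T$: each summand lies in $T$ — $P_T(\theta_i)$ by construction, the Wasserstein gradients by Theorem \ref{theo:gradientWasserstein} — and $T$ is a linear subspace, so the sum lies in $T$ and hence $\nabla E_\tau(W)\in T^m$; since the identity $DE_\tau(W)[V]=\la\nabla E_\tau(W),V\ra$ has thereby been verified for all $V\in T^m$, this matrix is indeed the Euclidean gradient.

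The only non-routine ingredient is the joint differentiability of the smoothed local Wasserstein distance on $\mc S\times\mc S$ together with the concrete identification of $\nabla_1$ and $\nabla_2$; this is deliberately isolated and treated in Section \ref{sec:Objective-Gradient}. Granting it, the proof of the proposition is just linearity, the chain rule, and the edge-to-vertex bookkeeping above, so the main obstacle lies entirely in the deferred step rather than here.
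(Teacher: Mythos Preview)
Your proposal is correct and follows essentially the same approach as the paper's proof: differentiate $E_\tau$ along a tangent direction $V\in T^m$, handle the unary terms via $\la\theta_i,V_i\ra=\la P_T(\theta_i),V_i\ra$, apply the chain rule to each oriented edge term, and then reindex the $\nabla_2$-contributions so that all terms are grouped by $V_i$. The paper's proof uses a curve $\gamma(t)$ with $\dot\gamma(0)=V$ rather than the differential $DE_\tau(W)[V]$ directly, but this is purely notational.
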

\begin{proof}
Appendix \ref{sec:proof-euclidean_gradient_smooth_energy_general}.
\end{proof}

We now consider after a preparatory Lemma the specific case that all pairwise model parameters $\theta_{ij}=\theta_{ij}^{\T}$ are symmetric (Corollary \ref{cor:gradient-symmetric-case}). Recall definition \eqref{eq:def-Pi-mui-muj} of the set $\Pi(\cdot,\cdot)$ of coupling measures having its arguments as marginals and Remark \ref{rem:W-vs-mu} regarding notation.
\begin{lemma}\label{lem:symmetric_pairwise_terms}
  Suppose the convex smoothing function $F_{\tau}$ defining the regularized local Wasserstein distances \eqref{eq:def-WD-smoothed} satisfies $F_\tau(M) = F_\tau(M^\top)$ for all $M \in \Pi(W_i, W_j)$. Then 
\begin{equation}\label{eq:dtheta=dtheta-T}
d_{\theta_{ij}, \tau}(W_i, W_j) = d_{\theta_{ij}^\top, \tau}(W_j, W_i).
\end{equation}
\end{lemma}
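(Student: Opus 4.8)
The plan is to unravel the definitions on both sides of \eqref{eq:dtheta=dtheta-T} and exhibit a bijection between the feasible sets of the two inner minimization problems under which the objective values coincide. By \eqref{eq:def-WD-smoothed} together with the simplified notation, the left-hand side is
\[
d_{\theta_{ij},\tau}(W_i,W_j) = \min_{M \in \Pi(W_i,W_j)} \big\{ \la \theta_{ij}, M \ra + F_\tau(M) \big\},
\]
while the right-hand side is
\[
d_{\theta_{ij}^\top,\tau}(W_j,W_i) = \min_{N \in \Pi(W_j,W_i)} \big\{ \la \theta_{ij}^\top, N \ra + F_\tau(N) \big\}.
\]
First I would observe that the map $M \mapsto M^\top$ is a linear bijection from $\R^{n\times n}$ to itself, and that it restricts to a bijection between $\Pi(W_i,W_j)$ and $\Pi(W_j,W_i)$: indeed, by \eqref{eq:def-Pi-mui-muj}, $M \in \Pi(W_i,W_j)$ means $M \ge 0$, $M\eins = W_i$, $M^\top\eins = W_j$, which is exactly the statement that $M^\top \ge 0$, $(M^\top)^\top \eins = W_i$, $M^\top \eins = W_j$, i.e. $M^\top \in \Pi(W_j,W_i)$.

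Next I would check that the objective value is preserved under this substitution $N = M^\top$. For the linear term, $\la \theta_{ij}^\top, M^\top \ra = \tr\big((\theta_{ij}^\top)^\top M^\top\big) = \tr(\theta_{ij} M^\top) = \tr(M^\top \theta_{ij}) = \tr(\theta_{ij}^\top M)^{\!\top}$; more directly, $\la A^\top, B^\top\ra = \la A, B\ra$ for all matrices, so $\la \theta_{ij}^\top, M^\top \ra = \la \theta_{ij}, M\ra$. For the smoothing term, the hypothesis $F_\tau(M) = F_\tau(M^\top)$ for all $M \in \Pi(W_i,W_j)$ gives $F_\tau(M^\top) = F_\tau(M)$. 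Hence for every $M \in \Pi(W_i,W_j)$,
\[
\la \theta_{ij}^\top, M^\top \ra + F_\tau(M^\top) = \la \theta_{ij}, M \ra + F_\tau(M).
\]
Since $M \mapsto M^\top$ is a bijection $\Pi(W_i,W_j) \to \Pi(W_j,W_i)$, taking the infimum over $M \in \Pi(W_i,W_j)$ on the left equals taking the infimum over $N \in \Pi(W_j,W_i)$ of $\la \theta_{ij}^\top, N \ra + F_\tau(N)$, which is precisely $d_{\theta_{ij}^\top,\tau}(W_j,W_i)$. This establishes \eqref{eq:dtheta=dtheta-T}.

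There is essentially no hard part here; the statement is a bookkeeping exercise. The only point requiring a modicum of care is the bookkeeping around the symmetry hypothesis: the assumption $F_\tau(M) = F_\tau(M^\top)$ is stated only for $M \in \Pi(W_i,W_j)$, so one should make sure the argument only ever invokes it for such $M$ — which it does, since the transpose bijection carries $\Pi(W_i,W_j)$ onto $\Pi(W_j,W_i)$ and back. One should also double-check that the two marginal-extraction maps are consistent with the index conventions of \eqref{eq:simplified_notation}, namely that $\mc{A}M = \big(\begin{smallmatrix} M\eins \\ M^\top\eins\end{smallmatrix}\big)$ swaps its two blocks under $M \mapsto M^\top$, so that the first marginal of $M$ becomes the second marginal of $M^\top$ and vice versa; this is exactly what makes the feasible-set bijection respect the prescribed marginals $(W_i,W_j) \leftrightarrow (W_j,W_i)$.
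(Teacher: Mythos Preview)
Your proof is correct and follows essentially the same idea as the paper's: both arguments exploit the bijection $M \mapsto M^{\top}$ between $\Pi(W_i,W_j)$ and $\Pi(W_j,W_i)$ together with $\la\theta_{ij}^{\top},M^{\top}\ra=\la\theta_{ij},M\ra$ and the symmetry hypothesis on $F_\tau$. The only cosmetic difference is that the paper phrases the argument by fixing a minimizer $M_\ast$ and verifying that $M_\ast^{\top}$ minimizes the transposed problem, whereas you compare the two infima directly via the bijection.
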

\begin{proof}
Let $M_{\ast} \in \Pi(W_i, W_j)$ be a minimizer of \eqref{eq:W-distance-smoothed}. Then due to the assumption on $F_\tau$, we have 
  \begin{equation}
    d_{\theta_{ij}, \tau}(W_i, W_j) = \la \theta_{ij}, M_{\ast}\ra + F_\tau(M_{\ast}) = \la \theta_{ij}^\top, M_{\ast}^{\top} \ra + F_\tau(M_{\ast}^{\top})\;.
  \end{equation}
  Let $\tilde M \in \Pi(W_j, W_i)$ be arbitrary. Then ${\tilde M}^\top \in \Pi(W_i, W_j)$ and we have
  \begin{equation}
    \la \theta_{ij}^\top, \tilde M \ra + F_\tau(\tilde M) = \la \theta_{ij}, {\tilde M}^\top\ra + F_\tau({\tilde M}^\top) 
    \geq \la \theta_{ij}, M_{\ast}\ra + F_\tau(M_{\ast}) = \la \theta_{ij}^\top, M_{\ast}^\top \ra + F_\tau(M_{\ast}^\top)\;.
  \end{equation}
  This shows that $M_{\ast}^\top \in \Pi(W_j, W_i)$ is a minimizer of $d_{\theta_{ij}^\top, \tau}(W_j, W_i)$ and establishes equation \eqref{eq:dtheta=dtheta-T}.
\end{proof}
As a consequence of Lemma \ref{lem:symmetric_pairwise_terms}, if all pairwise model parameters $\theta_{ij}$ are symmetric, in addition to $F_\tau(M) = F_\tau(M^\top)$ for all $M \in [0, 1]^{n\times n}$, 
then there is no need to choose an edge orientation as was done in connection with \eqref{eq:J-smooth_rewritten}. Rather, using \eqref{eq:def-Ni}, we may rewrite \eqref{eq:J-smooth_rewritten} as
\begin{equation}\label{eq:J-smooth_rewritten_symmetric}
  E_\tau(W) = \sum_{i\in V} \Big( \la \theta_i, W_i\ra + \frac{1}{2}\sum_{j \in \mc{N}(i)} d_{\theta_{ij}, \tau}(W_i, W_j)\Big)
\end{equation}
and reformulate Proposition \ref{prop:euclidean_gradient_smooth_energy_general} accordingly.
\begin{corollary}[objective function gradient: symmetric case]\label{cor:gradient-symmetric-case}
  Suppose $F_\tau(T) = F_\tau(T^\top)$ for all $T \in [0, 1]^{n\times n}$ and $\theta_{ij}$ is symmetric for all $ij \in \mc E$. 
  Then the $i$-th row of the Euclidean gradient $\nabla E_\tau$ is given by
  \begin{equation}\label{eq:nabla-Ji-symmetric}
    \nabla_i E_\tau(W) = P_{T}(\theta_i) + \sum_{j \in \mc{N}(i)} \nabla_1 d_{\theta_{ij}, \tau}(W_i, W_j).
  \end{equation}
\end{corollary}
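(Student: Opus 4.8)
The plan is to derive Corollary~\ref{cor:gradient-symmetric-case} as a straightforward specialization of Proposition~\ref{prop:euclidean_gradient_smooth_energy_general}, using Lemma~\ref{lem:symmetric_pairwise_terms} to eliminate the dependence on the chosen edge orientation. First I would fix an arbitrary orientation of the edges $\mc E$, so that Proposition~\ref{prop:euclidean_gradient_smooth_energy_general} applies and gives
\begin{equation*}
\nabla_i E_\tau(W) = P_{T}(\theta_i) + \sum_{j \colon (i,j) \in \mc E} \nabla_1 d_{\theta_{ij}, \tau}(W_i, W_j) + \sum_{j \colon (j,i) \in \mc E} \nabla_2 d_{\theta_{ji}, \tau}(W_j, W_i).
\end{equation*}
The goal is to show the last two sums combine into the single symmetric sum $\sum_{j \in \mc N(i)} \nabla_1 d_{\theta_{ij}, \tau}(W_i, W_j)$.

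The key step is to rewrite each term of the second sum as a $\nabla_1$-term. Fix an edge with orientation $(j,i)\in\mc E$, so $j \in \mc N(i)$. Since $\theta_{ji}$ is symmetric, $\theta_{ji} = \theta_{ji}^\top = \theta_{ij}$ (where here $\theta_{ij}$ simply denotes the transpose, consistent with the unordered-pair convention). By Lemma~\ref{lem:symmetric_pairwise_terms} applied with the roles of $i$ and $j$ interchanged, together with $F_\tau(M)=F_\tau(M^\top)$, we have the identity of functions $d_{\theta_{ji},\tau}(W_j,\,\cdot\,) = d_{\theta_{ji}^\top,\tau}(\,\cdot\,,W_j) = d_{\theta_{ij},\tau}(\,\cdot\,,W_j)$ on $\mc S$; differentiating in the free slot gives $\nabla_2 d_{\theta_{ji},\tau}(W_j,W_i) = \nabla_1 d_{\theta_{ij},\tau}(W_i,W_j)$. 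Thus every term indexed by $j$ with $(j,i)\in\mc E$ contributes exactly $\nabla_1 d_{\theta_{ij},\tau}(W_i,W_j)$, matching the form of the terms in the first sum. Since, for each neighbor $j\in\mc N(i)$, exactly one of $(i,j)$ or $(j,i)$ lies in $\mc E$, the two sums together range precisely once over $j\in\mc N(i)$, yielding \eqref{eq:nabla-Ji-symmetric}.

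Alternatively, and perhaps more cleanly, one can bypass the orientation entirely by starting from the symmetric form \eqref{eq:J-smooth_rewritten_symmetric}, whose validity under the stated hypotheses is exactly the content of Lemma~\ref{lem:symmetric_pairwise_terms}: for a term with $(j,i)$ oriented, $d_{\theta_{ji},\tau}(W_j,W_i) = d_{\theta_{ij},\tau}(W_i,W_j)$, so each undirected edge contributes the same value regardless of orientation, and $\sum_{j\colon(i,j)\in\mc E \text{ or }(j,i)\in\mc E} d_{\theta_{ij},\tau}(W_i,W_j) = \tfrac12\sum_{i}\sum_{j\in\mc N(i)} d_{\theta_{ij},\tau}(W_i,W_j)$ when summed over all $i$. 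Differentiating \eqref{eq:J-smooth_rewritten_symmetric} directly, the $\tfrac12$ factor cancels against the fact that each edge $ij$ appears twice (once in the sum over neighbors of $i$ and once over neighbors of $j$), and at node $i$ only the first-argument derivative survives from the $j\in\mc N(i)$ term. I expect no real obstacle here; the only point requiring a little care is the bookkeeping that converts a $\nabla_2$ derivative into a $\nabla_1$ derivative via symmetry of $\theta_{ij}$ and of $F_\tau$, i.e.\ making precise that differentiating the identity $d_{\theta_{ji},\tau}(W_j,\cdot) = d_{\theta_{ij},\tau}(\cdot,W_j)$ in its free argument is legitimate — which follows since both sides agree as functions on $\mc S$ and Proposition~\ref{prop:euclidean_gradient_smooth_energy_general} already presupposes their differentiability.
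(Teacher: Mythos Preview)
Your proposal is correct and follows essentially the same approach as the paper: start from the general gradient formula of Proposition~\ref{prop:euclidean_gradient_smooth_energy_general}, use Lemma~\ref{lem:symmetric_pairwise_terms} (together with $\theta_{ji}=\theta_{ji}^\top=\theta_{ij}$) to obtain $\nabla_2 d_{\theta_{ji},\tau}(W_j,W_i)=\nabla_1 d_{\theta_{ij},\tau}(W_i,W_j)$, and then merge the two oriented sums into the single sum over $j\in\mc N(i)$. Your alternative route via \eqref{eq:J-smooth_rewritten_symmetric} is also fine but not the path the paper takes.
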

\begin{proof}
Applying the equation $\nabla_2 d_{\theta_{ji}, \tau}(W_j, W_i) = \nabla_1 d_{\theta_{ij}, \tau}(W_i, W_j)$ due to Lemma~\ref{lem:symmetric_pairwise_terms} to Eqn.~\eqref{eq:euclidean_gradient_smooth_energy_general}, we obtain
  \begin{subequations}
    \begin{align}
      \nabla_i E_\tau(W) &= P_{T}(\theta_i) + \sum_{j \colon (i,j) \in \mc E} \nabla_1 d_{\theta_{ij}, \tau}(W_i, W_j) + \sum_{j \colon (j,i) \in \mc E} \nabla_1 d_{\theta_{ij}, \tau}(W_i, W_j)\\
	&= P_{T}(\theta_i) + \sum_{j \in \mc{N}(i)} \nabla_1 d_{\theta_{ij}, \tau}(W_i, W_j),
    \end{align}
  \end{subequations}
which is \eqref{eq:nabla-Ji-symmetric}.
\end{proof}

\subsection{Local Wasserstein Distance Gradient}
\label{sec:Objective-Gradient}

In this section, we check differentiability of the distance functions $d_{\theta_{ij},\tau}(\mu_{i},\mu_{j}),\; ij \in \mc{E}$, given by \eqref{eq:def-WD-smoothed}, and 
specify an expression for the corresponding gradient.
To formulate the main result of this section, we again use the simplified notation \eqref{eq:simplified_notation}.
\begin{theorem}[{Wasserstein distance gradient}]\label{theo:gradientWasserstein}
  Consider $\mc{S} \subset \mathbb{R}^{n}$ as an Euclidean submanifold with tangent space $T$ defined by \eqref{eq:def-tangent-space-T}, and let
\begin{equation}\label{eq:def-g-dual}
  g(\mu,\nu) = \la\mu,\nu\ra - G_{\tau}^{\ast}(\mc{A}^{\T}\nu-\Theta)
\end{equation}
denote the dual objective function \eqref{eq:dual-dtau}. 
  Then the smoothed Wasserstein distance $d_{\Theta, \tau}\colon \mathcal S \times \mathcal S \to \mathbb R$ is differentiable, and the Euclidean gradient of $d_{\Theta, \tau}$ at $p = (p_{1}, p_{2}) \in \mathcal S \times \mathcal S$ is given by
  \begin{equation}\label{eq:nabla-dtau}
    \nabla d_{\Theta, \tau}(p) 
    = \nabla d_{\Theta, \tau}(p_1,p_2)
    = \ol{\nu}_{T} 
    := P_{T \times T}(\ol{\nu}) 
    = \bpm P_{T}(\ol{\nu}_{1}) \\ 
           P_{T}(\ol{\nu}_{2}) \epm,
\end{equation}
where 
\begin{equation}
\ol{\nu} = \bpm \ol{\nu}_{1} \\ \ol{\nu}_{2} \epm \in \underset{\nu\in\R^{2 n}}{\argmax} \; g(p,\nu).
\end{equation}
\end{theorem}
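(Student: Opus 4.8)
The plan is to recognize the smoothed Wasserstein distance as the optimal value of a parametrized convex program, to invoke duality to pass to the dual problem \eqref{eq:dual-smooth-local-W-distance}, and then to apply Danskin's theorem (Theorem \ref{thm:Danskin}) to differentiate with respect to the marginals $p = (p_1,p_2)$. First I would verify strong duality: the primal problem \eqref{eq:W-distance-smoothed} has objective $\la\Theta,M\ra + F_\tau(M)$ with $F_\tau \in \mc F_0$, constraints $\mc A M = p$ and $M \ge 0$; assuming a Slater-type condition (or using the fact that the entropic-type smoothing $F_\tau$ forces the minimizer into the relative interior of $\R_+^{n\times n}$), the optimal value equals that of the dual \eqref{eq:dual-smooth-local-W-distance}, i.e.
\begin{equation}
d_{\Theta,\tau}(p_1,p_2) = \max_{\nu \in \R^{2n}} g(p,\nu), \qquad g(p,\nu) = \la p,\nu\ra - G_\tau^\ast(\mc A^\T \nu - \Theta).
\end{equation}

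Next I would set up Danskin's theorem with $z = p \in \mc S \times \mc S$, $w = \nu$, and $g(z,w)$ as above. The function $g$ is affine (hence concave, and trivially "convex" after a sign flip — I would phrase this via $-d_{\Theta,\tau} = \min_\nu(-g)$ to match the convex-in-$z$ hypothesis, or simply note Danskin applies verbatim to the max of functions that are \emph{affine} in the parameter) in $z$; it is continuously differentiable in $z$ with $\nabla_z g(z,\nu) = \nu$, which depends continuously on $(z,\nu)$; and the feasible set for $\nu$ can be taken compact because $G_\tau^\ast$ grows sufficiently fast in the relevant directions — more carefully, one restricts attention to the normalized subspace since $\mc A^\T$ has the constant vectors in a controlled relationship with its range, so $\la p,\nu\ra$ for $p$ in the simplex is invariant under adding multiples of a fixed vector to $\nu$, and modulo that the argmax set is bounded. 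The remaining hypothesis is uniqueness of the maximizer $\bar\nu$. Granting this, Danskin gives $\nabla d_{\Theta,\tau}(p) = \nabla_z g(p,\bar\nu) = \bar\nu$, where $\bar\nu \in \argmax_\nu g(p,\nu)$.

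Finally I would account for the fact that the gradient of a function on the submanifold $\mc S \times \mc S \subset \R^{2n}$, viewed as a Euclidean submanifold with tangent space $T \times T$, is the orthogonal projection onto $T\times T$ of the ambient Euclidean gradient $\bar\nu$. This projection is exactly $P_{T\times T}(\bar\nu) = (P_T(\bar\nu_1), P_T(\bar\nu_2))$ by the definition \eqref{eq:projection_matrix_Rn_to_T} applied blockwise, yielding \eqref{eq:nabla-dtau}. I would also remark that the particular representative $\bar\nu$ is immaterial: any two maximizers differ by an element of the orthogonal complement of $T\times T$ restricted appropriately (the maximizer is unique only after projection, or the set $\argmax_\nu g(p,\cdot)$, while possibly an affine subspace in the ambient $\R^{2n}$ due to the null directions of $\mc A^\T$, projects to a single point), so $\ol{\nu}_T$ is well defined.

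\textbf{Main obstacle.} The delicate point is establishing the hypotheses of Danskin's theorem rigorously — specifically (a) that $g(p,\cdot)$ attains its maximum and that the relevant argmax set can be taken inside a compact $W$ independent of $p$ locally, which hinges on the coercivity/growth of $G_\tau^\ast$ in the image of $\mc A^\T - \Theta$ and on quotienting out the one-dimensional null direction of $\mc A^\T$ (coming from $\mc A^\T(\lambda\eins_n, -\lambda\eins_n)=0$-type relations via \eqref{eq:def-AT}); and (b) uniqueness of the maximizer modulo those null directions, which should follow from strict convexity of $G_\tau^\ast$ on the appropriate subspace when $F_\tau$ is an entropic-type smoother — but requires care since the smoothing function $F_\tau$ is only assumed convex in general. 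I expect the cleanest route is to prove uniqueness of $\ol\nu_T$ directly (even if $\ol\nu$ is not unique) and to state a mild regularity assumption on $F_\tau$ (strict convexity, or Legendre type) under which the theorem holds; the rest is bookkeeping with the linear maps $\mc A$, $\mc A^\T$ and the projection $P_T$.
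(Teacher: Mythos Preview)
Your proposal is correct and follows essentially the same route as the paper: establish strong duality (Lemma~\ref{lem:dtau-primal-dual}), quotient out the one-dimensional nullspace $\mc N(\mc A^{\T})$ so that the dual maximizer becomes unique on the complement (Lemmas~\ref{lem:kernel_A_transposed}--\ref{lem:rewritten_argmax_of_g_on_U}, under the standing assumption that $G_\tau^\ast$ is strictly convex), apply Danskin's theorem, and then project onto $T\times T$. The only device in the paper you do not spell out is the use of an affine isometric chart $\phi\colon U\subset\R^{2(n-1)}\to\mc S\times\mc S$ together with a local compact ball $\B_\delta\subset\mc R(\mc A)$ around the unique maximizer $\ol\nu_{\mc R}$, which is how the paper makes the compactness and convexity hypotheses of Theorem~\ref{thm:Danskin} literally hold; your ``main obstacle'' paragraph correctly anticipates exactly this.
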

\noindent
The proof follows below after some preparatory Lemmas, that also clarify the structure of the dual solution set. In particular, this set restricted to $\mc{R}(\mc{A})$ is a singleton (Lemma 4.9).
\begin{lemma}\label{lem:dtau-primal-dual}
Let
\begin{equation}\label{eq:rewrite-smooth-local-W-distance-b}
G_{\tau}(M) = F_{\tau}(M) 
+ \delta_{\R_{+}^{n \times n}}(M)
\end{equation}
with the convex smoothing function $F_{\tau}$ of Eq.~\eqref{eq:def-WD-smoothed}, and assume the conjugate function $G_{\tau}^{\ast}$ is continuously differentiable. 
Then the dual problem of 
\begin{equation}\label{eq:primal-dtau}
\min_{M \in \Pi(\mu_{1},\mu_{2})} \big\{
\la\Theta,M \ra + F_{\tau}(M)\big\}
\end{equation}
is given by
\begin{equation}\label{eq:dual-dtau}
\max_{\nu_{1},\nu_{2}} \big\{
\la\mu,\nu\ra 
- G_{\tau}^{\ast}(\mc{A}^{\T}\nu - \Theta)\big\}.
\end{equation}
Furthermore, assuming that strong duality holds,  
the conditions for optimal primal $\ol{M}$ and dual $\ol{\nu} = (\ol{\nu}_{1},\ol{\nu}_{2})$ solutions are
\begin{subequations}\label{eq:dtau-opt-conditions}
\begin{gather}
\ol{M} = \nabla G_{\tau}^{\ast}\big(\mc{A}^{\T} \ol{\nu} - \Theta\big),
\qquad\qquad
\mc{A}^{\T} \ol{\nu} - \Theta 
\in \partial G_{\tau}(\ol{M}) 
\label{eq:dtau-opt-conditions-a}
\intertext{
together with the affine constraint
} \label{eq:dtau-opt-conditions-b}
\mc{A}\ol{M} = {\mu}.
\end{gather}
\end{subequations}
\end{lemma}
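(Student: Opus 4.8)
The plan is to treat \eqref{eq:primal-dtau} as a convex program with a single affine equality constraint and to derive its Lagrangian dual directly. First I would absorb the membership $M \in \Pi(\mu_{1},\mu_{2})$ into the objective: by \eqref{eq:def-Pi-mui-muj} and \eqref{eq:def-A} we have $\Pi(\mu_{1},\mu_{2}) = \{M \in \R^{n\times n} \colon M \geq 0,\ \mc{A}M = \mu\}$, so with $G_{\tau} = F_{\tau} + \delta_{\R_{+}^{n\times n}} \in \mc{F}_{0}$ the primal reads $\min_{M} \{\la\Theta,M\ra + G_{\tau}(M) \colon \mc{A}M = \mu\}$. (Here $G_{\tau}$ is proper because $\dom F_{\tau} \cap \R_{+}^{n\times n} \supseteq \Pi(\mu_{1},\mu_{2})$, which is nonempty since $\mu_{1},\mu_{2}\in\Delta_{n}$.) Forming the Lagrangian $L(M,\nu) = \la\Theta,M\ra + G_{\tau}(M) + \la\nu,\mu - \mc{A}M\ra$ with a free multiplier $\nu\in\R^{2n}$ and minimizing over $M$, the $M$-dependent part is $\la\Theta - \mc{A}^{\T}\nu, M\ra + G_{\tau}(M)$, whose infimum equals $-G_{\tau}^{\ast}(\mc{A}^{\T}\nu - \Theta)$ by the definition \eqref{eq:def-f-conjugate} of the conjugate (using \eqref{eq:def-AT} for $\mc{A}^{\T}$). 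Hence the dual function is $q(\nu) = \la\mu,\nu\ra - G_{\tau}^{\ast}(\mc{A}^{\T}\nu - \Theta)$ and the dual problem is exactly \eqref{eq:dual-dtau}.

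For the optimality conditions I would argue as follows. Primal feasibility of $\ol{M}$ is the constraint $\mc{A}\ol{M} = \mu$, which is \eqref{eq:dtau-opt-conditions-b}. Under the assumed strong duality, and since the primal minimum is attained (the feasible set $\Pi(\mu_{1},\mu_{2})$ is compact and $M\mapsto\la\Theta,M\ra + F_{\tau}(M)$ is lower semicontinuous), we have $q(\ol{\nu}) = L(\ol{M},\ol{\nu})$; because $\la\ol{\nu},\mu - \mc{A}\ol{M}\ra = 0$ by feasibility, this says precisely that $\ol{M}$ attains the unconstrained infimum $\inf_{M} L(M,\ol{\nu})$. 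Fermat's rule together with the subdifferential sum rule (legitimate since $M\mapsto\la\Theta,M\ra$ is smooth) then gives $0 \in \Theta - \mc{A}^{\T}\ol{\nu} + \partial G_{\tau}(\ol{M})$, i.e.\ $\mc{A}^{\T}\ol{\nu} - \Theta \in \partial G_{\tau}(\ol{M})$, which is the second relation in \eqref{eq:dtau-opt-conditions-a}. Applying the inversion rule for subgradients (Theorem~\ref{tmh:inversion-subgradients}) turns this into $\ol{M} \in \partial G_{\tau}^{\ast}(\mc{A}^{\T}\ol{\nu} - \Theta)$, and since $G_{\tau}^{\ast}$ is assumed continuously differentiable this subdifferential is the singleton $\{\nabla G_{\tau}^{\ast}(\mc{A}^{\T}\ol{\nu} - \Theta)\}$, yielding $\ol{M} = \nabla G_{\tau}^{\ast}(\mc{A}^{\T}\ol{\nu} - \Theta)$, the first relation in \eqref{eq:dtau-opt-conditions-a}.

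I expect the computation itself to be routine; the one point that needs care is the passage ``strong duality $\Rightarrow$ $\ol{M}$ minimizes $L(\cdot,\ol{\nu})$ over all of $\R^{n\times n}$,'' which relies on both the assumed strong duality and the attainment of a primal minimizer, so I would justify the latter explicitly via compactness of $\Pi(\mu_{1},\mu_{2})$ and lower semicontinuity of the objective. Everything else --- the conjugate identity, the subdifferential sum rule, and the inversion rule --- is standard convex analysis already quoted in the paper, and the sign conventions only need to be tracked so that the multiplier term vanishes at a feasible point.
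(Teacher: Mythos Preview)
Your proposal is correct and follows essentially the same Lagrangian duality route as the paper: absorb nonnegativity into $G_{\tau}$, dualize only the affine constraint $\mc{A}M=\mu$, and recognize the inner minimum as $-G_{\tau}^{\ast}(\mc{A}^{\T}\nu-\Theta)$. The only cosmetic difference is in how the optimality conditions are extracted: the paper equates the primal and dual optimal values to obtain the Fenchel--Young equality $G_{\tau}(\ol{M})+G_{\tau}^{\ast}(\mc{A}^{\T}\ol{\nu}-\Theta)=\la \ol{M},\mc{A}^{\T}\ol{\nu}-\Theta\ra$ and then invokes the inversion rule, whereas you pass through the saddle-point characterization and Fermat's rule before applying the same inversion rule---both arrive at \eqref{eq:dtau-opt-conditions-a} in the same way.
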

\begin{proof}
Taking into account \eqref{eq:def-Pi-mui-muj}, 
we write the right-hand side of \eqref{eq:W-distance-smoothed} in the form
\begin{equation}\label{eq:primal-dtau-rewrite}
\min_{M \in \R^{n \times n}} \la\Theta,M\ra + G_{\tau}(M)
\quad\text{s.t.}\quad
\mc{A} M = \mu, \quad M \ge 0.
\end{equation}
Let $\nu=(\nu_{1}, \nu_{2}) \in \R^{2 n}$ 
denote the dual variables corresponding to the affine constraint of \eqref{eq:primal-dtau-rewrite}. 
Then problem \eqref{eq:primal-dtau-rewrite} rewritten in Lagrangian form reads
\begin{subequations}
\begin{align}\label{eq:primal-dtau-rewrite-b}
&\min_{M \in \R^{n \times n}}\big\{
\la\Theta, M\ra + G_{\tau}(M)
+ \max_{\nu}\la \nu, \mu - \mc{A}M\ra\big\} \\
\gdw\qquad 
&\min_{M \in \R^{n \times n}}\big\{
\max_{\nu}\la\nu,\mu\ra + G_{\tau}(M)
-\big\la\mc{A}^{\T}\nu-\Theta, M\big\ra\big\}.
\end{align}
\end{subequations}
Since strong duality holds by assumption, interchanging $\min$ and $\max$ yields the dual problem \eqref{eq:dual-dtau}. Moreover, the optimal primal and dual objective function values are equal, which gives with \eqref{eq:primal-dtau-rewrite-b} and \eqref{eq:dual-dtau}
\begin{equation}
-\la \ol{M}, \mc{A}^{\T}\ol{\nu} -\Theta\ra + G_{\tau}(\ol{M}) 
+ G_{\tau}^{\ast}(\mc{A}^{\T}\ol{\nu}-\Theta) = 0.
\end{equation}
This implies \eqref{eq:dtau-opt-conditions-a} by the subgradient inversion rule \cite[Prop.~11.3]{Rockafellar2009}, whereas the primal constraint \eqref{eq:dtau-opt-conditions-b} is obvious.
\end{proof}
\begin{remark}[{smoothness of $G_{\tau}^{\ast}$}]\label{rem:smoothness-G*}
The \textit{smoothness} assumption with respect to $G_{\tau}^{\ast}$ enables to compute conveniently the gradient of the smoothed Wasserstein distance $d_{\Theta,\tau}$. It corresponds to a \textit{convexity} assumption on $G_{\tau}$. These aspects are further discussed in Section \ref{sec:Wasserstein-Numerics} as well.
\end{remark}
\begin{remark}[{strong duality}]\label{rem:strong-duality}
The condition of strong duality (cf.~\cite[Section I.5]{Boyd:2009aa}) made by Lemma \ref{lem:dtau-primal-dual} is crucial for what follows. This condition will be satisfied later on when working in a \textit{geometric} setting with local measures $M, \mu_{1}, \mu_{2}$ with \textit{full} support, as introduced in Section \ref{sec:Assignment-Manifold}.
\end{remark}

\begin{lemma}\label{lem:kernel_A_transposed}
Let the linear mapping $\mc{A}^{\T}$ be defined by \eqref{eq:def-AT}. Then
\begin{equation}\label{eq:N-AT}
\mc{N}(\mc{A}^{\T}) = \left\{\lambda \bpm \eins_{n} \\ -\eins_{n} \epm \in \R^{2 n} \colon \lambda \in \R \right\} \quad \text{and}\quad 
\mc{N}(\mc{A}^{\T})^\perp = \left\{ x \in \R^{2 n} \colon \Big\la x, \bpm \eins_{n} \\ -\eins_{n} \epm\Big\ra = 0\right\}.
\end{equation}
\end{lemma}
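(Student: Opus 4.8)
The plan is to compute the kernel of $\mc{A}^{\T}$ directly from its explicit formula \eqref{eq:def-AT} and then take the orthogonal complement. First I would take an arbitrary $(\nu_{1},\nu_{2}) \in \R^{2n}$ and write out the condition $\mc{A}^{\T}(\nu_{1},\nu_{2}) = \nu_{1}\eins_{n}^{\T} + \eins_{n}\nu_{2}^{\T} = 0$ as an $n \times n$ matrix equation; its $(k,r)$ entry reads $(\nu_{1})_{k} + (\nu_{2})_{r} = 0$ for all $k,r \in [n]$. Fixing $r$ and varying $k$ shows $(\nu_{1})_{k}$ is the same for all $k$, hence $\nu_{1} = \lambda\eins_{n}$ for some $\lambda \in \R$; substituting back gives $(\nu_{2})_{r} = -\lambda$ for all $r$, i.e.\ $\nu_{2} = -\lambda\eins_{n}$. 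Conversely, for any such pair one checks $\lambda\eins_{n}\eins_{n}^{\T} - \lambda\eins_{n}\eins_{n}^{\T} = 0$, so the kernel is exactly the one-dimensional span of $(\eins_{n},-\eins_{n})$, which is the first claim in \eqref{eq:N-AT}.

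For the second claim, since $\mc{N}(\mc{A}^{\T})$ is the span of the single vector $w := (\eins_{n},-\eins_{n})$, its orthogonal complement in $\R^{2n}$ is by definition $\{x \in \R^{2n} \colon \la x, w\ra = 0\}$, which is precisely the stated set. This is just the elementary fact that $(\Span\{w\})^{\perp} = \{w\}^{\perp}$, so no real work is needed here beyond invoking it.

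I do not expect any genuine obstacle: this is a routine linear-algebra computation, and the only mild care required is to handle the matrix-versus-vector identification of $\mc{A}^{\T}\nu$ correctly (the entrywise reading of $\nu_{1}\eins_{n}^{\T} + \eins_{n}\nu_{2}^{\T}$) and to note that one needs $n \ge 1$ for the indices to make sense, which is automatic. If one prefers a dimension-count sanity check, note $\mc{A}$ maps $\R^{n\times n}$ (dimension $n^{2}$) onto a subspace of $\R^{2n}$; the range $\mc{R}(\mc{A})$ is the set of $(\mu_{1},\mu_{2})$ with $\la\eins,\mu_{1}\ra = \la\eins,\mu_{2}\ra$, which has dimension $2n-1$, so $\dim\mc{N}(\mc{A}^{\T}) = 2n - (2n-1) = 1$, consistent with what we found. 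This remark is optional and I would likely omit it to keep the proof short.
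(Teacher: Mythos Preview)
Your proof is correct. The paper takes a slightly different route: instead of reading off the entrywise equations $(\nu_{1})_{k} + (\nu_{2})_{r} = 0$ directly, it applies $\mc{A}$ to the relation $\mc{A}^{\T} z = 0$ to obtain $\mc{A}\mc{A}^{\T} z = 0$, computes this composite operator explicitly as
\[
\mc{A}\mc{A}^{\T}\bpm x \\ y \epm = \bpm n x + \la y,\eins_{n}\ra\eins_{n} \\ \la x,\eins_{n}\ra\eins_{n} + n y \epm,
\]
and solves the resulting linear system to reach the same conclusion. Your entrywise argument is more elementary and avoids the detour through the Gram-type operator $\mc{A}\mc{A}^{\T}$; the paper's approach, on the other hand, is a reusable template (kernel of $\mc{A}^{\T}$ equals kernel of $\mc{A}\mc{A}^{\T}$) that does not require unpacking the matrix structure of the codomain. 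Both handle the converse inclusion and the orthogonal complement in the same trivial way. Your optional dimension-count remark is a nice sanity check but, as you say, not needed.
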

\begin{proof}
Let $z = \bsm x \\ y \esm \in \mathbb{R}^{2 n}$ with $0 = \mathcal{A}^\top z = x \eins_{n}^{\T} + \eins_{n} y^{\T}$. Applying $\mathcal{A}$, we get
\begin{equation}
0 = \mathcal{A}\mathcal{A}^\top z 
= \mathcal{A}(x \eins^\top) + \mc{A}(\eins y^\top) 
= \bpm n x + \la y, \eins_{n} \ra \eins_{n} \\
\la x, \eins_{n} \ra \eins_{n} + n y \epm
\quad\gdw\quad
z = \bpm x \\ y \epm 
= -\frac{1}{n} \bpm \la y, \eins_{n} \ra \eins_{n} \\
\la x, \eins_{n} \ra \eins_{n} \epm.
\end{equation}
This implies $\la x, \eins_{n} \ra = -\la y, \eins_{n} \ra$, and setting $\lambda = \frac{1}{n} \la x, \eins_{n} \ra \in \R$ shows that $z$ has the form \eqref{eq:N-AT}. 
Conversely, in view of the definition \eqref{eq:def-AT}, it is clear that any vector from the set \eqref{eq:N-AT} is in $\mc{N}(\mc{A}^{\T})$. The characterization of
$\mc N(\mc{A}^\T)^\perp$ directly follows from the definitions.
\end{proof}

The following Lemma characterizes the set of optimal dual solutions to problem \eqref{eq:dual-dtau}.
\begin{lemma}\label{lem:rewritten_argmax_of_g}
  Let the function $G_{\tau}^{\ast}$ of the dual objective function \eqref{eq:dual-dtau} resp.~\eqref{eq:def-g-dual} be continuously differentiable and strictly convex, and let $p \in \R_{++}^{2 n}$. Then the set of optimal dual solutions has the form
\begin{equation}\label{eq:argmax-g}
\underset{\nu \in \R^{2 n}}{\argmax}\;g(p,\nu)
= \begin{cases}
\{\ol{\nu}\}, &\text{if}\; \left\la p, \bsm \eins_{n} \\ -\eins_{n}\esm \right\ra \neq 0, \\
\ol{\nu} + \mc{N}(\mc{A}^{\T}), &\text{if}\; \left\la p, \bsm \eins_{n} \\ -\eins_{n}\esm \right\ra = 0.
\end{cases}
\end{equation}
\end{lemma}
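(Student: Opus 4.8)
The plan is to analyze the optimality condition $0 = \nabla_\nu g(p,\nu) = p - \mathcal{A}\,\nabla G_\tau^\ast(\mathcal{A}^\T \nu - \Theta)$ and exploit that $g(p,\cdot)$ is concave, so that its maximizers form exactly the set of stationary points. First I would observe that for any $\nu$ and any $\eta \in \mathcal{N}(\mathcal{A}^\T)$ we have $g(p,\nu+\eta) - g(p,\nu) = \la p,\eta\ra$, since the term $G_\tau^\ast(\mathcal{A}^\T(\nu+\eta)-\Theta) = G_\tau^\ast(\mathcal{A}^\T\nu-\Theta)$ is unchanged. By Lemma~\ref{lem:kernel_A_transposed}, $\mathcal{N}(\mathcal{A}^\T)$ is the line spanned by $\bsm \eins_n \\ -\eins_n \esm$, so $g(p,\cdot)$ is invariant under this direction precisely when $\la p, \bsm \eins_n \\ -\eins_n \esm\ra = 0$, and otherwise $g(p,\nu+t\bsm\eins_n\\-\eins_n\esm)\to+\infty$ or $-\infty$ as $t\to\pm\infty$; in the latter case $g(p,\cdot)$ is still finite and concave, so by Weierstrass and the strict-convexity structure it does attain its supremum. (Existence of a maximizer $\ol\nu$ uses strong duality from Lemma~\ref{lem:dtau-primal-dual} together with $p\in\R_{++}^{2n}$, i.e.~the full-support assumption that makes Slater's condition hold for the primal.)

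\textbf{Case $\la p, \bsm \eins_n \\ -\eins_n \esm\ra \ne 0$.} Here I would argue uniqueness directly. Suppose $\ol\nu$ and $\ol\nu'$ are both maximizers. Decompose $\ol\nu' - \ol\nu = \eta + w$ with $\eta\in\mathcal{N}(\mathcal{A}^\T)$ and $w\in\mathcal{N}(\mathcal{A}^\T)^\perp$. Since $g(p,\ol\nu') = g(p,\ol\nu' - \eta) + \la p,\eta\ra$ and both $\ol\nu'$ and $\ol\nu$ are optima, feeding $\ol\nu'-\eta$ back into $g$ and using the invariance forces $\la p,\eta\ra = 0$; but $\eta$ is a multiple of $\bsm\eins_n\\-\eins_n\esm$ and $\la p,\bsm\eins_n\\-\eins_n\esm\ra\ne 0$, so $\eta = 0$. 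It then remains to show $w=0$, i.e.~that on the complement $\mathcal{N}(\mathcal{A}^\T)^\perp$ the maximizer is unique. This follows because $\mathcal{A}^\T$ is injective on that complement and $G_\tau^\ast$ is \emph{strictly} convex: $\nu\mapsto G_\tau^\ast(\mathcal{A}^\T\nu - \Theta)$ is strictly convex along $\mathcal{N}(\mathcal{A}^\T)^\perp$ while $\nu\mapsto-\la p,\nu\ra$ is affine, so $g(p,\cdot)$ restricted to any coset $\ol\nu + \mathcal{N}(\mathcal{A}^\T)^\perp$ is strictly concave and has at most one maximizer.

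\textbf{Case $\la p, \bsm \eins_n \\ -\eins_n \esm\ra = 0$.} Now $g(p,\nu+\eta) = g(p,\nu)$ for all $\eta\in\mathcal{N}(\mathcal{A}^\T)$, so the maximizer set is invariant under translation by $\mathcal{N}(\mathcal{A}^\T)$; hence if $\ol\nu$ is one maximizer, so is every element of $\ol\nu + \mathcal{N}(\mathcal{A}^\T)$. Conversely, given any maximizer $\ol\nu'$, write $\ol\nu' - \ol\nu = \eta + w$ as above; by the same strict-concavity argument on $\ol\nu + \mathcal{N}(\mathcal{A}^\T)^\perp$, the component $w$ must vanish, so $\ol\nu' \in \ol\nu + \mathcal{N}(\mathcal{A}^\T)$. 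This gives the equality in \eqref{eq:argmax-g}.

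\textbf{Main obstacle.} The routine parts are the invariance computation and the case split on the pairing with $\bsm\eins_n\\-\eins_n\esm$. The delicate point is the uniqueness \emph{on the complement} $\mathcal{N}(\mathcal{A}^\T)^\perp$: one must be careful that strict convexity of $G_\tau^\ast$ on all of $\R^{n\times n}$ plus injectivity of $\mathcal{A}^\T|_{\mathcal{N}(\mathcal{A}^\T)^\perp}$ really does yield strict convexity of the composition $\nu\mapsto G_\tau^\ast(\mathcal{A}^\T\nu-\Theta)$ along that subspace --- this is where the hypothesis ``strictly convex'' (as opposed to merely differentiable) in the statement is used --- and to confirm that a maximizer exists at all in the degenerate case, which rests on strong duality (Lemma~\ref{lem:dtau-primal-dual}) and the full-support assumption $p\in\R_{++}^{2n}$.
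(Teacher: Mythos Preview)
Your argument for the case $\la p,\bsm\eins_n\\-\eins_n\esm\ra=0$ (the only case that actually occurs in the application, since there $p\in\mc{S}\times\mc{S}$) is correct and takes a genuinely different route from the paper. The paper establishes the inclusion $\argmax_\nu g(p,\nu)\subseteq\ol\nu+\mc{N}(\mc{A}^\T)$ by going through the primal--dual optimality conditions of Lemma~\ref{lem:dtau-primal-dual}: with $\ol{M}'=\nabla G_\tau^\ast(\mc{A}^\T\ol\nu'-\Theta)$ and $\mc{A}\ol{M}'=p$ it derives the first-order identity $G_\tau^\ast(\ol w')-G_\tau^\ast(\ol w)=\la\nabla G_\tau^\ast(\ol w'),\ol w'-\ol w\ra$ for $\ol w=\mc{A}^\T\ol\nu-\Theta$, $\ol w'=\mc{A}^\T\ol\nu'-\Theta$, and concludes $\ol w=\ol w'$ from strict convexity. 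You bypass the primal entirely and argue directly that $\nu\mapsto G_\tau^\ast(\mc{A}^\T\nu-\Theta)$ is strictly convex on every coset of $\mc{N}(\mc{A}^\T)^\perp$ because $\mc{A}^\T$ is injective there, hence $g(p,\cdot)$ is strictly concave on such cosets. This is cleaner and uses only the definition of strict convexity rather than its first-order characterization; the paper's route, on the other hand, makes the link to the primal optimum $\ol M$ explicit, which ties in with \eqref{eq:dtau-opt-conditions}.

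One caution in the case $\la p,\bsm\eins_n\\-\eins_n\esm\ra\neq 0$: your own identity $g(p,\nu+t\bsm\eins_n\\-\eins_n\esm)=g(p,\nu)+t\,\la p,\bsm\eins_n\\-\eins_n\esm\ra$ shows that $g(p,\cdot)$ is then \emph{unbounded above}, so the argmax is empty and the Weierstrass-type existence claim you sketch cannot hold. The paper's proof for this case is in effect a uniqueness-if-nonempty argument (and the assertion $\argmax=\{\ol\nu\}$ should be read accordingly); your decomposition delivers the same uniqueness conclusion, but not existence. This has no bearing on Theorem~\ref{theo:gradientWasserstein}, where $p\in\mc{S}\times\mc{S}$ forces the second case.
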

\begin{proof}
Appendix \ref{sec:proof-lem-rewritten_argmax_of_g}.
\end{proof}
We next clarify the \textit{attainment} of optimal dual solutions due to Lemma \ref{lem:rewritten_argmax_of_g}.
\begin{lemma}\label{lem:rewritten_argmax_of_g_on_U}
Consider the orthogonal decomposition $\R^{2 n} = \mc{N}(\mc{A}^{\T}) \oplus \mc{R}(\mc{A})$ into linear subspaces and denote the 
corresponding components of a vector $\nu \in \R^{2 n}$ by $\nu = \nu_{\mc N} + \nu_{\mc R}$. Then, for $p \in \R_{++}^{2 n}$ satisfying 
$\la p, \bsm\eins_{n} \\ -\eins_{n} \esm \ra = 0$, we have
\begin{subequations}
\begin{align}
\underset{\nu_{\mc R} \in \mc{R}(\mc{A})}{\argmax}\; g(p,\nu_{\mc R}) &= \{\ol{\nu}_{\mc R}\}, \qquad\qquad
\ol{\nu}_{\mc R} = P_{\mc{R}(\mc{A})}(\ol{\nu})\quad\text{for any}\quad \ol{\nu} \in \underset{\nu \in \R^{2 n}}{\argmax}\; g(p,\nu), 
\label{eq:attainment-dual-nu-a} \\ 
\label{eq:attainment-dual-nu-b}
g(p,\ol{\nu}_{\mc R}) &= \max_{\nu_{\mc R} \in \mc{R}(\mc{A})} g(p,\nu_{\mc R}) = \max_{\nu \in \R^{2 n}} g(p,\nu),
\end{align}
\end{subequations}
that is a unique dual maximizer exists in the subspace $\mc{R}(\mc{A})$.
\end{lemma}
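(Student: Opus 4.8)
The plan is to exploit the orthogonal decomposition $\R^{2n} = \mc{N}(\mc{A}^{\T}) \oplus \mc{R}(\mc{A})$, which is valid because $\mc{N}(\mc{A}^{\T})^{\perp} = \mc{R}(\mc{A})$ for any linear map. Writing $\nu = \nu_{\mc N} + \nu_{\mc R}$, the key observation is that the dual objective $g(p,\nu) = \la p, \nu \ra - G_{\tau}^{\ast}(\mc{A}^{\T}\nu - \Theta)$ depends on $\nu_{\mc N}$ only through the linear term $\la p, \nu_{\mc N} \ra$: indeed $\mc{A}^{\T}\nu = \mc{A}^{\T}\nu_{\mc R}$ since $\nu_{\mc N} \in \mc{N}(\mc{A}^{\T})$, and by Lemma~\ref{lem:kernel_A_transposed} the component $\nu_{\mc N}$ is a scalar multiple of $\bsm \eins_n \\ -\eins_n \esm$, so $\la p, \nu_{\mc N} \ra = 0$ under the hypothesis $\la p, \bsm \eins_n \\ -\eins_n \esm \ra = 0$. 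Hence $g(p,\nu) = g(p,\nu_{\mc R})$ for every $\nu$, i.e.\ $g$ is constant along the fibers $\nu_{\mc R} + \mc{N}(\mc{A}^{\T})$.

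First I would record this reduction formally: for $\nu_{\mc R} \in \mc{R}(\mc{A})$ define $\tilde g(\nu_{\mc R}) := g(p, \nu_{\mc R})$, and note $\max_{\nu \in \R^{2n}} g(p,\nu) = \max_{\nu_{\mc R} \in \mc{R}(\mc{A})} \tilde g(\nu_{\mc R})$, with the maximizers related by $\underset{\nu}{\argmax}\, g(p,\nu) = \big(\underset{\nu_{\mc R}}{\argmax}\, \tilde g(\nu_{\mc R})\big) + \mc{N}(\mc{A}^{\T})$. Combining this with Lemma~\ref{lem:rewritten_argmax_of_g}, which in the case $\la p, \bsm\eins_n \\ -\eins_n\esm\ra = 0$ gives $\underset{\nu}{\argmax}\, g(p,\nu) = \ol\nu + \mc{N}(\mc{A}^{\T})$ for some $\ol\nu$, we immediately conclude that $\underset{\nu_{\mc R} \in \mc{R}(\mc{A})}{\argmax}\, g(p,\nu_{\mc R})$ is the single point $P_{\mc{R}(\mc{A})}(\ol\nu)$, because intersecting the affine set $\ol\nu + \mc{N}(\mc{A}^{\T})$ with the subspace $\mc{R}(\mc{A})$ yields exactly the projection $P_{\mc{R}(\mc{A})}(\ol\nu)$. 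This gives \eqref{eq:attainment-dual-nu-a}, and \eqref{eq:attainment-dual-nu-b} is then just the statement that the restricted maximum equals the unrestricted maximum, which is the reduction established at the outset.

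The one point requiring a little care — and the main obstacle if there is one — is \emph{attainment}: Lemma~\ref{lem:rewritten_argmax_of_g} already asserts the $\argmax$ is nonempty (it is written as $\ol\nu + \mc{N}(\mc{A}^{\T})$ with $\ol\nu$ an actual optimal solution), so existence is inherited and I do not need a fresh coercivity argument; I would just make sure to cite that lemma for nonemptiness rather than re-deriving it. A second small point is confirming $\mc{N}(\mc{A}^{\T}) \oplus \mc{R}(\mc{A}) = \R^{2n}$ is genuinely orthogonal, which is the standard fundamental-theorem-of-linear-algebra fact $\mc{R}(\mc{A}) = \mc{N}(\mc{A}^{\T})^{\perp}$ in the Euclidean inner product; this can be invoked in one line. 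Everything else is the bookkeeping of splitting $\nu$ and observing the objective ignores the $\mc{N}(\mc{A}^{\T})$-component, so the proof should be short.
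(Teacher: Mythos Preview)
Your proposal is correct and follows essentially the same route as the paper: invoke Lemma~\ref{lem:rewritten_argmax_of_g} to obtain $\argmax_{\nu} g(p,\nu) = \ol{\nu} + \mc{N}(\mc{A}^{\T})$, then intersect this affine set with $\mc{R}(\mc{A})$ using the orthogonal decomposition to get the unique point $\ol{\nu}_{\mc R} = P_{\mc{R}(\mc{A})}(\ol{\nu})$. Your explicit observation that $g(p,\nu) = g(p,\nu_{\mc R})$ along each fiber is a slightly more explanatory way of presenting the reduction than the paper's proof, but the logical content is identical.
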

\begin{proof}
We first shown \eqref{eq:attainment-dual-nu-b}. Let $\ol{\nu}$ be an optimal dual solution. Since  $\left\la p, \bsm\eins_{n} \\ -\eins_{n} \esm \right\ra = 0$, Lemma \ref{lem:rewritten_argmax_of_g} yields $\argmax_{\nu\in \R^{2 n}} g(p,\nu) = \ol{\nu} + \mc{N}(\mc{A}^{\T}) = \ol{\nu}_{\mc N} + \ol{\nu}_{\mc R} + \mc{N}(\mc{A}^{\T})$. This shows $\ol{\nu}_{\mc R} \in \ol{\nu} + \mc{N}(\mc{A}^{\T})$, that is $\ol{\nu}_{\mc R} \in \mc{R}(\mc{A})$ is a maximizer, which implies \eqref{eq:attainment-dual-nu-b}.  

Let $\ol{\nu}_{\mc R}' \in \mc{R}(\mc{A})$ be another maximizer. As before, we have the representation $\ol{\nu}_{\mc R}' \in \ol{\nu}+\mc{N}(\mc{A}^{\T})$, that is $\ol{\nu}_{\mc R}' = \ol{\nu}_{\mc N}+\ol{\nu}_{\mc R} + \tilde{\nu}_{\mc N}$ for some $\tilde{\nu}_{\mc N} \in \mc{N}(\mc{A}^{\T})$, which implies $\ol{\nu}_{\mc R}'=\ol{\nu}_{\mc R}$, i.e.~uniqueness \eqref{eq:attainment-dual-nu-a} of the dual maximizer in $\mc{R}(\mc{A})$.
\end{proof}

\vspace{0.25cm}
We are now in a position to prove Theorem \ref{theo:gradientWasserstein}. 
\begin{proof}[Proof of Theorem \ref{theo:gradientWasserstein}]
We proceed by subsequently proving the following: First, we relate the orthogonal decomposition $\R^{2 n} = \mc{N}(\mc{A}^{\T}) \oplus \mc{R}(\mc{A})$ to the 
tangent space $T_p (\mc S \times \mc S) = T \times T \subset \R^{2 n}$ for any $p = (p_1, p_2) \in \mc S \times \mc S$. Second, the existence of a global 
isometric chart for the manifold $\mc{S} \times \mc{S}$ is shown in order to represent the smoothed Wasserstein distance $d_{\Theta,\tau}$ and the dual 
objective function $g(\mu,\nu)$ in a convenient way. Third, we apply Theorem \ref{thm:Danskin}.
\begin{enumerate}
\item
Consider the unique decomposition $\nu = \nu_{\mc N} + \nu_{\mc R} \in \mc{N}(\mc{A}^{\T}) \oplus \mc{R}(\mc{A})$ of any point $\nu \in \R^{2 n}$. 
Then we have
\begin{equation}\label{eq:Pi-TxT}
P_{T \times T}(\nu_{\mc R}) =  \nu_{T} = P_{T \times T}(\nu).
\end{equation} 
At first, we show $T \times T \subseteq \mc{R}(\mc{A})$. For this, take an arbitrary $v = \bsm v_{1} \\ v_{2} \esm \in T \times T$. Due to the 
definition of $T$, we have $\la \eins_{n}, v_{1} \ra = \la \eins_{n}, v_{2} \ra = 0$ and thus $\la v, \bsm \eins_{n} \\ -\eins_{n}\esm \ra = 0$, 
which according to Lemma \ref{lem:kernel_A_transposed} means $v \in \mc{N}(\mc{A}^\top)^{\perp} = \mc{R}(\mc{A})$. As a consequence of 
$T \times T \subseteq \mc{R}(\mc{A})$ we have $P_{T \times T} (\nu_{\mc N}) = 0$ and therefore Statement \eqref{eq:Pi-TxT} follows from 
\begin{equation}
  P_{T \times T}(\nu) - P_{T \times T}(\nu_{\mc R}) = P_{T \times T}(\nu - \nu_{\mc R}) = P_{T \times T} (\nu_{\mc N}) = 0.
\end{equation}
\item
There exists an open subset $U \subset \R^{2(n-1)}$ and an isometry $\phi \colon U \to \mc{S} \times \mc{S}$ such that $\phi^{-1}$ is a global isometric 
chart of the manifold $\mc{S} \times \mc{S}$. $\phi$ can be constructed as follows. Choose an orthonormal basis $\{v_1, \ldots, v_{2(n-1)}\}$ of the tangent 
space $T\times T$, set $b = \frac{1}{n} \bsm \eins_{n} \\ \eins_{n} \esm$ and define the isometry
\begin{equation}
\psi \colon \R^{2(n-1)} \to \big(T\times T\big) + b,\quad
x \mapsto \psi(x) := B x + b,\quad
B x = \sum_{i=1}^{2(n-1)} x_{i} v_{i}.
\end{equation}
Because $\mc{S}\times\mc{S}$ is an open subset of $\big(T\times T\big) + b$ and $\psi$ an isometry, we have that the set 
$U := \psi^{-1}(\mc{S}\times\mc{S}) \subset \R^{2(n-1)}$ is also open and 
\begin{equation}\label{eq:def-phi-SxS}
  \phi := \psi|_U\colon U \to \mc S \times \mc S
\end{equation}
the desired isometric mapping. Furthermore, since the basis $\{v_{i}\}_{i=1}^{2(n-1)}$ is orthonormal, the orthogonal projection reads 
\begin{equation}\label{eq:Pi-BBT}
P_{T \times T} = B B^{\T}.
\end{equation}

\item
Using $\phi$ given by \eqref{eq:def-phi-SxS}, we obtain the coordinate representations
\begin{equation}
\ol{d}_{\Theta, \tau} := d_{\Theta, \tau} \circ \phi,\qquad
\ol{g}(x, \nu) := g\big(\phi(x), \nu\big)
\end{equation}
of the smoothed Wasserstein distance $d_{\Theta,\tau}$ and the dual objective function $g(p,\nu)$. Since we assume strong duality, that is equality of the optimal values of \eqref{eq:primal-dtau} and \eqref{eq:dual-dtau}, we have $d_{\Theta,\tau}(p)=\max_{\nu \in \R^{2 n}} g(p,\nu)$. Setting $x_{p}=\phi^{-1}(p)$, this equation translates in view of Lemma \ref{lem:rewritten_argmax_of_g_on_U} to
\begin{equation}\label{eq:g-bar-g-relations}
\ol{g}(x_{p},\ol{\nu}_{\mc R})
= \max_{\nu_{\mc R} \in \mc{R}(\mc{A})} \ol{g}(x_{p},\nu_{\mc R})
= \ol{g}(x_{p},\ol{\nu}) 
= \max_{\nu \in \R^{2 n}} \ol{g}(x_{p},\nu)
= \ol{d}_{\Theta,\tau}(x_{p}),
\end{equation}
with unique maximizer $\ol{\nu}_{\mc R} = P_{\mc{R}(\mc{A})}(\ol{\nu})$. Let $\B_{\delta} \subset \mc{R}(\mc{A})$ be a compact neighborhood of $\ol{\nu}_{\mc R}$. Then \eqref{eq:g-bar-g-relations} remains valid after restricting $\mc{R}(\mc{A})$ to $\B_{\delta}$. Because $g$ given by \eqref{eq:def-g-dual} is linear in the first argument and the mapping $\phi$ is affine, the function $\ol{g}$ is convex in the first argument and differentiable, hence satisfies the assumptions of Theorem \ref{thm:Danskin}.

In order to compute the gradient $\nabla_{x}\ol{g}(x,\nu_{\mc R})$, it suffices to consider the first term $\la \phi(x), \nu_{\mc R} \ra$ of $\ol{g}$, which only depends on $x$. Using \eqref{eq:def-phi-SxS}, we have
\begin{equation}
\la\phi(x),\nu_{\mc R}\ra
= \la B x + b, \nu_{\mc R} \ra
= \la x, B^{\T} \nu_{\mc R} \ra + \la b, \nu_{\mc R} \ra.
\end{equation}
Thus, $\nabla_{x} \ol{g}(x,\nu_{\mc R}) = B^{\T} \nu_{\mc R}$ which continuously depends on $\nu_{\mc R}$. As a consequence, we may apply Theorem \ref{thm:Danskin} and obtain due to \eqref{eq:Danskin}
\begin{equation}
\nabla\ol{d}_{\Theta,\tau}(x_{p}) 
= \nabla_{x}\ol{g}(x_{p},\ol{\nu}_{\mc R})
= B^{\T}\ol{\nu}_{\mc R}.
\end{equation}
Using the differential $D\phi(x)=B$, we finally get
\begin{equation}
\nabla d_{\Theta,\tau}(p) = B \nabla\ol{d}_{\Theta,\tau}(x_{p}) = B B^{\T}\ol{\nu}_{\mc R}
\overset{\eqref{eq:Pi-TxT}}{=}
P_{T \times T}(\ol{\nu}_{\mc R})
\overset{\eqref{eq:Pi-TxT}}{=}
\ol{\nu}_{T},
\end{equation}
which proves \eqref{eq:nabla-dtau}.
\end{enumerate}
\end{proof}

\section{Application to Graphical Models}
\label{sec:Graphical-Models}

This section explains how the labeling approach on the assignment manifold of Section \ref{sec:Labeling-Assignment-Manifold} can be applied to a graphical model, using the global and local gradients derived in Section \ref{sec:Gradients}. The graphical model is given in terms of an energy function $E(x)$ of the form \eqref{eq:def-E}. The basic idea, worked out in Section \ref{sec:min-Jtau-geometric}, for determining a labeling $x$ with low energy $E(x)$ is to combine minimization of the convex relaxation \eqref{eq:LP-relaxation} and non-convex rounding to an integral solution in a \textit{single smooth process}. 
This idea is realized by restricting the smooth approximation \eqref{eq:def-J-smooth} of the objective function to the assignment manifold from Section \ref{sec:Assignment-Manifold}, and by combining numerical integration of the corresponding Riemannian gradient flow from Section \ref{sec:Geometric Integration} with the assignment mechanism suggested by \cite{Astroem2017} from Section \ref{sec:labeling-by-assignment}. 

Section \ref{sec:Wasserstein-Messages} complements our preliminary observations stated as Remarks \ref{rem:role-of-smoothing} and \ref{rem:local-polytope-constraints}, in order to highlight the essential properties of this process as a novel way of `belief propagation' using dually computed gradients of local Wasserstein distances, that we call \textit{Wasserstein messages}.

\subsection{Smooth Integration of Minimizing and Rounding on the Assignment Manifold}
\label{sec:min-Jtau-geometric}

We recall how regularization is performed by the assignment approach of \cite{Astroem2017}: distance vectors \eqref{eq:def-D-matrix} representing the data term of classical variational approaches are lifted to the assignment manifold by \eqref{eq:-def-L-matrix} and geometrically averaged over  spatial neighborhoods -- see Eqns.~\eqref{eq:def-S-original} and \eqref{eq:def-SW-original}.

Given a graphical model in terms of an energy function \eqref{eq:def-E}, regularization is already \textit{defined} by the pairwise model parameters $E_{ij}(\ell_{k},\ell_{r})$ resp.~$\theta_{ij}(\ell_{k},\ell_{r})$, so that evaluating the gradient of the regularized objective function \eqref{eq:def-J-smooth} \textit{implies} averaging over spatial neighborhoods, as Eq.~\eqref{eq:euclidean_gradient_smooth_energy_general} clearly displays. Taking additionally into account the simplest (explicit Euler) update rule \eqref{eq:geometric_minimizing_multiplicative_update_on_W} for geometric integration of Riemannian gradient flows on the assignment manifold, a natural definition of the similarity matrix that consistently incorporates the graphical model into the geometric approach of \cite{Astroem2017}, is 
\begin{equation}\label{eq:euler_step_in_numerics_tangent}
  S_{i}(W^{(k)}) = \frac{W_i^{(k)} \cdot e^{-h \nabla_i E_\tau(W^{(k)})}}{\la W_i^{(k)}, e^{-h \nabla_i E_\tau(W^{(k)})}\ra},\quad i \in [m],\qquad h > 0,\qquad W^{(0)} = \frac{1}{n} \eins_{m} \eins_{n}^{\T},
\end{equation}
where $h$ is a stepsize parameter and the partial gradients $\nabla_i E_\tau(W^{(k)})$ are given by \eqref{eq:euclidean_gradient_smooth_energy_general}. The sequence $(W^{(k)})$ is initialized in an unbiased way at the  barycenter $W^{(0)} \in \mc{W}$. Adopting the fixed point iteration proposed by \cite{Astroem2017} leads to the update of the assignment matrix
\begin{equation}\label{eq:W-update}
  W_i^{(k+1)} = \frac{W_i^{(k)} \cdot S_i(W^{(k)})}{\la W_i^{(k)}, S_i(W^{(k)})\ra},\quad i \in [m].
\end{equation}
These two interleaved update steps represent two objectives: (i) minimize the function $E_{\tau}$ on the assignment manifold $\mc{W}$ (Section \ref{sec:Geometric Integration}) and (ii) converge to an integral solution, i.e.~a valid labeling. Plugging \eqref{eq:euler_step_in_numerics_tangent} into \eqref{eq:W-update} gives 
\begin{equation}
  W_i^{(k+1)} = \frac{(W_i^{(k)})^2 \cdot e^{-h \nabla_i E_\tau(W^{(k)})}}{\la (W_i^{(k)})^2, e^{-h \nabla_i E_\tau(W^{(k)})}\ra},
\end{equation}
which suggests to control more flexibly the latter rounding mechanism by a \textit{rounding parameter} $\alpha$ and the update rule
\begin{equation}\label{eq:W-update-alpha}
  W_i^{(k+1)} = \frac{(W_i^{(k)})^{1+\alpha} \cdot e^{-h \nabla_i E_\tau(W^{(k)})}}{\la (W_i^{(k)})^{1+\alpha}, e^{-h \nabla_i E_\tau(W^{(k)})}\ra},\quad \alpha \geq 0.
\end{equation}
The following proposition reveals the \textit{continuous} gradient flow that is approximated by the sequence \eqref{eq:W-update-alpha}.
\begin{proposition}\label{prop:W-update-alpha}
Let $E_{\tau}$ be given by \eqref{eq:def-J-smooth} and denote the entropy of the assignment matrix $W$ by 
\begin{equation}\label{eq-entropy-formula_W}
H(W) = -\la W, \log W \ra.
\end{equation}
Then the sequence of updates \eqref{eq:W-update-alpha} are geometric Euler-steps for numerically integrating the Riemannian gradient flow of the extended objective function
\begin{equation}\label{eq:def-f-tau-alpha}
f_{\tau,\alpha}(W) := E_{\tau}(W) + \alpha_{h} H(W),\qquad \alpha_{h} = \frac{\alpha}{h}.
\end{equation}
\end{proposition}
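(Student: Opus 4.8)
The plan is to show that the multiplicative update \eqref{eq:W-update-alpha} coincides, row by row, with the geometric explicit Euler step \eqref{eq:geometric_minimizing_multiplicative_update_on_W} applied to the objective $f_{\tau,\alpha}$ in place of a generic $J$. The key observation is that the factor $(W_i^{(k)})^{\alpha}$ appearing in \eqref{eq:W-update-alpha} can be absorbed into the exponential, because it is itself (up to a positive scaling that cancels in the normalization) of the form $e^{\alpha \log W_i^{(k)}}$.

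First I would recall from Section \ref{sec:Geometric Integration} that the geometric Euler scheme for a smooth $J\colon\mc W\to\R$ reads
\begin{equation}
  W_i^{(k+1)} = \frac{W_i^{(k)} \cdot e^{-h\nabla_i J(W^{(k)})}}{\la W_i^{(k)}, e^{-h\nabla_i J(W^{(k)})}\ra},
\end{equation}
and that only $P_T(\nabla_i J)$ matters, since an additive multiple of $\eins_n$ in the exponent cancels between numerator and denominator. Next I would compute the Euclidean gradient of the entropy term: for $H(W)=-\la W,\log W\ra$ one has $\nabla_i H(W) = -\log W_i - \eins_n$, so that $P_T(\nabla_i H(W)) = -P_T(\log W_i)$; the irrelevant $-\eins_n$ drops out. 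Hence by linearity $\nabla_i f_{\tau,\alpha}(W) = \nabla_i E_\tau(W) + \alpha_h \nabla_i H(W)$, and modulo multiples of $\eins_n$,
\begin{equation}
  e^{-h\nabla_i f_{\tau,\alpha}(W^{(k)})}
  \;\propto\; e^{-h\nabla_i E_\tau(W^{(k)})}\cdot e^{h\alpha_h \log W_i^{(k)}}
  \;=\; (W_i^{(k)})^{\alpha}\cdot e^{-h\nabla_i E_\tau(W^{(k)})},
\end{equation}
using $h\alpha_h=\alpha$. Substituting this into the generic Euler step, the multiplicative prefactor $W_i^{(k)}$ combines with $(W_i^{(k)})^{\alpha}$ to give $(W_i^{(k)})^{1+\alpha}$, and the proportionality constant cancels in the normalization, yielding exactly \eqref{eq:W-update-alpha}.

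To close the argument I would then invoke the construction recalled in Section \ref{sec:Geometric Integration}: since \eqref{eq:W-update-alpha} is the geometric Euler step for $f_{\tau,\alpha}$, lifting it through the diffeomorphism $L_C$ shows it is the explicit Euler discretization, on the tangent space $T^m$, of $\dot V(t) = -\nabla f_{\tau,\alpha}(W(t))$, i.e.\ of the Riemannian gradient flow $\dot W(t) = -\nabla_{\mc W} f_{\tau,\alpha}(W(t))$ on the assignment manifold. I do not anticipate a serious obstacle here; the only point requiring a little care is the bookkeeping of the harmless $\eins_n$ terms — one must check explicitly that neither $\nabla_i H$'s constant part nor the $\frac1n\eins\eins^\top$ in $P_T$ affects the normalized update, which follows because the update \eqref{eq:W-update-alpha} is invariant under adding any multiple of $\eins_n$ to the exponent. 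A secondary subtlety is that $E_\tau$ and $H$ must be differentiable on $\mc W$; differentiability of $E_\tau$ is exactly Proposition \ref{prop:euclidean_gradient_smooth_energy_general} together with Theorem \ref{theo:gradientWasserstein}, and $H$ is smooth on $\R^{m\times n}_{++}\supset\mc W$, so this is automatic.
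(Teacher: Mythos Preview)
Your proposal is correct and follows essentially the same route as the paper: compute the Euclidean entropy gradient (the paper does this via a curve on $\mc W$, you by direct differentiation of $-\la W,\log W\ra$, both arriving at $-P_T(\log W_i)$), then insert it into the geometric Euler scheme and simplify. The only organizational difference is that the paper carries out the computation in the tangent coordinates $V_i^{(k)}=L_c^{-1}(W_i^{(k)})$, obtaining the clean linear recursion $V_i^{(k+1)}=(1+\alpha)V_i^{(k)}-h\nabla_i E_\tau(W^{(k)})$ before applying $L_c$, whereas you stay on $\mc W$ and exploit the invariance of the normalized update under additive multiples of $\eins_n$ in the exponent; these are equivalent since $L_c$ encodes precisely that invariance.
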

\begin{proof}
An Euler-step for minimizing $f_{\tau,\alpha}$ on the tangent space reads (with $\nabla_{i}=\nabla_{W_{i}}$)
\begin{equation}\label{eq:proof-W-update-alpha-Vk1}
  V_i^{(k+1)} = V_i^{(k)} - h \nabla_i f(W^{(k)}) = V_i^{(k)} - h \nabla_i E_\tau(W^{(k)}) - \alpha \nabla_{i} H(W^{(k)}),\qquad
  i \in [m],
\end{equation}
where the $i$-th row of $W^{(k)}$ is given by $W_i^{(k)} = L_c( V_i^{(k)}),\; c = \frac{1}{n} \eins_{n}$. In order to compute the gradient of the entropy, consider a smooth curve $\gamma \colon (-\varepsilon, \varepsilon) \to \mc W$ with $\gamma(0) = W$ and $\dot{\gamma}(0) = X$. Then 
  \begin{equation}
    \frac{d}{dt} H(\gamma(t))\big|_{t=0} = - \la X, \log(W) \ra - \la W, \frac{1}{W} \cdot X\ra = - \la X, \log(W) \ra - \la \eins \eins^\top, X\ra.
  \end{equation}
  Since $\la \log(W), X\ra = \la P_{T^{m}} \left (\log(W)\right ), X\ra$ and $\la \eins \eins^\top, X\ra = \la \eins, X\eins \ra = \la \eins, 0 \ra = 0$, we have
  \begin{equation}
    \la \nabla H(W), X\ra = \frac{d}{dt} H(\gamma(t))\big|_{t=0} = \la - P_{T^{m}}\left ( \log(W) \right ), X\ra.
  \end{equation}
Thus, using $P_T(\log(W_i)) = L_c^{-1}(W_i)$ from \eqref{eq:def-L-inverse}, we obtain 
\begin{equation}
\nabla_{i} H(W^{(k)}) = -P_{T} \left (\log(W_{i}^{(k)})\right ) = -L_c^{-1}\big(L_{c}(V_{i}^{(k)})\big) = -V_{i}^{(k)}.
\end{equation}
Substitution into \eqref{eq:proof-W-update-alpha-Vk1} gives
\begin{equation}
V_i^{(k+1)} = (1+\alpha) V_i^{(k)} - h \nabla_i E_\tau(W^{(k)})
\end{equation}
and in turn the update
\begin{subequations}
\begin{align}
W_{i}^{(k+1)} &= L_{c}(V_{i}^{(k+1)})
= \frac{e^{(1+\alpha) V_{i}^{(k)}} \cdot e^{-h\nabla_{i} E_{\tau}(W^{(k)})}}{\la \eins_{n}, e^{(1+\alpha) V_{i}^{(k)}} \cdot e^{-h\nabla_{i} E_{\tau}(W^{(k)})} \ra} \\
&= \frac{(e^{V_{i}^{(k)}})^{(1+\alpha)} \cdot e^{-h\nabla_{i} E_{\tau}(W^{(k)})}}{\la \eins_{n}, (e^{V_{i}^{(k)}})^{1+\alpha} \cdot e^{-h\nabla_{i} E_{\tau}(W^{(k)})} \ra} 
= \frac{(W_{i}^{(k)})^{(1+\alpha)} \cdot e^{-h\nabla_{i} E_{\tau}(W^{(k)})}}{\la \eins_{n}, (W_{i}^{(k)})^{1+\alpha} \cdot e^{-h\nabla_{i} E_{\tau}(W^{(k)})} \ra} \\
&= \frac{(W_{i}^{(k)})^{(1+\alpha)} \cdot e^{-h\nabla_{i} E_{\tau}(W^{(k)})}}{\la (W_{i}^{(k)})^{1+\alpha}, e^{-h\nabla_{i} E_{\tau}(W^{(k)})} \ra}
\end{align}
\end{subequations}
which is \eqref{eq:W-update-alpha}.
\end{proof}
\begin{remark}[continuous DC programming]\label{rem:dc-perspective}
Proposition \ref{prop:W-update-alpha} and \eqref{eq:def-f-tau-alpha} admit to interpret the update rule \eqref{eq:W-update-alpha} as a \textit{continuous difference of convex (DC) programming} strategy. Unlike the established DC approach \cite{PhamDinh1997,PhamDinh-LeThin-98}, however, which takes \textit{large steps} by solving to optimality a sequence of convex programs in connection with updating an affine upper bound of the concave part of the objective function, our update rule \eqref{eq:W-update-alpha} differs in two essential ways: \textit{geometric optimization} by numerically integrating the Riemannian gradient flow \textit{tightly interleaves with rounding} to an integral solution. The rounding effect is achieved by minimizing the entropy term of \eqref{eq:def-f-tau-alpha} which steadily sparsifies the assignment vectors comprising $W$.
\end{remark}
%

\subsection{Wasserstein Messages}
\label{sec:Wasserstein-Messages}

We get back to the informal discussion of  \textit{belief propagation} in Section \ref{sec:Related-Work} in order to highlight properties of our approach \eqref{eq:def-J-smooth} from this viewpoint. We first sketch belief propagation and the origin of corresponding \textit{messages}, and refer to \cite{Yedidia-GenBP-05,WainrightJordan08} for background and more details.

Starting point is the primal linear program (LP) \eqref{eq:LP-relaxation} written in the form
\begin{equation}
\min_{\mu \in \mc{L}_{\mc{G}}} \la \theta, \mu \ra 
= \min_{\mu} \la \theta, \mu \ra \quad\text{subject to}\quad
A \mu = b,\; \mu \geq 0,
\end{equation}
where the constraints represent the feasible set $\mc{L}_{\mc{G}}$ which is explicitly given by the local marginalization constraints \eqref{eq:def-mu-Pi}. The corresponding dual LP reads
\begin{equation}
\max_{\nu}\la b, \nu \ra = \max_{\nu}\la\eins, \nu_{\mc{V}} \ra,\quad A^{\T} \nu \leq \theta,
\end{equation}
with dual (multiplier) variables
\begin{equation}
\nu = (\nu_{\mc{V}},\nu_{\mc{E}}) = (\dotsc,\nu_{i},\dotsc,\nu_{ij}(x_{i}),\dotsc,\nu_{ij}(x_{j}),\dotsc),\quad
i \in \mc{V},\quad ij \in \mc{E}
\end{equation}
corresponding to the affine primal constraints. In order to obtain a condition that relates optimal vectors $\mu$ and $\nu$ without subdifferentials that are caused by the non-smoothness of these LPs, one considers the \textit{smoothed} primal convex problem
\begin{equation}\label{eq:smoothed-LP}
\min_{\mu \in \mc{L}_{\mc{G}}} \la \theta, \mu \ra - \veps H(\mu),\quad\veps>0,\qquad
H(\mu) = \sum_{ij \in \mc{E}} H(\mu_{ij}) - \sum_{i \in \mc{V}}\big(d(i)-1\big) H(\mu_{i})
\end{equation}
with smoothing parameter $\veps>0$, degree $d(i)$ of vertex $i$, and with the local entropy functions
\begin{equation}
H(\mu_{i})=-\sum_{x_{i} \in \mc{X}} \mu_{i}(x_{i})\log\mu_{i}(x_{i}),\qquad
H(\mu_{ij})=-\sum_{x_{i},x_{j} \in \mc{X}}\mu_{ij}(x_{i},x_{j})\log\mu_{ij}(x_{i},x_{j}).
\end{equation}
Setting temporarily $\veps=1$ and evaluating the optimality condition $\nabla_{\mu}L(\mu,\nu)=0$ based on the corresponding Lagrangian
\begin{equation}\label{eq:marginalization-Lagrangian}
L(\mu,\nu) = \la \theta, \mu \ra - H(\mu) + \la \nu, A \nu-b \ra,
\end{equation}
yields the relations connecting $\mu$ and $\nu$,
\begin{subequations}\label{eq:local-LP-opt-conditions}
\begin{align}
\mu_{i}(x_{i}) &= e^{\nu_{i}} e^{-\theta_{i}(x_{i})} \prod_{j \in \mc{N}(i)}e^{\nu_{ij}(x_{i})},\qquad x_{i} \in \mc{X},\; i \in \mc{V}, \\
\mu_{ij}(x_{i},x_{j}) &= e^{\nu_{i}+\nu_{j}} e^{-\theta_{ij}(x_{i},x_{j})-\theta_{i}(x_{i})-\theta_{j}(x_{j})} \prod_{k \in \mc{N}(i)\setminus\{j\}} e^{\nu_{ik}(x_{i})} \prod_{k \in \mc{N}(j)\setminus\{i\}} e^{\nu_{jk}(x_{j})},
\end{align}
\end{subequations}
$x_{i},x_{j} \in \mc{X},\; ij \in \mc{E}$, where the terms $e^{\nu_{i}}, e^{\nu_{i}+\nu_{j}}$ normalize the expressions on the right-hand side whereas the so-called \textit{messages} $e^{\nu_{ij}(x_{i})}$ enforce the local marginalization constraints $\mu_{ij} \in \Pi(\mu_{i},\mu_{j})$. Invoking these latter constraints enables to eliminate the left-hand side of \eqref{eq:local-LP-opt-conditions} to obtain after some algebra the fixed point equations
\begin{equation}\label{eq:BP}
e^{\nu_{ij}(x_{i})} = e^{\nu_{j}}\sum_{x_{j} \in \mc{X}} \Big(
e^{-\theta_{ij}(x_{i},x_{j})-\theta_{j}(x_{j})}
\prod_{k \in \mc{N}(j)\setminus\{i\}} e^{\nu_{ik}(x_{j})} \Big),\qquad ij \in \mc{E},\quad x_{i} \in \mc{X},
\end{equation}
solely in terms of the \textit{dual} variables, commonly called \textit{sum-product algorithm} or \textit{loopy belief propagation} by \textit{message passing}. Repeating this derivation, after weighting the entropy function $H(\mu)$ of \eqref{eq:marginalization-Lagrangian} by $\veps$ as in \eqref{eq:smoothed-LP}, and taking the limit $\lim_{\veps \searrow 0}$, yields relation \eqref{eq:BP} with the sum replaced by the $\max$ operation, as a consequence of taking the $\log$ of both sides and relation \eqref{eq:def-logexp}. This fixed point iteration is called \textit{max-product algorithm} in the literature.

\vspace{0.25cm}
\noindent
From this viewpoint, our alternative approach \eqref{eq:def-f-tau-alpha} emerges as follows, starting at the smoothed primal LP \eqref{eq:smoothed-LP} and following the idea of the proof from Lemma \ref{lem:LP-reformulation}.
\begin{subequations}
\begin{align}
&\min_{\mu \in \mc{L}_{\mc{G}}} \la \theta, \mu \ra - \veps H(\mu) \\
=\quad &\min_{\mu \in \mc{L}_{\mc{G}}} \la \theta, \mu \ra - \veps \Big(\sum_{ij \in \mc{E}} H(\mu_{ij}) - \sum_{i \in \mc{V}}\big(d(i)-1\big) H(\mu_{i})\Big)\\
=\quad &\min_{\mu \in \mc{L}_{\mc{G}}}\la \theta_{\mc{V}}, \mu_{\mc{V}} \ra + \la \theta_{\mc{E}}, \mu_{\mc{E}} \ra - \veps \sum_{ij \in \mc{E}} H(\mu_{ij})+ \veps \sum_{i \in \mc{V}}\big(d(i)-1\big) H(\mu_{i})
\\
=\quad &\min_{\mu_{\mathcal{V}} \in \Delta_{n}^{m}} E_{\veps}(\mu_{\mc{V}}) + \veps \sum_{i \in \mc{V}}\big(d(i)-1\big) H(\mu_{i}).
\end{align}
\end{subequations}
Formulation \eqref{eq:def-f-tau-alpha} results from replacing $\veps$ by a smoothing parameter $\tau$ which can be set to a value \textit{not very} close to $0$ (cf.~Remark \ref{rem:role-of-smoothing}), and we absorb the second nonnegative factor weighting the entropy term by a second parameter $\alpha$. As demonstrated in Section \ref{sec:Experiments}, this latter parameter enables to control precisely the trade-off between accuracy of labelings in terms of the given objective function $E_{\tau}$ of \eqref{eq:def-f-tau-alpha}, that approximates the original discrete objective function \eqref{eq:def-E}, and the speed of convergence to an integral (labeling) solution.

Regarding the resulting term $E_{\tau}$, a key additional step is to use the reformulation \eqref{eq:def-J-smooth}, because all edge-based variables are \textit{locally} `dualized away', as done \textit{globally} with \textit{all} variables when using established belief propagation (cf.~\eqref{eq:BP}). In this way, we can work in the primal domain and with graphs having higher connectivity, without suffering from the enormous memory requirements that would arise from merely smoothing the LP and solving \eqref{eq:smoothed-LP} in the primal domain. Furthermore, the `messages' defined by our approach have a clear interpretation in terms of the smoothed Wasserstein distance between local marginal measures.

\vspace{0.25cm}
We summarize this discussion by contrasting directly established belief propagation with our approach in terms of the following \textbf{key observations}. Regarding belief propagation, we have:
\begin{enumerate} 
\item 
\textbf{Local non-convexity.} The negative $-H(\mu)$ of the so-called \textit{Bethe entropy} function $H(\mu)$ is \textit{non-convex} in general for graphs $\mc{G}$ with cycles \cite[Section 4.1]{WainrightJordan08}, due to the negative sign of the second sum of \eqref{eq:smoothed-LP}. 
\item
\textbf{Local rounding at each step.} The max-product algorithm performs \textit{local rounding} at \textit{every} step of the iteration so as to obtain integral solutions, i.e.~a \textit{labeling} after convergence. This operation results as limit of a \textit{non-convex} function, due to (1).
\item
\textbf{Either nonsmoothness or strong nonlinearity.} The latter $\max$-operation is inherently nonsmooth. Preferring instead a smooth approximation with $0<\veps \ll 1$ necessitates to choose $\veps$ very small so as to ensure rounding. This, however, leads to \textit{strongly nonlinear} functions of the form \eqref{eq:def-logexp} that are difficult to handle numerically.
\item
\textbf{Invalid constraints.} Local marginalization constraints are only satisfied \textit{after} convergence of the iteration. Intuitively it is plausible that, by only \textit{gradually} enforcing constraints in this way, the iterative process becomes more susceptible to getting stuck in unfavourable stationary points, due to the non-convexity according to (1).
\end{enumerate}
Our \textit{geometric approach} removes each of these issues. \textit{Message passing} with respect to vertex $i \in \mc{V}$ is defined by evaluating the local Wasserstein gradients of \eqref{eq:euclidean_gradient_smooth_energy_general} for all edges incident to $i$. We therefore call these local gradients \textbf{Wasserstein messages} which are `passed along edges'. Similarly to \eqref{eq:BP}, each such message is given by \textit{dual} variables through \eqref{eq:nabla-dtau}, that solve the regularized \textit{local} dual LPs \eqref{eq:def-g-dual}. As a consequence, local marginalization constraints are \textit{always} satisfied, throughout the iterative process. \\

In addition, we make the following \textbf{observations} in correspondence to the points (1)-(4) above:
\begin{enumerate}
\item
\textbf{Local convexity.} Wasserstein messages of \eqref{eq:euclidean_gradient_smooth_energy_general} are defined by local \textit{convex} programs \eqref{eq:def-g-dual}. This contrasts with loopy belief propagation and holds true for any pairwise model parameters $\theta_{ij}$ of the prior of the graphical model and the corresponding coupling of $\mu_{i}$ and $\mu_{j}$. This removes spurious minima introduced through non-convex entropy approximations.
\item
\textbf{Smooth global rounding after convergence.} Rounding to integral solutions is \textit{gradually} enforced through the Riemannian flow induced by the extended objective function \eqref{eq:def-f-tau-alpha}. In particular, repeated `aggressive' local $\max$ operations of the max-product algorithm are replaced by a \textit{smooth} flow.
\item
\textbf{Smoothness and weak nonlinearity.} The role of the smoothing parameter $\tau$ of \eqref{eq:def-J-smooth} \textit{differs} from the role of the smoothing parameter $\veps$ of \eqref{eq:smoothed-LP}. While the latter has to be chosen quite close to $0$ so as to achieve rounding at all, $\tau$ merely mollifies the dual local problems \eqref{eq:def-g-dual} and hence should be chosen small, but may be considerably larger than $\veps$. In particular, this does not impair rounding due to (2), which happens due to the \textit{global} flow which is \textit{smoothly} driven by the Wasserstein messages. This \textit{decoupling} of smoothing and rounding enables to numerically compute labelings more efficiently. The results reported in Section \ref{sec:Experiments} demonstrate this fact.
\item
\textbf{Valid constraints.} By construction, computation of the Wasserstein messages enforces all local marginalization constraints \textit{throughout} the iteration. This is in sharp contrast to belief propagation where this generally holds after convergence only. Intuitively, it is plausible that our \textit{more tightly} constrained iterative process is less susceptible to getting stuck in poor local minima. The results reported in Section \ref{sec:experiments-K3} provide evidence of this conjecture.
\end{enumerate}

\section{Implementation}
\label{sec:Numerical-Optimization}
In this section we discuss several aspects of the implementation of our approach. The numerical update scheme used in our implementation is given by \eqref{eq:W-update-alpha},
\begin{equation}\label{eq:W-update-alpha-b}
  W_i^{(k+1)} = \frac{(W_i^{(k)})^{1+\alpha} \cdot e^{-h \nabla_i E_\tau(W^{(k)})}}{\la (W_i^{(k)})^{1+\alpha}, e^{-h \nabla_i E_\tau(W^{(k)})}\ra},
    \quad W_i^{(0)} = \frac{1}{n} \eins_n, \quad i\in \mc V
\end{equation}
where $\alpha \geq 0$ is the \textit{rounding} parameter, $h > 0$ the step-size and $\tau$ the \textit{smoothing} parameter
for the local Wasserstein distances.

Section \ref{sec:normalization} details a strategy for maintaining in a \textit{numerically stable} way strict positivity of all variables defined on the assignment manifold. Numerical aspects of computing local Wasserstein gradients are discussed in Section \ref{sec:Wasserstein-Numerics}, and the natural role of the entropy function is highlighted for assuming the role of the smoothing function $F_{\tau}$ in eq.~\eqref{eq:def-WD-smoothed}. Our criterion for convergence and terminating the iterative process \eqref{eq:W-update-alpha-b} of label assignment is specified in Section \ref{sec:termination}.

\subsection{Assignment Normalization}\label{sec:normalization}
The rounding mechanism addressed by Proposition \ref{prop:W-update-alpha} and Remark \ref{rem:dc-perspective} will be effective if $\alpha_{h}$ in \eqref{eq:def-f-tau-alpha} is chosen large enough  to compensate the influence of the function $F_{\tau}$ that regularizes the local Wasserstein distances \eqref{eq:def-WD-smoothed}. 

In this case, each vector $W_i$ approaches some 
vertex $e_i$ of the simplex and thus some entries of $W_i$ converge to zero. However, due to our optimization scheme every vector $W_i$ evolves on the interior of 
the simplex $\mc S$, that is all entries of $W_i$ have to be positive all the time -- see also Remark \ref{rem:strong-duality}. Since there is a limit for the precision of
representing small positive numbers on a computer, we avoid numerical problems by adopting the normalization strategy of \cite{Astroem2017}. After each iteration, we check all 
$W_i$ and whenever an entry drops below $\varepsilon = 10^{-10}$, we rectify $W_i$ by
\begin{equation}
  W_i \ \leftarrow \ \frac{1}{\la \eins, \tilde{W}_i\ra}\tilde{W}_i\;,\quad \tilde{W}_i = W_i - \min_{j = 1, \ldots, n} \{W_{i,j}\} + \varepsilon\;,\quad \varepsilon = 10^{-10}\;.
\end{equation}
Thus, the constant $\varepsilon$ plays the role of $0$ in our implementation. Our numerical experiments showed that this operation avoids numerical issues.

\subsection{Computing Wasserstein Gradients}
\label{sec:Wasserstein-Numerics}
A core subroutine of our approach concerns the computation of the local Wasserstein gradients as part of the overall gradient \eqref{eq:euclidean_gradient_smooth_energy_general}. We argue in this section why the  
\textit{negative entropy function} that we use in our implementation for smoothing the local Wasserstein distances, plays a distinguished role. To this end, we adopt again in this section the notation \eqref{eq:simplified_notation}.

Using this notation the
\textit{smooth} entropy regularized Wasserstein distance \eqref{eq:def-WD-smoothed} reads
\begin{equation}\label{eq:entropy-Wasserstein}
d_{\Theta, \tau}(\mu_{1},\mu_{2}) 
= \min_{M \in \R^{n \times n}}\, \la\Theta,M\ra - \tau H(M)
\quad\text{s.t.}\quad \mc{A} M = \bpm \mu_{1} \\ \mu_{2} \epm,\quad M \geq 0\;,
\end{equation}
with the entropy function 
\begin{equation}
H(M) = -\sum_{i,j} M_{i,j} \log M_{i,j}.
\end{equation}

As shown in Section \ref{sec:Objective-Gradient} and according to Theorem \ref{theo:gradientWasserstein}, the gradients of 
\eqref{eq:entropy-Wasserstein} are the maximizer of the corresponding dual problem. Using the notation \eqref{eq:simplified_notation}, the dual problem of 
\eqref{eq:entropy-Wasserstein} reads 
\begin{equation}\label{eq:dualObjExp}
\max_{\nu \in \R^{2n}} \la\mu, \nu\ra - \tau\sum_{k,l} \exp\Big [ \frac{1}{\tau}\Big(\mc{A}^{\T}\nu - \Theta \Big)_{k,l}\Big].
\end{equation}
In particular, in view of the general form \eqref{eq:dual-smooth-local-W-distance} of this dual problem, the indicator function \eqref{eq:dual-constraints} is smoothly approximated by the function $\tau \exp(\frac{1}{\tau} x)$. Figure  \ref{fig:log_barrier_vs_exp} compares this approximation with the classical
logarithmic barrier $- \log(-x)$ function for approximating the indicator function $\delta_{\R_{-}}$ of the nonpositive orthant. 
Log-barrier penalty functions are the method of choice for \textit{interior point methods} \cite{nesterov1987,Terlaky1996},
 which \textit{strictly} rule out violations of the constraints. While this is essential for many applications where constraints
 represent physical properties that cannot be violated, it is \textit{not} essential in the present case for calculating the
Wasserstein messages. Moreover, the bias towards interior points by log-barrier functions, as Figure \ref{fig:log_barrier_vs_exp} clearly shows, is detrimental in the present context and favours the formulation \eqref{eq:dualObjExp}.

\begin{figure}[h]
\centering
\includegraphics{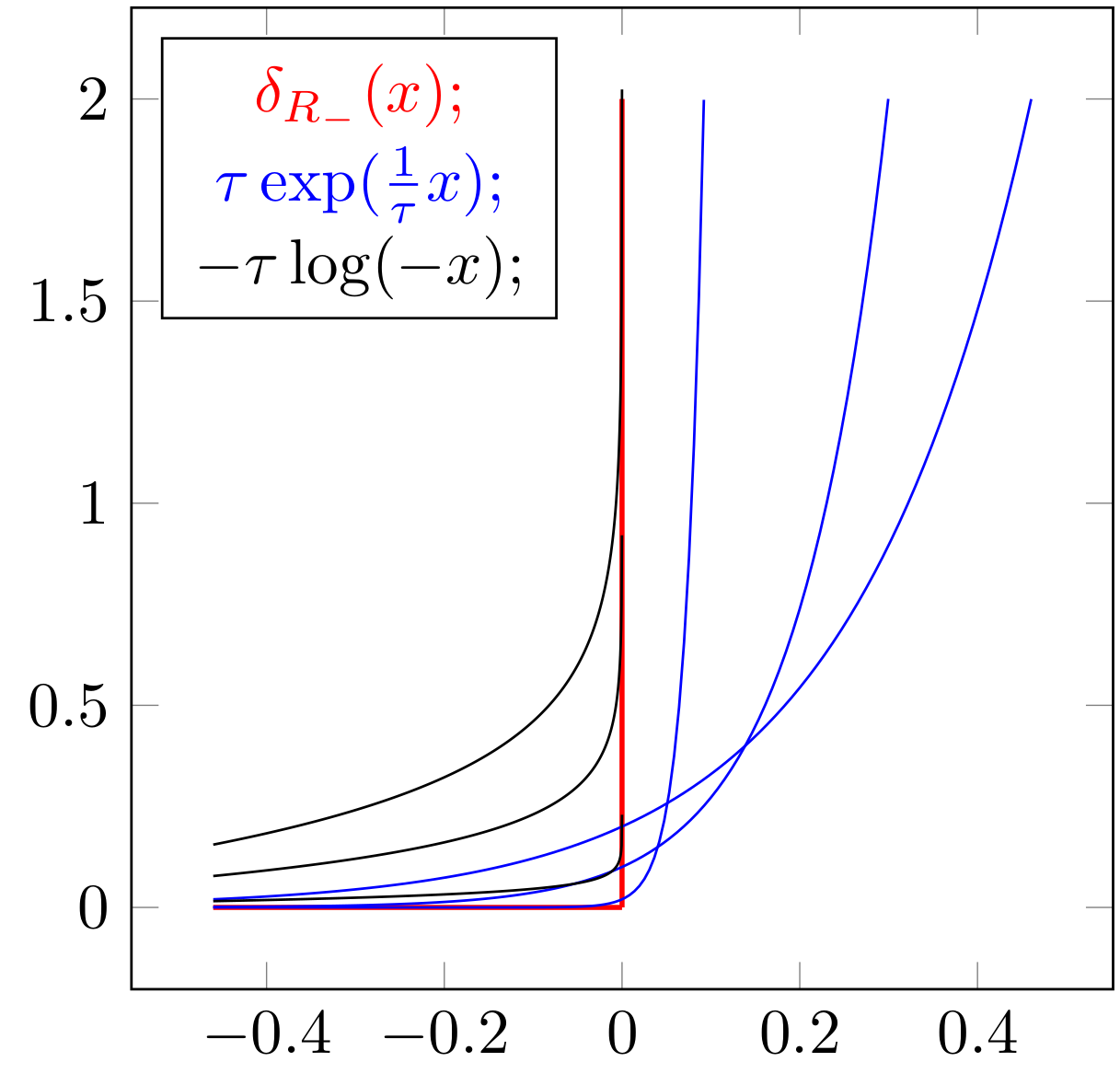}
\caption{Approximations of the indicator function $\delta_{\R_{-}}$ of the nonpositive orthant. The log-barrier function (black curves) strictly rules out violations of the constraints but induce a bias towards interior points. Our formulation (blue curves) is less biased and reasonable approximates the $\delta$-function (red curve) depending on the smoothing parameter 
$\tau$. Displayed are the approximations of $\delta_{\R_{-}}$ for $\tau = \frac{1}{5}, \frac{1}{10}, \frac{1}{50} $.}
\label{fig:log_barrier_vs_exp}
\end{figure}

\vspace{0.2cm}
We now make explicit how the local Wasserstein gradients \eqref{eq:nabla-dtau} are computed based on the formulation \eqref{eq:entropy-Wasserstein} and examine numerical aspects depending on the smoothing parameter $\tau$. It is well known that doubly stochastic matrices as solutions of convex programs like \eqref{eq:entropy-Wasserstein} can be computed by iterative matrix scaling \cite{sinkhorn1964,Schneider1990}, \cite[ch.~9]{Brualdi2006}. This has been made popular in the field of machine learning by \cite{Cuturi2013}.

The optimality condition \eqref{eq:dtau-opt-conditions} takes the form
\begin{equation}\label{eq:smoothed-W-dual-OC}
\ol{M}= \exp\Big[\frac{1}{\tau}\Big(\mc{A}^{\T} \ol{\nu}- \Theta \Big)\Big],
\end{equation} 
and rearranging yields the connection to matrix scaling:
\begin{equation}
\begin{aligned}
\ol{M} &= \exp\Big[\frac{1}{\tau}\Big(\mc{A}^{\T} \ol{\nu}- \Theta \Big)\Big] 
\stackrel{\eqref{eq:def-AT}}{=} \exp\Big[\frac{1}{\tau}\Big(\ol\nu_{1} \eins_{n}^{\T} + \eins_{n} \ol\nu_{2}^{\T} - \Theta  \Big)\Big]\\
&\stackrel{\phantom{\eqref{eq:def-AT}}}{=} \big (\exp(\tfrac{\ol\nu_1}{\tau}) \exp(\tfrac{\ol\nu_2}{\tau})^T\big ) \cdot \exp \big(-\tfrac{1}{\tau} \Theta \big ) 
\stackrel{\phantom{\eqref{eq:def-AT}}}{=} \Diag\big(\exp(\tfrac{\ol\nu_1}{\tau})\big)  \exp \big(-\tfrac{1}{\tau} \Theta \big )    \Diag\big(\exp(\tfrac{\ol\nu_2}{\tau})\big),
\end{aligned}
\end{equation}
where $\Diag(\cdot)$ denotes the diagonal matrix with the argument vector as entries. For given marginals 
$\mu = (\mu_1, \mu_2)$ due to \eqref{eq:entropy-Wasserstein} and with the shorthand $K = \exp\big(- \tfrac{1}{\tau}\Theta \big)$, the optimal dual variables 
$\ol{\nu}=(\ol{\nu}_1,\ol{\nu}_2)$ can be determined by the Sinkhorn's iterative algorithm \cite{sinkhorn1964}, up to a common multiplicative constant. Specifically, we have
\begin{lemma}[{\cite[Lemma 2]{Cuturi2013}}]
For $\tau > 0$, the solution $\ol M$ of \eqref{eq:entropy-Wasserstein} is unique and has the form $\ol M = \mathrm{diag}(v_1) K
\mathrm{diag}(v_2)$, where the two vectors $v_1, v_2 \in \mathbb R^n$ are uniquely defined up to a multiplicative 
factor.
\end{lemma}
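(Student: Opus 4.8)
The plan is to split the statement into three parts: existence and uniqueness of the primal minimiser $\ol{M}$; the multiplicative (matrix-scaling) form $\ol{M}=\Diag(v_1)K\Diag(v_2)$; and uniqueness of $v_1,v_2$ up to a reciprocal scalar.

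First I would settle existence and uniqueness of $\ol{M}$. With the notation \eqref{eq:simplified_notation}, the feasible set of \eqref{eq:entropy-Wasserstein} is the transportation polytope $\Pi(\mu_1,\mu_2)$ of \eqref{eq:def-Pi-mui-muj}, which is nonempty (it contains $\mu_1\mu_2^{\T}$), convex and compact as soon as $\mu_1,\mu_2\in\Delta_n$. The objective $M\mapsto\la\Theta,M\ra-\tau H(M)=\la\Theta,M\ra+\tau\sum_{i,j}M_{ij}\log M_{ij}$ is a linear term plus $\tau>0$ times a sum of copies of $t\mapsto t\log t$, which is strictly convex on $[0,\infty)$ with the convention $0\log 0:=0$; hence it is strictly convex and lower semicontinuous on $\Pi(\mu_1,\mu_2)$, and therefore attains its infimum at exactly one point. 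This is $\ol{M}$.

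Next I would read off the matrix-scaling form from the optimality conditions. Applying Lemma~\ref{lem:dtau-primal-dual} with $F_\tau=-\tau H$ — so that $G_\tau(M)=-\tau H(M)+\delta_{\R_{+}^{n\times n}}(M)$ and, up to the scaling convention used in \eqref{eq:dualObjExp}, $G_\tau^{\ast}(Y)=\tau\sum_{k,l}e^{Y_{kl}/\tau}$ is continuously differentiable — and using that strong duality holds in the full-support geometric setting (Remark~\ref{rem:strong-duality}), a dual maximiser $\ol{\nu}=(\ol{\nu}_1,\ol{\nu}_2)$ exists and \eqref{eq:smoothed-W-dual-OC} gives $\ol{M}=\exp\!\big[\tfrac{1}{\tau}(\mc{A}^{\T}\ol{\nu}-\Theta)\big]$. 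Substituting $\mc{A}^{\T}\ol{\nu}=\ol{\nu}_1\eins_n^{\T}+\eins_n\ol{\nu}_2^{\T}$ from \eqref{eq:def-AT} and reading off entries yields $\ol{M}_{ij}=e^{\ol{\nu}_{1,i}/\tau}K_{ij}e^{\ol{\nu}_{2,j}/\tau}$ with $K=e^{-\Theta/\tau}$, i.e.\ $\ol{M}=\Diag(v_1)K\Diag(v_2)$ with $v_1=e^{\ol{\nu}_1/\tau}>0$ and $v_2=e^{\ol{\nu}_2/\tau}>0$ — exactly the chain of equalities displayed just before the Lemma. A self-contained variant avoids Lemma~\ref{lem:dtau-primal-dual}: since $\partial_t(t\log t)=\log t+1\to-\infty$ as $t\to 0^+$, the constraint $M\ge 0$ is inactive at the optimum, so stationarity of the Lagrangian of the equality-constrained problem already gives $\tau(\log\ol{M}_{ij}+1)+\Theta_{ij}=(\mc{A}^{\T}\ol{\nu})_{ij}$, hence the same factorisation after exponentiating.

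Finally, for uniqueness of $v_1,v_2$ up to a factor: if $\Diag(v_1)K\Diag(v_2)=\Diag(v_1')K\Diag(v_2')$ with all four vectors in $\R_{++}^{n}$, then since $K_{ij}=e^{-\Theta_{ij}/\tau}>0$ for all $i,j$ we get $v_{1,i}v_{2,j}=v_{1,i}'v_{2,j}'$, hence $v_{1,i}/v_{1,i}'=v_{2,j}'/v_{2,j}$; the left-hand side is independent of $j$ and the right-hand side of $i$, so both equal a constant $c>0$, whence $v_1=c\,v_1'$ and $v_2=c^{-1}v_2'$. The only delicate point I anticipate is the \emph{existence} of the dual maximiser $\ol{\nu}$, that is, that the scaling representation is not vacuous: this is exactly where the full-support hypothesis of the geometric setting enters, through strong duality, and one either invokes it via Remark~\ref{rem:strong-duality} or, alternatively, appeals to the Sinkhorn--Knopp theorem to produce positive vectors $v_1,v_2$ scaling $K$ to the prescribed marginals and then identifies $\Diag(v_1)K\Diag(v_2)$ with the unique minimiser from the first step. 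The strict-convexity argument and the separation-of-variables uniqueness argument are routine.
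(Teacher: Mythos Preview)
Your proposal is correct. Note, however, that the paper does not supply its own proof of this lemma: it is quoted from \cite[Lemma~2]{Cuturi2013} and stated without proof. The paper only sketches the derivation of the factorised form $\ol{M}=\Diag(v_1)K\Diag(v_2)$ in the display immediately \emph{preceding} the lemma, by specialising the optimality condition \eqref{eq:dtau-opt-conditions-a} to the entropy case (equation \eqref{eq:smoothed-W-dual-OC}) and rearranging. Your second step reproduces exactly this computation, and your first and third steps (strict convexity of $-\tau H$ on the compact polytope $\Pi(\mu_1,\mu_2)$ for uniqueness of $\ol{M}$; the separation-of-variables argument exploiting $K_{ij}>0$ for uniqueness of $v_1,v_2$ up to reciprocal scaling) supply precisely what the paper omits and defers to the citation. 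Your handling of the only subtle point --- existence of the dual maximiser $\ol{\nu}$, equivalently of positive scaling vectors --- is appropriate: either invoke strong duality under the full-support hypothesis (Remark~\ref{rem:strong-duality}), or bypass duality entirely via the Sinkhorn--Knopp theorem for strictly positive matrices, both of which are standard.
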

\noindent
Accordingly, by setting 
\begin{equation}\label{eq:vi-nui}
v_{1} := \exp({\tfrac{\nu_{1}}{\tau}}),\qquad
v_{2} := \exp({\tfrac{\nu_{2}}{\tau}}),
\end{equation}
the corresponding fixed point iterations read
\begin{equation}\label{eq:vij-single-updates}
v_{1}^{(k+1)} = \frac{\mu_{1}}{K \Big(\frac{\mu_{2}}{K^\top v_{1}^{(k)}}\Big)},\qquad
v_{2}^{(k+1)} = \frac{\mu_2}{K^\top \Big(\frac{\mu_1}{K v_{2}^{(k)}}\Big)},
\end{equation}
which are iterated until the change between consecutive iterates is small enough. Denoting the iterates after convergence by $\ol v_{1}, \ol v_{2}$, resubstitution into \eqref{eq:vi-nui} determines the optimal dual variables
\begin{equation}\label{eq:optimal-dual-variables-sinkhorn}
\overline{\nu}_{1} = \tau \log \ol v_{1},\qquad
\overline{\nu}_{2} = \tau \log \ol v_{2}. 
\end{equation}
Due to Theorem \ref{theo:gradientWasserstein}, the local Wasserstein gradients then finally are given by
\begin{equation}\label{eq:edge-tangents}
    \nabla d_{\Theta, \tau}(\mu_1, \mu_2) = \bpm P_{T}(\ol{\nu}_{1}) \\ P_{T}(\ol{\nu}_{2}) \epm,
\end{equation}
where the projection $P_T$ due to \eqref{eq:projection_matrix_Rn_to_T} removes the common multiplicative constant resulting from  Sinkhorn's algorithm.
\begin{figure}
\centerline{
\hbox{\hspace{-3em}\includegraphics[width=1.15\textwidth]{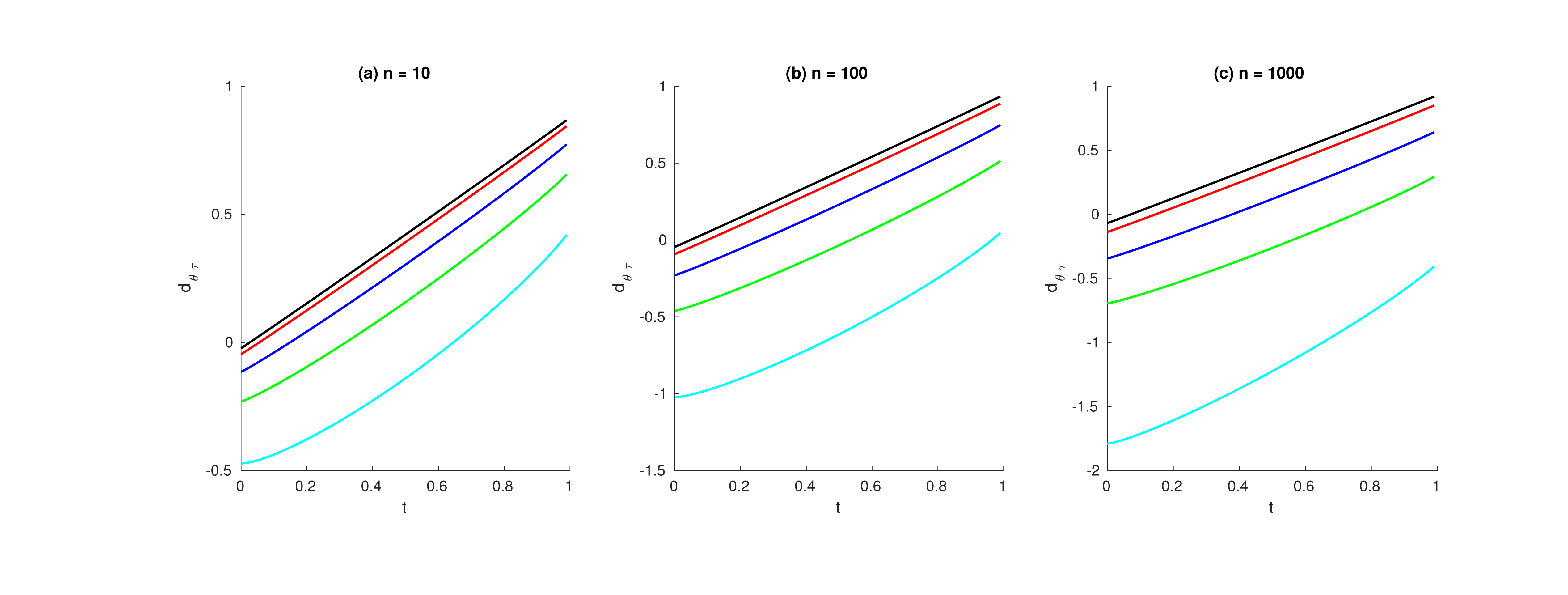}}
}
\caption{
The plots show the entropy-regularized Wasserstein distance $d_{\Theta, \tau}(c, \gamma(t))$ for varying parameter $\tau$ and 
increasing numbers $n$ of labels. Here, $\gamma(t) = t (e_1 - c) + c\in \Delta_n$, with $t \in [0, 1]$, is the line segment connecting the 
barycenter $c = \frac{1}{n} \eins$ to the vertex $e_1$ on the simplex $\Delta_n$. The cost matrix $\Theta$ is given by the 
Potts regularizer \eqref{eq:definition_potts_reg}. In all three plots the parameter $\tau$ has been chosen 
as $\tau = \frac{1}{5}$ (cyan), $\tau = \frac{1}{10}$ (green), $\tau = \frac{1}{20}$ (blue), $\tau = \frac{1}{50}$ (red) and $\tau = \frac{1}{100}$ (black).
Even though the values of the approximation of the distance itself differ considerably, the \textit{slope} of the distance, is already approximated quite well for larger values of $\tau$,    uniformly for small up to large numbers $n$ of labels. 
}
\label{fig:WassersteinLineApprox}
\end{figure}
\begin{figure}
\centerline{
\includegraphics[width=1\textwidth]{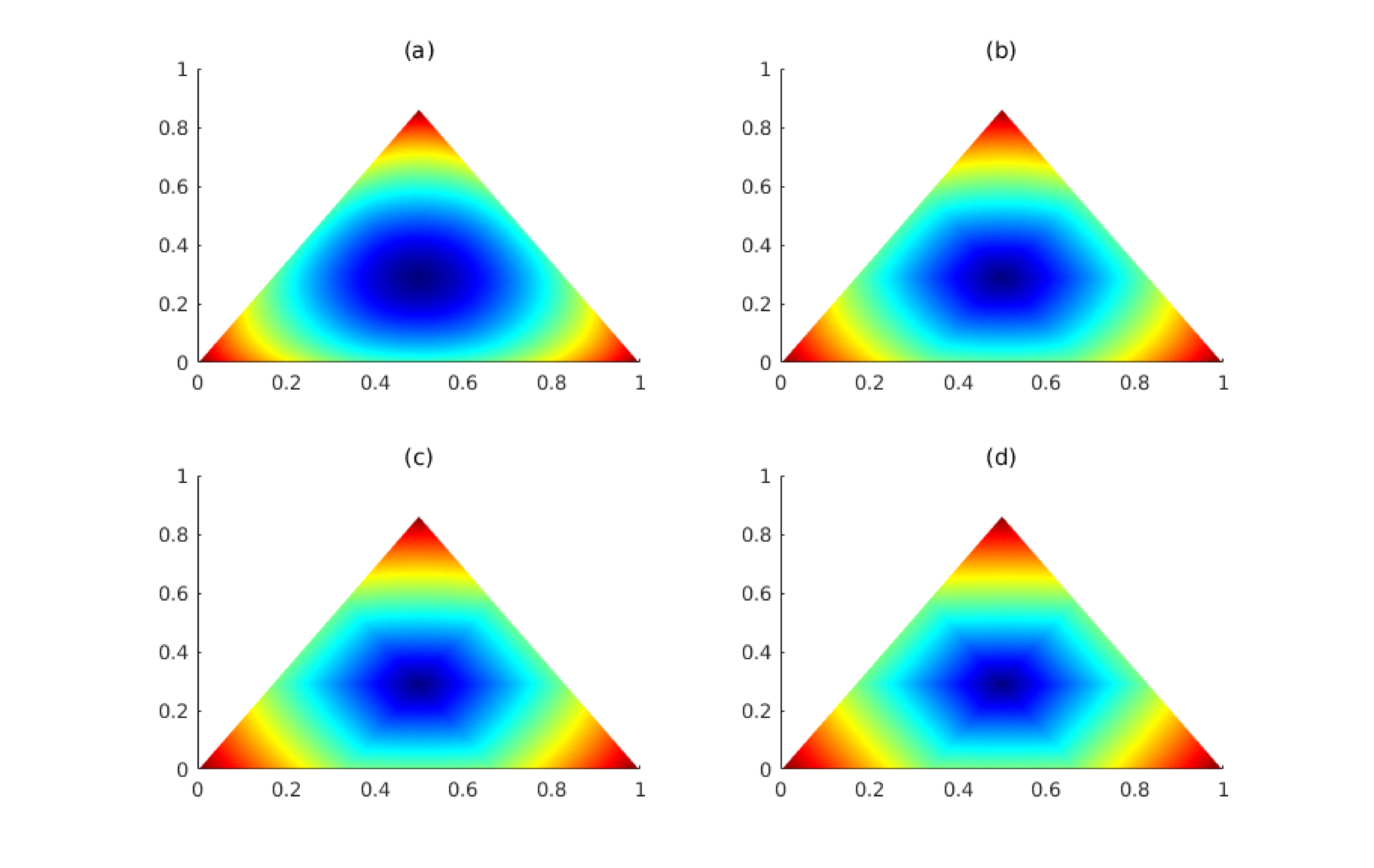}
}
\caption{
The plot shows the entropy-regularized Wasserstein distance with the Potts regularizer \eqref{eq:definition_potts_reg} from the barycenter to every point on $\Delta_3$ for different values of $\tau$: (a) $\tau = \tfrac{1}{5}$, (b) $\tau = \tfrac{1}{10}$, (c) $\tau = \tfrac{1}{20}$ and (d) $\tau = \tfrac{1}{50}$.
These plots confirm that even for relatively large values of $\tau$, e.g. $\tfrac{1}{10}$ and $\tfrac{1}{20}$, the gradient of 
the Wasserstein distance is sufficiently accurate approximated so as to obtain valid descent directions for distance minimization.
}
\label{fig:WassersteinApprox3Labels}
\end{figure}

\vspace{0.25cm}
While the linear convergence rate of Sinkhorn's algorithm is known theoretically \cite{Knight2008}, the numbers of iterations required in practice significantly depends on the smoothing parameter $\tau$.
In addition, for smaller values of $\tau$, an entry of the matrix $K = \exp\big(- \tfrac{1}{\tau}\Theta \big)$ might be too small to be represented on a computer, due to machine precision. As a consequence, the matrix $K$ might have entries which are numerically treated as zeros and Sinkhorn's algorithm does not necessarily converge to the true optimal solution.

Fortunately, our approach does allow larger values of $\tau$ because merely a sufficiently accurate approximation of the \textit{gradient} of the Wasserstein distance is required, rather than an approximation of the Wasserstein distance itself, to obtain valid \textit{descent} directions. Figures \ref{fig:WassersteinLineApprox} and \ref{fig:WassersteinApprox3Labels} demonstrate that this indeed holds for relatively large values of $\tau$, e.g. $\tau \in \{\tfrac{1}{5}, \tfrac{1}{10}, \tfrac{1}{15}\}$, no matter if the number of labels is $n=10$ or $n=1000$.

\subsection{Termination Criterion}\label{sec:termination}
In all experiments, the normalized averaged entropy
\begin{equation}\label{eq:normalized_avg_entropy}
  \frac{1}{m \log(n)} H(W) = - \frac{1}{m \log(n)}\sum_{i\in \mc V} \sum_{k = 1}^n W_{i,k} \log\big(W_{i,k}\big),\quad \text{for} \quad W \in \mc W,
\end{equation}
was used as a termination criterion, i.e. if the value drops below a certain threshold the algorithm is terminated. Due to this normalization,
the value does not depend on the number of labels and thus the threshold is comparable across different models with a varying number of pixels and labels.

For example, a threshold of $10^{-4}$ means in practice that, up to a small fraction of nodes $i\in \mc V$, all rows $W_i$ of the assignment matrix $W$
are very close to unit vectors and thus indicate an almost unique assignment of the prototypes or labels to the observed data.

\section{Experiments}
\label{sec:Experiments}

We demonstrate in this section main properties of our approach. The dependency of label assignment on the smoothing parameter $\tau$ and the rounding parameter $\alpha$ is illustrated in Section \ref{subsec:parameter_influence}. We comprehensively explored the space of binary graphical models defined on the minimal cyclic graph, the complete graph with three vertices $\mc{K}^{3}$, whose LP-relaxation is known to have a substantial part of nonbinary vertices. The results reported in Section \ref{sec:experiments-K3} exhibit a relationship between $\alpha$ and $\tau$ so that in fact a single effective parameter only controls the trade-off between accuracy of optimization and the computational costs. A competitive evaluation of our approach in Section \ref{sec:comparison-to-other-methods} together with two established and widely applied approaches, sequential tree-reweighted message passing (TRWS) \cite{Kolmogorov-TRMP06} and loopy belief propagation, reveals similar performance of our approach.
Finally, Section \ref{sec:non-potts-prior} demonstrates for a graphical model with pronounced \textit{non}-uniform pairwise model parameters (non-Potts prior) that our geometric approach accurately takes them into account. 

All experiments have been selected to illustrate properties of our approach, rather than to demonstrate and work out a particular application which will be the subject of follow-up work.

\subsection{Parameter Influence}
\label{subsec:parameter_influence}
We assessed the parameter influence of our geometric approach by applying it to a labeling problem. The task is to label a noisy RGB-image 
$f \colon \mc V \to [0, 1]^3$, depicted in Fig. \ref{fig:paramInfluence}, on the grid graph $\mc G = (\mc V, \mc E)$ with minimal neighborhood size $|\mc{N}(i)| = 3\times 3,\; i \in \mc{V}$. 
Prototypical colors $\mc P = \{ l_1, \ldots, l_8\} \subset [0, 1]^3$ (Fig. \ref{fig:paramInfluence}) were used as labels. The unary (or data term) is defined using the $\|\cdot\|_1$ distance
and a scaling factor $\rho>0$ by
\begin{equation}
  \theta_i = \frac{1}{\rho}\big(\|f(i) - l_1\|_1, \ldots, \|f(i) - l_8\|_1 \big), \quad i\in \mc V,
\end{equation}
and Potts regularization is used for defining the pairwise parameters of the model
\begin{equation}\label{eq:definition_potts_reg}
  \big( \theta_{ij} \big)_{k,r} = 1 - \delta_{k, r},\quad \text{where}\quad \delta_{k, r} = \begin{cases} 1 & \text{if } k = r,\\0 & \text{else},\end{cases}
    \quad \text{for}\quad ij \in \mc E.
\end{equation}
The feature scaling factor was set to $\rho = 0.3$, the step-size $h = 0.1$ was used for numerically integrating the Riemannian descent flow, and the threshold for the normalized average entropy termination 
criterion \eqref{eq:normalized_avg_entropy} was set to $10^{-4}$.

Fig.~\ref{fig:parameterInfluence_nAvgEntropy_t10},  top, displays the empirical convergence rate depending on the rounding parameter $\alpha$, for a fixed value of the smoothing parameter $\tau=0.1$ that ensures a sufficiently accurate approximation of the Wasserstein distance gradients and hence of the Riemannian descent flow. Fig.~\ref{fig:parameterInfluence_nAvgEntropy_t10},  bottom, shows the interplay between minimizing the smoothed energy $E_\tau$ \eqref{eq:def-J-smooth} and the rounding mechanism induced by the entropy $H$ \eqref{eq-entropy-formula_W} in $f_{\tau, \alpha}$ \eqref{eq:def-f-tau-alpha}. Less agressive rounding in terms of smaller values of $\alpha$ leads to a more accurate numerical integration of the flow using a larger number of iterations, and thus to higher quality label assignments with a lower energy of the objective function. This latter aspect is demonstrated quantitatively in Section \ref{sec:experiments-K3}. For too small values of the rounding parameter $\alpha$, the algorithm does naturally not converge to an integral solution.

Fig.~\ref{fig:paramInfluence} shows the influence of the rounding strength $\alpha$ and the smoothing parameter $\tau$ for the Wasserstein distance.
All images marked with an '$\ast$' in the lower right corner do not show an integral solution, which means that the normalized average 
entropy \eqref{eq:normalized_avg_entropy} of the assignment vectors $W_{i}$ did not drop below the threshold during the iteration and thus, even though the assignments show a clear tendency, they stayed far from integral solutions. As just explained for Fig.~\ref{fig:parameterInfluence_nAvgEntropy_t10}, this is not a deficiency of our approach but must happen if either no rounding is performed ($\alpha = 0$) or if the influence of  rounding is too small compared to the 
smoothing of the Wasserstein distance (e.g.~$\alpha = 0.1$ and $\tau = 0.5$). 
Increasing the strength of rounding (larger $\alpha$) leads to a faster decrease in entropy (cf. Fig.~\ref{fig:parameterInfluence_nAvgEntropy_t10} for the case of $\tau = 0.1$) and therefore to an earlier convergence of the process to a specific labeling. Thus, a more aggressive rounding scheme yields a less regularized result 
due to the rapid decision for a labeling at an early stage of the algorithm.
\begin{figure}[h]
\includegraphics[width=0.8\textwidth]{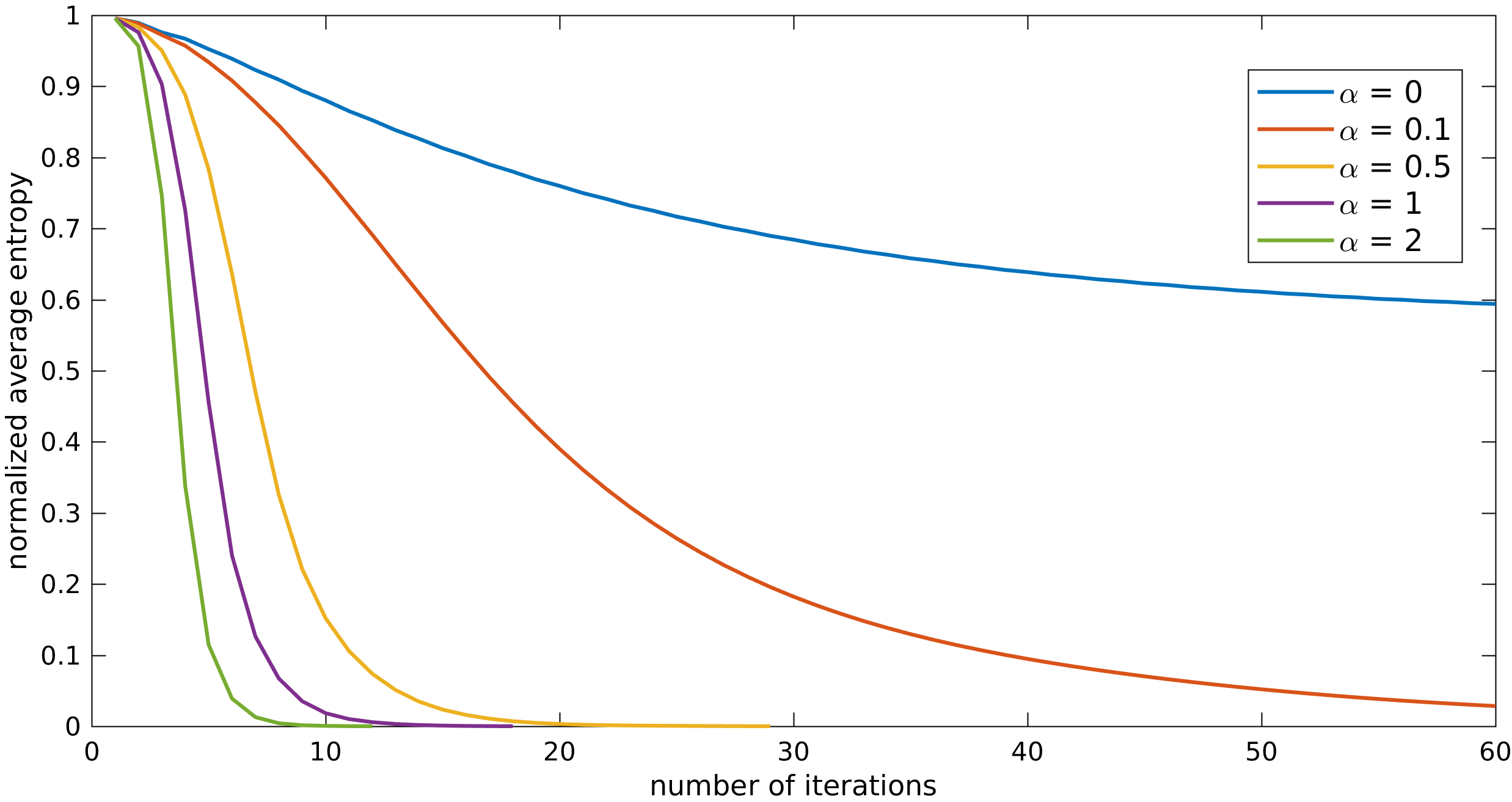} \\
\includegraphics[width=0.8\textwidth]{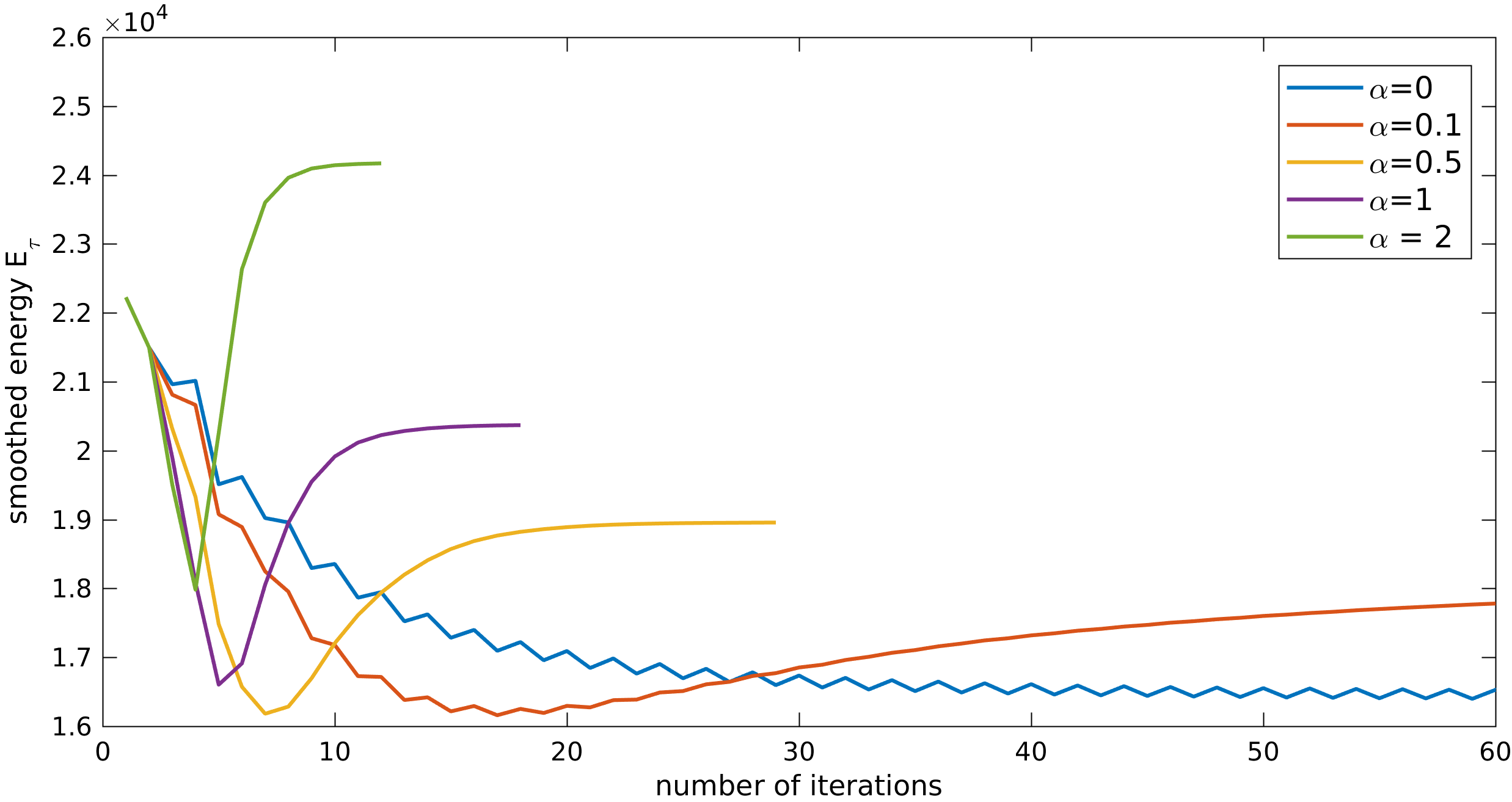}
\caption{
The normalized average entropy \eqref{eq:normalized_avg_entropy} (top) and the smoothed energy $E_\tau$ \eqref{eq:def-J-smooth} (bottom) are shown, for the smoothing parameter 
value $\tau = 0.1$, depending on the number of iterations. \textsc{Top:} With increasing values of the rounding parameter $\alpha$, the entropy drops more rapidly and hence converges faster 
to an integral labeling. \textsc{Bottom:} Two phases of the algorithm depending on the values for $\alpha$ are clearly visible. In the first phase, the smoothed 
energy $E_\tau$ is minimized up to the point where rounding takes over in the second phase. Accordingly, the sequence of energy values first drops down to lower values corresponding to the problem \textit{relaxation} and then adopts a higher energy level corresponding to an \textit{integral} solution. 
For smaller values of the rounding parameter $\alpha$, the algorithm spends more time on minimizing the smoothed energy. This generally results in lower energy values even \textit{after} rounding, i.e.~in higher quality labelings.
}
\label{fig:parameterInfluence_nAvgEntropy_t10}
\end{figure}
\newcommand{\imgsize}{1\linewidth}
\begin{figure}[h]
  \centering
  \newcommand{\xcoord}{1.9}
  \newcommand{\ycoord}{2.05}
  \small
  \begin{minipage}{0.18 \linewidth}
    \includegraphics[width=\imgsize]{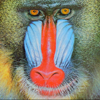} \\
    \begin{tabularx}{1\linewidth}{X}
      {Original data}
    \end{tabularx} \\
    \includegraphics[width=\imgsize]{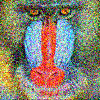} \\
    \begin{tabularx}{1\linewidth}{X}
      {Noisy data}
    \end{tabularx} \\
        \includegraphics[width=\imgsize]{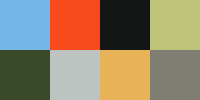} \\
    \begin{tabularx}{1\linewidth}{X}
      {Prototypes}
    \end{tabularx} \\
  \end{minipage}
  \begin{minipage}{0.6\linewidth}
    \begin{tabularx}{1\linewidth}{c X XX}
      & $\tau = {0.5}$ & ${0.1}$ & ${0.05}$ \\
      \vspace{2mm}
      \rotatebox{90}{{$0$}} \hspace{-2mm} &
      \includegraphics[width=\imgsize]{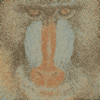}$\ast$ &
      \includegraphics[width=\imgsize]{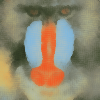}$\ast$ &
      \includegraphics[width=\imgsize]{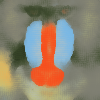}$\ast$ \\
      \vspace{2mm}
      \rotatebox{90}{{$0.1$}} \hspace{-2mm} &
      \includegraphics[width=\imgsize]{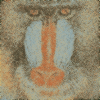}$\ast$ &
      \includegraphics[width=\imgsize]{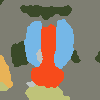} &
      \includegraphics[width=\imgsize]{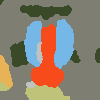} \\
      \vspace{2mm}
      \rotatebox{90}{$0.5$} \hspace{-2mm} &
      \includegraphics[width=\imgsize]{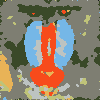} &
      \includegraphics[width=\imgsize]{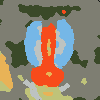} &
      \includegraphics[width=\imgsize]{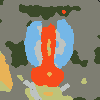} \\
      \vspace{2mm}
      \rotatebox{90}{$1$} \hspace{-2mm} &
      \includegraphics[width=\imgsize]{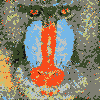} &
      \includegraphics[width=\imgsize]{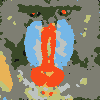} &
      \includegraphics[width=\imgsize]{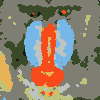} \\
      \vspace{2mm}
      \rotatebox{90}{$\alpha = 2$} \hspace{-2mm} &
      \includegraphics[width=\imgsize]{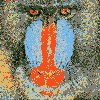} &
      \includegraphics[width=\imgsize]{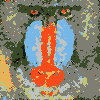} &
      \includegraphics[width=\imgsize]{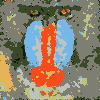} \\
      \end{tabularx}
  \end{minipage}
  \caption{Influence of the rounding parameter $\alpha$ and the smoothing parameter $\tau$ on the assignment of $8$ prototypical labels to noisy input data. All images marked with an '$\ast$' do not show integral solutions due to smoothing too strongly the Wasserstein distance in terms of $\tau$ relative to $\alpha$, which overcompensates the effect of rounding. Likewise, smoothing too strongly the Wasserstein distance (left column, $\tau=0.5$) yields poor approximations of the objective function gradient and to erroneous label assignments. The remaining parameter regime, i.e.~smoothing below a  reasonably large upper bound $\tau=0.1$, leads to fast numerical convergence, and the label assignment can be precisely controlled by the rounding parameter $\alpha$.
}
\label{fig:paramInfluence}
\end{figure}
\begin{figure}[h]
\centerline{
\includegraphics[width=0.9\textwidth]{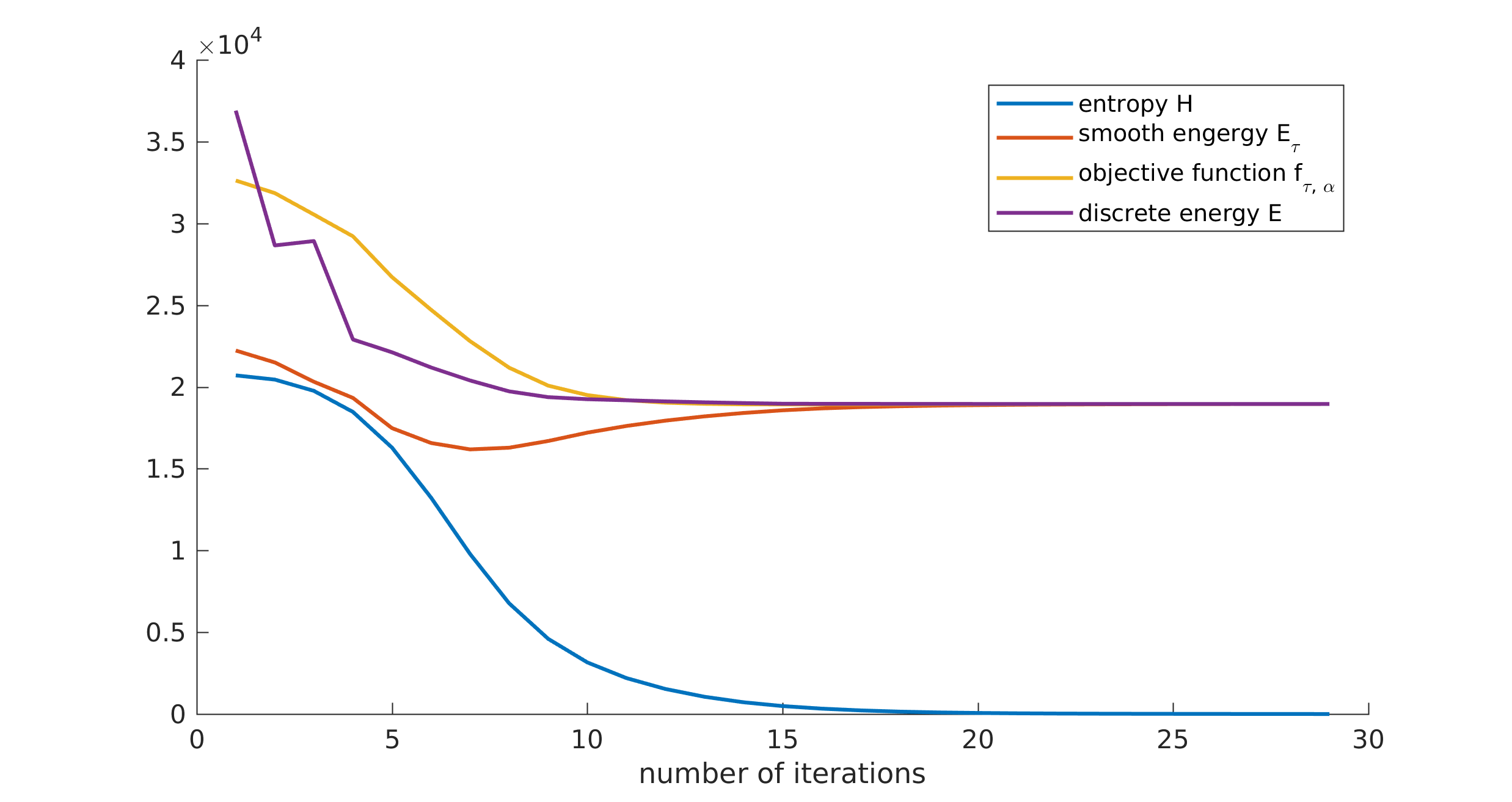}
}
\caption{ 
Connection between the objective function $f_{\tau, \alpha}$ \eqref{eq:def-f-tau-alpha} and the discrete energy $E$ \eqref{eq:def-E} of the underlying graphical model, 
for a fixed value $\alpha = 0.5$. Minimizing $f_{\tau, \alpha}$ (yellow) by our approach also minimizes $E$ (violet), which was calculated for this illustration by   rounding the assignment vectors at every iterative step. Additionally, as already discussed in more detail in connection with Fig.~\ref{fig:parameterInfluence_nAvgEntropy_t10}, the interplay between the two terms of 
$f_{\tau, \alpha} = E_\tau + \alpha H$ is shown, where $E_\tau$ (orange) denotes the smoothed energy \eqref{eq:def-J-smooth} and $H$ (blue) the 
entropy \eqref{eq-entropy-formula_W} causing rounding.
}
\label{fig:parameterInfluence_analysis_t10}
\end{figure}

On the other hand, choosing the smoothing parameter $\tau$ too large lead to poor approximations of the Wasserstein distance gradients and consequently to erroneous non-regularized labelings, as displayed in the left column of Fig.~\ref{fig:paramInfluence} corresponding to $\tau = 0.5$. 
Once $\tau$ is small enough, in our experiments: $\tau < 0.1$, the Wasserstein distance gradients are properly approximated, and the label assignment is regularized as expected and can be controlled by $\alpha$. 
In particular, this upper bound on $\tau$ is sufficiently large to ensure very rapid convergence of the fixed point iteration for computing the Wasserstein distance gradients.

Fig.~\ref{fig:parameterInfluence_analysis_t10} shows the connection between the objective function $f_{\tau, \alpha}$ \eqref{eq:def-f-tau-alpha} 
and the discrete energy $E$ \eqref{eq:def-E} of the underlying graphical model. Minimizing $f_{\tau, \alpha}$ (yellow curve) using our 
approach also minimizes the discrete energy $E$ (violet curve), which was calculated by rounding the assignment vectors after each iterative step. Fig.~\ref{fig:parameterInfluence_analysis_t10} also shows the interplay between the two terms in $f_{\tau, \alpha} = E_\tau + \alpha H$, with smoothed energy \eqref{eq:def-J-smooth} $E_\tau$ plotted as orange curve and with the entropy \eqref{eq-entropy-formula_W} plotted as blue curve. These curves illustrate (i) the smooth combination of optimization and rounding into a single process, and (ii) that the original discrete energy \eqref{eq:def-E} is effectively minimized by this smooth process.

\subsection{Exploring all Cyclic Graphical Models on $\mc{K}^{3}$}\label{sec:experiments-K3}
In this section, we report an exhaustive exploration of all possible binary models, $\mc{X}=\{0,1\}$, on the minimal cyclic graph $\mc{K}^{3}$ (Fig.~\ref{fig:K3}, left panel). Due to the single cycle, models exist where the LP relaxation \eqref{eq:LP-relaxation} returns a non-binary solution (red part of the right panel of Fig.~\ref{fig:K3}). As a consequence, evaluating such models with our geometric approach for minimizing \eqref{eq:def-J-smooth} enables to check two properties: 
\begin{enumerate}[(i)]
\item
Whenever solving the LP relaxation \eqref{eq:LP-relaxation} by convex programming returns the global binary minimum of \eqref{eq:def-E} as solution, we assess if our geometric approach based on the smooth approximation \eqref{eq:def-J-smooth} returns this solution as well. 
\item
Whenever the LP relaxation has a \textit{non-binary} vector as global solution, which therefore is \textit{not} optimal for the labeling problem \eqref{eq:def-E}, we assess the rounding property of our approach by comparing the result with the \textit{correct} binary labeling globally minimizing \eqref{eq:def-E}.
\end{enumerate}

The graph $\mc{K}^{3}$ enables us to specify the so-called \textit{marginal polytope} $\mc{P}_{\mc{K}^{3}}$ whose vertices (extreme points) are the feasible binary combinatorial solutions that correspond to valid labelings (cf.~Section \ref{sec:Overview}), and to examine the difference to the local polytope $\mc{L}_{\mc{K}^{3}}$ whose representation only involves a subset of the constraints corresponding to $\mc{P}_{\mc{K}^{3}}$. We refer to \cite{CorrelationPolytope-89} for background and details.

\begin{figure}[h]
\centerline{
\includegraphics[width=0.35\textwidth]{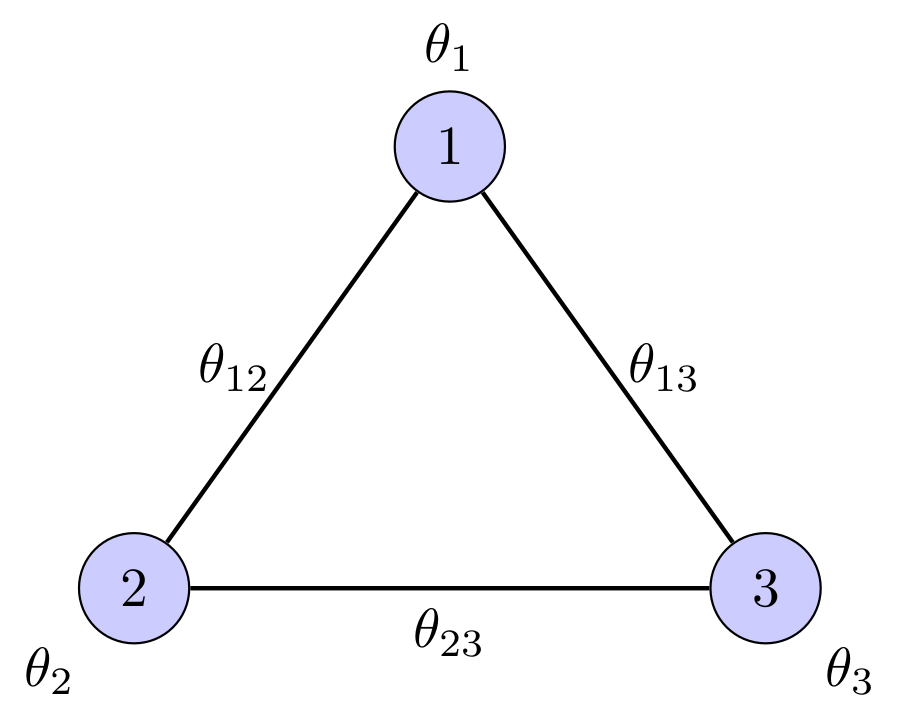} \hspace{0.1\textwidth}
\includegraphics[width=0.18\textwidth]{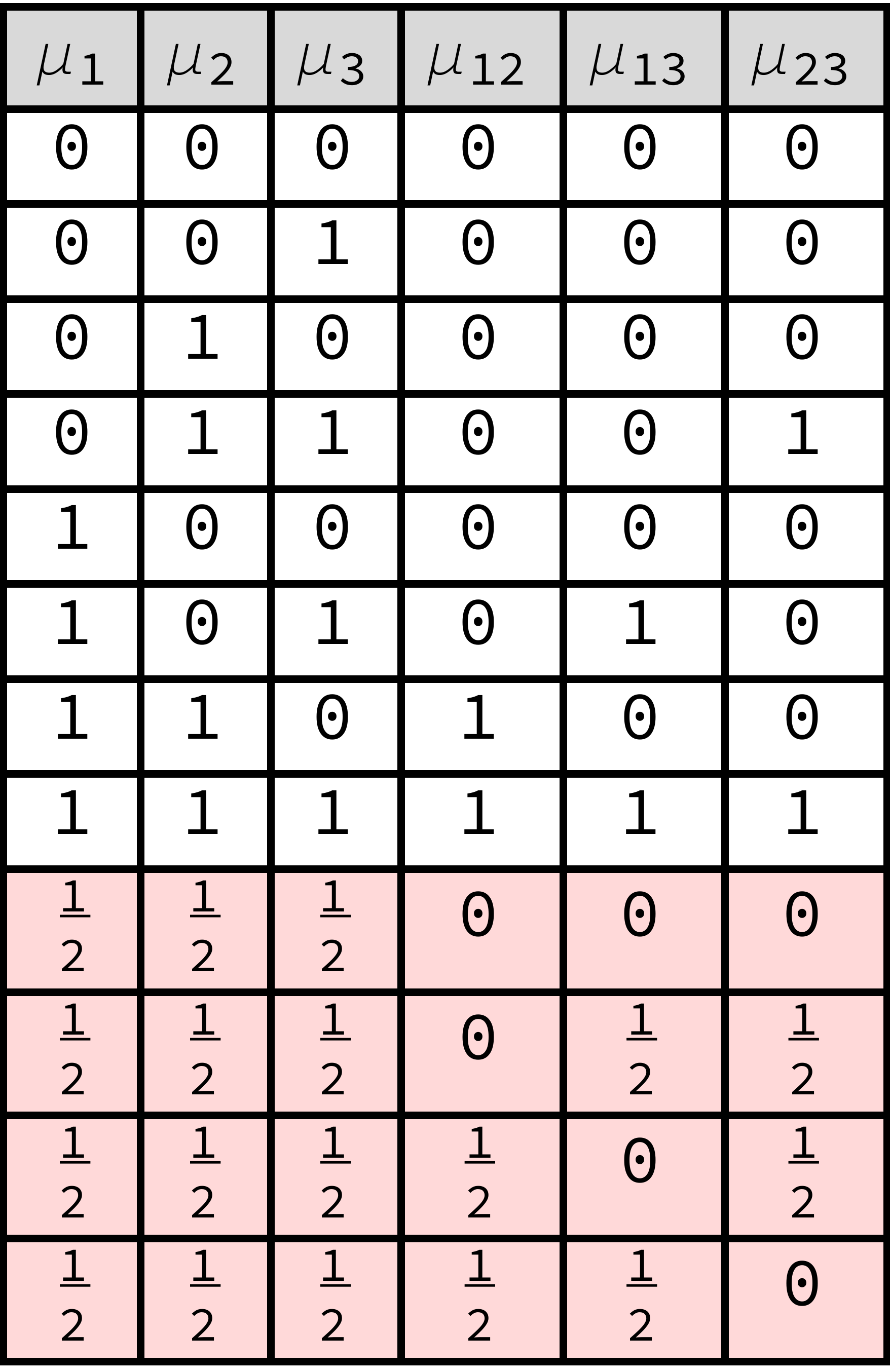}
}
\caption{
\textsc{left:} The minimal binary cyclic graphical model $\mc{K}^{3}=(\mc{V},\mc{E})=(\{1,2,3\},\{12,13,23\})$. \textsc{right:} The $8$ vertices (white background) of the minimally represented marginal polytope $\mc{P}_{\mc{K}^{3}} \subset \R_{+}^{6}$ and the $4$ additional non-integer vertices (red background) of the minimally represented local polytope $\mc{L}_{\mc{K}^{3}} \subset \R_{+}^{6}$.
}
\label{fig:K3}
\end{figure}

The constraints are more conveniently stated using the so-called \textit{minimal representation} of binary graphical models \cite[Sect.~3.2]{WainrightJordan08}, that involves the variables\footnote{We reuse the symbol $\mu$ for simplicity and only `overload' in this subsection the symbols $\mu_{i}, \mu_{ij}$ for local vectors \eqref{eq:def-mu-Pi} by the variables on the left-hand sides of \eqref{eq:def-minimal-mui-muij}}
\begin{equation}\label{eq:def-minimal-mui-muij}
\mu_{i}:=\mu_{i}(1),\; i \in \mc{V},\qquad \mu_{ij}:=\mu_{i}(1) \mu_{j}(1),\; ij \in \mc{E}
\end{equation}
and encodes the local vectors \eqref{eq:def-mu-Pi} by
\begin{equation}
\bpm  1-\mu_{i} \\ \mu_{i} \epm 
\;\leftarrow\;
\bpm \mu_{i}(0) \\ \mu_{i}(1) \epm,
\qquad\qquad
\bpm (1-\mu_{i})(1-\mu_{j}) \\
(1-\mu_{i}) \mu_{j} \\ \mu_{i} (1-\mu_{j}) \\
\mu_{ij} \epm
\;\leftarrow\;
\bpm \mu_{ij}(0,0) \\ \mu_{ij}(0,1) \\ \mu_{ij}(1,0) \\ \mu_{ij}(1,1) \epm.
\end{equation}
Thus, it suffices to use a single variable $\mu_{i}$ for every node $i \in \mc{V}$ instead of two variables $\mu_{i}(0), \mu_{i}(1)$, and also a single variable $\mu_{ij}$ for every edge $ij \in \mc{E}$ instead of four variables $\mu_{ij}(0,0),\mu_{ij}(0,1),\mu_{ij}(1,0),\mu_{ij}(1,1)$.
The \textit{local} polytope constraints \eqref{eq:def-mu-Pi} then take the form
\begin{equation}\label{eq:constraints-L3}
0 \leq \mu_{ij},\qquad
\mu_{ij} \leq \mu_{i},\qquad
\mu_{ij} \leq \mu_{j},\qquad
\mu_{i}+\mu_{j}-\mu_{ij} \leq 1,\qquad \forall ij \in \mc{E}.
\end{equation}
The \textit{marginal} polytope constraints additionally involve the so-called triangle inequalities \cite{Deza-Laurent-97}
\begin{subequations}\label{eq:constraints-P3}
\begin{gather}
\sum_{i \in \mc{V}} \mu_{i} - \sum_{jk \in \mc{E}} \mu_{jk} \leq 1, \\
\mu_{12}+\mu_{13}-\mu_{23} \leq \mu_{1},\qquad
\mu_{12}-\mu_{13}+\mu_{23} \leq \mu_{2},\qquad
-\mu_{12}+\mu_{13}+\mu_{23} \leq \mu_{3}.
\end{gather}
\end{subequations}
Figure \ref{fig:K3}, right panel, lists the $8$ vertices of $\mc{P}_{\mc{K}^{3}}$ and the $4$ additional vertices of $\mc{L}_{\mc{K}^{3}}$ that arise when dropping the subset of constraints \eqref{eq:constraints-P3}.

We evaluated $10^{5}$ models generated by randomly sampling the model parameters \eqref{eq:def-theta-i-theta-ij}: With $\mc{U}[a,b]$ denoting the uniform distribution on the interval $[a,b] \subset \R$, we set
\begin{equation}
\theta_{i}=\bpm 1-p \\ p \epm - \frac{1}{2} \bpm 1 \\ 1 \epm,\quad
p \sim \mc{U}[0,1],\qquad
\theta_{ij} = \bpm p_{1} & p_{2} \\ p_{3} & p_{4} \epm,\quad
p_{i} \sim \mc{U}[-2,2],\; i \in [4].
\end{equation}
Note the different scale, $\theta_{i} \in [-\frac{1}{2},+\frac{1}{2}]^{2}$, $\theta_{ij} \in [-2,+2]^{2 \times 2}$, which results in a larger influence of the pairwise terms and hence make inference more difficult. Suppose, for example, that the diagonal terms of $\theta_{ij}$ are large, which favours the assignment of \textit{different} labels to the nodes $1,2,3 \in \mc{V}$. Then assigning say labels $0$ and $1$ to the vertices $1$ and $2$, respectively, will inherently lead to a large energy contribution due to the assignment to node $3$, no matter if this third label is $0$ or $1$, because it  must agree with the assignment either to node $1$ or to $2$.

Every \textit{binary} vertex listed by Fig.~\ref{fig:K3}, right panel, is the global optimum of both the linear relaxation \eqref{eq:LP-relaxation} and the original objective function \eqref{eq:def-E} in approximately $\approx 11.94\%$ of the $10^{5}$ scenarios, whereas every \textit{non-binary} vertex is optimal in approximately $\approx 1.12\%$.

An example where a \textit{non-binary} vertex is optimal for the linear relaxation \eqref{eq:LP-relaxation} is given by the model parameter values
\begin{equation}\label{eq:model_parameter_rounding_failes}
\begin{aligned}
\theta_1 &= \bpm -0.2261\\ \phantom{-}0.2261 \epm, \quad &\theta_{12} &= \bpm -0.9184 & -1.6252\\ -1.8891 &-0.9807 \epm, \\ 
\theta_2 &= \bpm -0.4449\\ \phantom{-}0.4449 \epm, \quad &\theta_{13} &= \bpm 0.3590 & 0.0958 \\ {-1.8668} & 1.5193\epm, \\
\theta_3 &= \bpm -0.3202\\ \phantom{-}0.3202 \epm, \quad &\theta_{23} &= \bpm 1.2147 & -1.5215\\ -0.3302 & -0.0459\epm.
\end{aligned}
\end{equation}
The corresponding solutions of the marginal polytope $\mc{M}_{\mc{G}}$, the local polytope
 $\mc{L}_{\mc{G}}$ and our method are listed as  Table~\ref{table:triangle_post_processing_failes}. Due to the non-binary solution returned by the LP-relaxation, rounding in a post-processing step amounts to random guessing. In contrast, our method is able to determine the optimal solution because rounding is smoothly integrated into the overall optimization process. 

\bgroup
\newcolumntype{K}[1]{>{\centering\arraybackslash}p{#1}}
\def\arraystretch{1.3}
\begin{table}[h!]
\begin{center}
\begin{tabular}{ |K{2cm}|K{1.3cm}|*{3}{K{1.5cm}|}*{1}{|K{1.4cm}}|}
\hline
\multicolumn{2}{|c|}{}& $\mu_1$ & $\mu_2$ & $\mu_3$ & Iterations\\
\hline
 \multicolumn{2}{|c|}{Marginal Polytope $\mc{M}_{\mc{G}}$} & $1$ & $0$ & $0$ & -\\
\hline 
\multicolumn{2}{|c|}{Local Polytope $\mc{L}_{\mc{G}}$} & $0.5$ & $0.5$ & $0.5$ & -\\
\hline
\multirow{2}{*}{Our Method}   & $\alpha = 0.2$ & $0.999$ & $0.258 \mathrm{e}{-3}$ & $0.205 \mathrm{e}{-3}$ & $108$ \\
\cline{2-6}
\multirow{2}{*}{($\tau = \frac{1}{10}$)} & $\alpha = 0.5$ & $0.999$ & $0.161\mathrm{e}{-3}$ & $0.114\mathrm{e}{-4}$& $14$\\
\cline{2-6}
& $\alpha = 0.9$ & $0.999$ & $0.239\mathrm{e}{-4}$& $0.546\mathrm{e}{-6}$& $8$\\
\hline
\end{tabular}
\end{center}
\caption{Solutions $\mu=(\mu_{1},\mu_{2},\mu_{2})$ of the marginal polytope $\mc{M}_{\mc{G}}$, the local
 polytope $\mc{L}_{\mc{G}}$ and our method, for the triangle model with parameter values 
 \eqref{eq:model_parameter_rounding_failes}. Our method was applied with threshold $10^{-3}$ as  termination criterion \eqref{eq:normalized_avg_entropy}, stepsize $h=0.5$, smoothing parameter $\tau = 0.1$ and three values of the  
 rounding parameter $\alpha \in \{0.2, 0.5, 0.9\}$. By definition, minimizing over the marginal polytope returns the globally optimal discrete solution. 
 The local polytope relaxation has a fractional solution for this model, so that rounding in a post-processing step amounts to random guessing. Our approach returns the global optimum in each case, up to numerical precision.
 } 
\label{table:triangle_post_processing_failes}
\end{table}
\egroup

\begin{figure}[h]
  \centering
  \begin{tabulary}{1\linewidth}{C C}
    \includegraphics[width=0.5\textwidth]{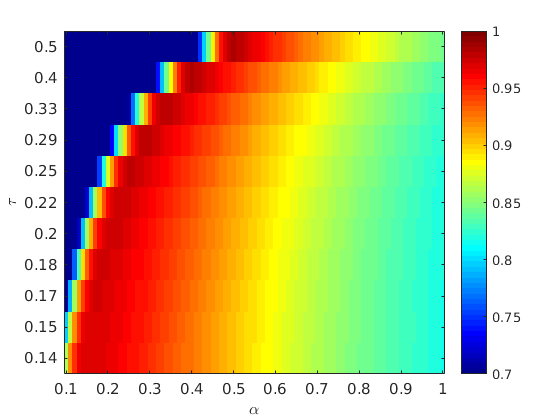} & \includegraphics[width=0.5\textwidth]{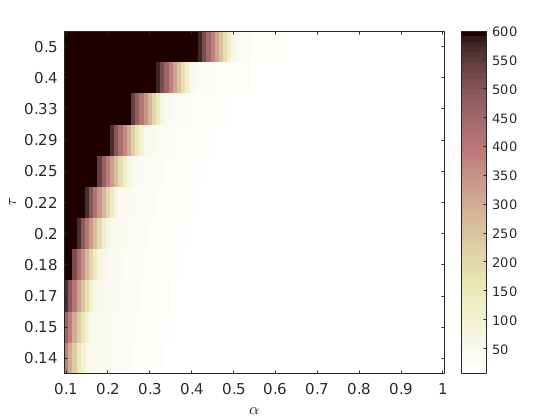} \end{tabulary}
  \caption{\textsc{Evaluation of the Minimal cyclic Graphical Model $\mathcal{K}^3$:} For every pair of parameter values $(\tau,\alpha)$, we evaluated $10^{5}$ models, 
  which were generated as explained in the text. In each experiment, we terminated the algorithm when the average entropy dropped below $10^{-3}$ or if the maximum number of 600 iterations was reached. In addition, we chose a constant step-size $h=0.5$. \textsc{Left:} The plot shows the percentage of experiments 
  where the energy returned by our algorithm had a relative error smaller then $1\%$ compared to the minimal energy of the globally optimal integral labeling. In agreement with Fig.~\ref{fig:parameterInfluence_nAvgEntropy_t10} (bottom), less aggressive rounding yielded labelings closer to the global optimum. \textsc{Right:} This plot shows the corresponding average number of iterations. The black region indicates experiments where the maximum number of 600 iterations was reached, because too strong smoothing of the Wasserstein distance (large $\tau$) overcompensated the effect of rounding (small $\alpha$), so that the convergence criterion
  \eqref{eq:normalized_avg_entropy} which measures the distance to integral solutions, cannot be satisfied. In the remaining large parameter regime, the choice of $\alpha$ enables to control the trade-off between high-quality (low-energy) solutions and computational costs.}
  \label{fig:Results-K3}
\end{figure}
\begin{figure}[h]
  \centering
  \begin{tabulary}{1\linewidth}{C C}
    \includegraphics[width=0.5\textwidth]{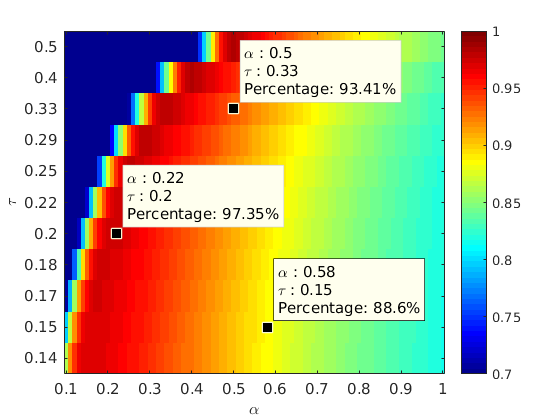} & \includegraphics[width=0.5\textwidth]{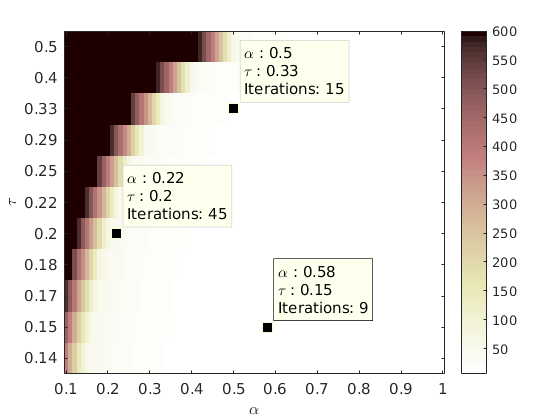} \end{tabulary}
  \caption{The plots display the same results as shown by Fig.~\ref{fig:Results-K3} together with additional data boxes and information for three 
different configurations of parameter values. The comparison of the success rate (left panel) and the number of iterations until convergence (right panel) clearly demonstrates the trade-off between accuracy of optimization and convergence rate,  depending on the \textit{rounding} variable $\alpha$ and the \textit{smoothing} parameter $\tau$. Overall, the number of iterations is significantly smaller than for first-order methods of convex programming for solving the LP relaxation, that additionally require rounding as a post-processing step to obtain an integral solution.
}
  \label{fig:Results-K3-Cursor}
\end{figure}

Fig. \ref{fig:Results-K3} presents the results of the experiments for the minimal cyclic graphical model $\mathcal{K}^3$. In order to assess clearly the influence of the \textit{rounding} parameter $\alpha$ and the 
\textit{smoothing} parameter $\tau$, we evaluated all $10^{5}$ models for \textit{each pair} of $(\tau,\alpha)$, where 
$\tau \in \{\tfrac{1}{2}, \tfrac{1}{2.5}, \ldots, \tfrac{1}{6.5}, \tfrac{1}{7}\}$ and $\alpha \in \{0.1,0.11, \dots, 0.99,1\}$. These statistics show that our algorithm converges to integral solutions, except for very unbalanced parameter values: strong smoothing with large $\tau$, weak rounding with small $\alpha$. Within the remaining broad parameter regime, parameter $\alpha$ enables to control the influence of rounding. In particular, in agreement with Fig.~\ref{fig:parameterInfluence_nAvgEntropy_t10} (bottom), less agressive rounding computed labelings closer to the global optimum. 

Fig.~\ref{fig:Results-K3-Cursor} display exactly the same results as Fig.~\ref{fig:Results-K3}, except for additional data boxes for three 
different configurations of parameter values. For instance, using $\alpha = 0.22$ and $\tau = 0.2$, our algorithm found in $97.35\%$ of the experiments 
an energy with relative error smaller then $1\%$ with respect to the optimal energy. 
In addition, the algorithm required on average 45 iterations to converge. Using instead $\alpha = 0.58$ and $\tau = 0.15$, that is more aggressive rounding in each iteration step \eqref{eq:W-update-alpha}, the average number of 
iterations reduced to 9, but the accuracy also dropped down to $88.6\%$.

Overall, these experiments clearly demonstrate
\begin{itemize}
\item the ability to control the trade-off between high-quality (low energy) labelings and computational costs in terms of $\alpha$, for all values of $\tau$ below a reasonably large upper bound; 
\item a small or very small number of iterations required to converge, depending on the choice of $\alpha$.
\end{itemize}

\subsection{Comparison to Other Methods}\label{sec:comparison-to-other-methods}
We compared our geometric approach to sequential tree-reweighted message passing (TRWS) \cite{Kolmogorov-TRMP06} and
 loopy belief propagation \cite{weiss2001comparing} (Loopy-BP) based on the OpenGM package \cite{opengm-library}. 

\renewcommand{\imgsize}{0.2\textwidth}
\begin{figure}[h]
  \centering
  \begin{tabulary}{1\linewidth}{C C}
    \includegraphics[width=\imgsize]{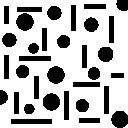} &
    \includegraphics[width=\imgsize]{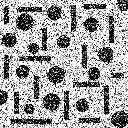} \\
    {Original data}   & {Noisy data} 
  \end{tabulary}
  \caption{Noisy image labeling problem: a binary ground truth image (left) to be recovered from noisy input data (right).}
  \label{fig:binaryImageAndNoisyVersion}
\end{figure}

For this comparison, we evaluated the performance of the methods
for a noisy binary labeling scenario depicted by Fig.~\ref{fig:binaryImageAndNoisyVersion}.
Let $f \colon \mc V \to [0, 1]$ denote the noisy image data given on the grid graph $\mc G = (\mc V, \mc E)$ with a $4$-neighborhood and $\mc X = \{0, 1\}$ as prototypes (labels). 
The following data term and Potts prior were used,
\begin{equation}\label{eq:binary_denoising_data_potts}
  \theta_i = \bpm f(i)\\ 1-f(i) \epm\quad\text{for}\quad i\in \mc V\quad \text{and}\quad \theta_{ij} = \bpm 0 &1\\ 1 &0\epm\quad\text{for}\quad ij\in \mc E\;.
\end{equation}
The threshold $10^{-4}$ was used for the normalized average entropy termination criterion \eqref{eq:normalized_avg_entropy}.
Figure \ref{fig:binaryComparison1} shows the visual reconstruction as well as the corresponding discrete energy values and percentage of correct labels for all three methods.
Our method has similar accuracy and returns a slightly better optimal discrete energy level than TRWS and Loopy-BP.

\begin{figure}
  \centering
  \begin{tabulary}{1\linewidth}{C C C}
    \includegraphics[width=\imgsize]{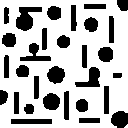} &
    \includegraphics[width=\imgsize]{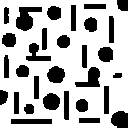} &
    \includegraphics[width=\imgsize]{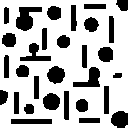} \\
    {Geometric}         & {TRWS}             & {Loopy-BP}  \\
    4977.24 / 98.31\%  & 4979.61 / 98.07\%  & 4977.75 / 98.38\%   \\
  \end{tabulary}
  \caption{Results for the noisy labeling problem from Fig.~\ref{fig:binaryImageAndNoisyVersion} using a standard data term with Potts prior, with discrete energy / accuracy values. 
  Parameter values for the geometric approach:  smoothing $\tau = 0.1$, step-size $h = 0.2$ and rounding strength $\alpha = 0.1$. The threshold for the
  termination criterion was $10^{-4}$. All methods show similar  performance.}
  \label{fig:binaryComparison1}
\end{figure}

We investigated again the influence of the \textit{rounding} mechanism by repeating the same experiment, but using different values of the rounding parameter $\alpha \in \{0.1, 1, 2, 5\}$. As shown by Fig.~\ref{fig:binaryComparison2}, the results confirm the finding of the experiments of the preceding section: More aggressive rounding
  scheme ($\alpha$ large) leads to faster convergence but yields less regularized results with higher energy values.

\begin{figure}
  \centering
  \begin{tabulary}{1\linewidth}{C C C C}
    {$\alpha = 0.1$} & {$\alpha = 1$} & {$\alpha = 2$} & {$\alpha = 5$}   \\
    \includegraphics[width=\imgsize]{Figures/binary_denoising/gmfilter_noisy_binary_4c_r1_a01_t10_h02_E4977_24_acc09831} &
    \includegraphics[width=\imgsize]{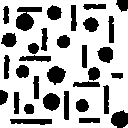} &
    \includegraphics[width=\imgsize]{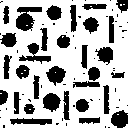} &
    \includegraphics[width=\imgsize]{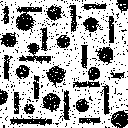} \\
    4977.24 / 98.31\%& 5071.25 / 98.46\% & 5472.71 / 96.97\%   & 7880.64 / 91.25\%       \\
  \end{tabulary}
  \caption{Results for the noisy labeling problem from Fig.~\ref{fig:binaryImageAndNoisyVersion} using different values of the rounding parameter 
  $\alpha \in \{0.1, 1, 2, 5\}$ with discrete energy / accuracy values: more aggressive rounding
  scheme ($\alpha$ large) leads to less regularized results with higher energy values. Parameter values of the geometric approach: 
  smoothing $\tau = 0.1$, step size $h = 0.2$, threshold $10^{-4}$ for termination.}
  \label{fig:binaryComparison2}
\end{figure}

\subsection{Non-Uniform (Non-Potts) Priors}\label{sec:non-potts-prior}
We examined the behavior of our approach for a non-Potts prior by applying it to a non-binary labeling problem with noisy input data, as depicted 
by Fig.~\ref{fig:nonPotts_origData_noisyData}. Our objective is to demonstrate that pre-specified pairwise model parameters (regularization) by a graphical model are properly taken into account.

The label indices corresponding to the five RGB-colors of the original image (Fig. \ref{fig:nonPotts_origData_noisyData} right) are
\begin{equation}
  \mc X = \{\ell_1 = \text{"dark blue"}, \ell_2 = \text{"light blue"}, \ell_3 = \text{"cyan"}, \ell_4 = \text{"orange"}, \ell_5 = \text{"yellow"}\}\subset [0, 1]^3\;.
\end{equation}
Let $f \colon \mc V \to [0, 1]^3$ denote the noisy input image (Fig.~\ref{fig:nonPotts_origData_noisyData}, center panel) given on the grid graph $\mc G = (\mc V, \mc E)$ with a $4$-neighborhood. This image was created by randomly selecting $40\%$ of the original image pixels and then uniformly sampling a label at 
each chosen position. The unary term was defined using the $\|\cdot\|_1$ distance and a scaling factor $\rho>0$ by
\begin{equation}
  \theta_i = \frac{1}{\rho} \big( \| f(i) - \ell_1 \|_1, \ldots, \| f(i) - \ell_5\|_1\big), \quad i\in \mc V.
\end{equation}

Now assume additional information about a labeling problem were available. For example, let the RGB-color  dark blue in the 
image represent the direction "top", light blue "bottom", yellow "right", orange "left" and cyan "center" (Fig. \ref{fig:nonPotts_origData_noisyData} left). Suppose it is known beforehand that "top" and "bottom" as well as "left" and "right" cannot be adjacent to each other but are separated 
by another label corresponding to the center. This prior knowledge about the labeling problem was taken into account by specifying non-uniform pairwise model parameters that penalize these unlikely label transitions by a factor of 10:

\begin{figure}[h]
  \centering
  \begin{tabulary}{1\linewidth}{C C C}
    \includegraphics[width=\imgsize]{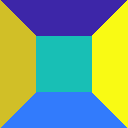} &
    \includegraphics[width=\imgsize]{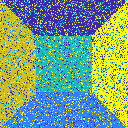} &
    \begin{tikzpicture}
      \definecolor{darkblue}{rgb}{0.2392,0.1490,0.6588}
      \definecolor{lightblue}{rgb}{0.1961, 0.4824, 0.9882}
      \definecolor{cyan}{rgb}{0.0941, 0.7490, 0.7098}
      \definecolor{orange}{rgb}{0.8196, 0.7490, 0.1529}
      \definecolor{yellow}{rgb}{0.9765, 0.9804, 0.0784}
      
      \fill[darkblue] (0, 2) rectangle ++(1, 0.5);
      \draw (0+0.05 , 2.25) -- (0-0.05 , 2.25) node[anchor=east] {$l_1$};
      
      \fill[lightblue] (0, 1.5) rectangle ++(1, 0.5);
      \draw (0+0.05 , 1.75) -- (0-0.05 , 1.75) node[anchor=east] {$l_2$};
      
      \fill[cyan] (0, 1) rectangle ++(1, 0.5);
      \draw (0+0.05 , 1.25) -- (0-0.05 , 1.25) node[anchor=east] {$l_3$};
      
      \fill[orange] (0, 0.5) rectangle ++(1, 0.5);
      \draw (0+0.05 ,0.75) -- (0-0.05 ,0.75) node[anchor=east] {$l_4$};
      
      \fill[yellow] (0,0) rectangle ++(1, 0.5);
      \draw (0+0.05 ,0.25) -- (0-0.05 ,0.25) node[anchor=east] {$l_5$};
        
    \end{tikzpicture} \\
    {original image}    & {noisy image}  & {labels}
  \end{tabulary}
  \caption{Original image (left), encoding the image directions "top", "bottom", "center", "left" and "right" by the RGB-color labels $\ell_1, \ell_2, \ell_3, \ell_4$ and $\ell_5$ (right).
    The noisy test image (middle) was created by randomly selecting $40\%$ of the original image pixels and then uniformly sampling a label at each position. Unlikely label transitions $\ell_{1} \leftrightarrow \ell_{2}$ and $\ell_{4} \leftrightarrow \ell_{5}$ are represented by color (feature) vectors that are close to each other and hence can be easily confused. 
}
  \label{fig:nonPotts_origData_noisyData}
\end{figure}

\begin{equation}\label{eq:non-Potts_definition}
  \theta_{ij} = \frac{1}{10} \bpm 0 &10 &1 &1 &1\\ 10 &0 &1 &1 &1\\ 1 &1 &0 &1 &1\\ 1 &1 &1 &0 &10\\ 1 &1 &1 &10 &0\epm,\qquad ij\in\mc E.
\end{equation}
In words, every entry of $\theta_{ij}$ corresponding to a label transition $\ell_1 = \text{"dark blue"}$ ("top") next to $\ell_2 = \text{"light blue"}$ ("bottom") or 
$\ell_4 = \text{"orange"}$ ("left") next to $\ell_5 = \text{"yellow"}$ ("right") has the large penalty value $1$, whereas all other "natural" configurations are treated 
as with the Potts prior and smaller penalty value of $0$ and $0.1$, respectively. We point out that no color vectors or any other embedding was used to facilitate this regularization task or to represent it in a more application-specific way. Rather, the non-uniform prior \eqref{eq:non-Potts_definition} was considered as \textit{given} in terms of some discrete graphical models and its energy function \eqref{eq:def-E}. On the other hand, the pairs of labels $(\ell_{1},\ell_{2})$ and $(\ell_{4},\ell_{5})$ forming unlikely label transitions can be easily confused by the data term, due to the small distance of the color (feature) vectors representing these labels.

To demonstrate how these non-uniform model parameters influence label assignments, 
we compared the evaluation of this model against a model with a uniform Potts prior
\begin{equation}
  \big( \theta_{ij}' \big)_{k,r} = \tfrac{1}{10} (1 - \delta_{k, r}),
    \quad \text{where}\quad \delta_{k, r} = \begin{cases} 1 & \text{if } k = r,\\ 0 & \text{else},\end{cases},
    \qquad\text{for}\quad ij \in \mc E.
\end{equation}
In our experiments, we used the scaling factor $\rho = 15$ for the unaries, step-size $h = 0.1$, rounding parameter $\alpha = 0.01$, smoothing parameter $\tau = 0.01$ and 
$10^{-4}$ as threshold for the normalized average entropy termination criterion \eqref{eq:normalized_avg_entropy}.

\begin{figure}[h]
  \centering
  \begin{tabulary}{1\linewidth}{C C }
    \includegraphics[width=\imgsize]{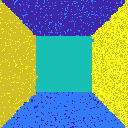} &
    \includegraphics[width=\imgsize]{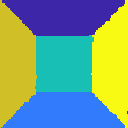} \\
    {Potts}         & {non-Potts} \\
    Acc : 87.12\%  & Acc : 99.34\%
  \end{tabulary}
  \caption{Results of the labeling problem using the Potts and non-Potts prior model together with the Accuracy (Acc) values. Parameters for this
  experiment are $\rho = 15$, smoothing $\tau = 0.01$, step-size $h = 0.1$ and rounding strength $\alpha = 0.01$. The threshold for the
  termination criterion \eqref{eq:normalized_avg_entropy} was $10^{-4}$.}
  \label{fig:nonPotts_results}
\end{figure}

The results depicted in Fig.~\ref{fig:nonPotts_results} clearly show the positive influence of the non-Potts prior (labeling accuracy $99.34\%$) whereas using the Potts prior lowers the accuracy to $87.12\%$. This is due to the fact that the color labels $\ell_4$ and $\ell_5$ as well as $\ell_1$ 
and $\ell_2$ have a relatively small $\| \cdot \|_1$ distance and are therefore not easy to distinguish using both the data term and a Potts prior. On the other hand, the additional prior information about valid label configurations encoded by \eqref{eq:non-Potts_definition} 
was sufficient to overcome this difficulty, despite using the same data term, and to separate the regions correctly.

\section{Conclusion}
\label{sec:Conclusion}

We presented a novel approach to the evaluation of discrete graphical models in a smooth geometric setting. The novel inference algorithm propagates in parallel `Wasserstein messages' along edges. These messages are lifted to the assignment manifold and drive a Riemannian gradient flow, that terminates at an integral labeling. Local marginalization constraints are satisfied throughout the process. A single parameter enables to trade-off accuracy of optimization and speed of convergence.  

Our work motivates to address applications using graphical models with higher edge connectivity, where established inference algorithms based on convex programming noticeably slow down. 
Likewise, generalizing our approach to tighter relaxations based on hypergraphs and corresponding entropy approximations \cite{Yedidia-GenBP-05,Pakzad:2005aa} seems worth additional investigation. 
Our future work will leverage the inherent smoothness of our mathematical setting for designing more advanced numerical schemes based on higher-order geometric integration and using multiple spatial scales. 
%

\appendix
\section{Proofs}
\label{sec:Appendix}

\subsection{Proof of Proposition \ref{prop:euclidean_gradient_smooth_energy_general}}
\label{sec:proof-euclidean_gradient_smooth_energy_general}
  Let $\gamma \colon (-\varepsilon, \varepsilon) \to \mc W$ be a smooth curve, with $\varepsilon > 0$, $\gamma(0) = W$ and $\dot{\gamma}(0) = V$. We then have
  \begin{equation}\label{eq:first_step_calc_euclidean_gradient_smooth_energy}
    \la \nabla E_\tau(W), V \ra = \frac{d}{dt} E_\tau\big(\gamma(t)\big) \Big|_{t=0} \overset{\eqref{eq:J-smooth_rewritten}}{=} 
    \sum_{i\in V} \Big( \la P_{T}(\theta_i), V_i\ra + \sum_{j \colon (i, j) \in \mc E} \frac{d}{dt} d_{\theta_{ij}, \tau}(\gamma_i(t), \gamma_j(t))\Big|_{t=0}\Big)\;,
  \end{equation}
  where $\gamma_k(t)$ denotes the $k$-th row of the matrix $\gamma(t) \in \mc W \subset \mb{R}^{m\times n}$. Since
  \begin{equation}
    \frac{d}{dt} d_{\theta_{ij}, \tau}(\gamma_i(t), \gamma_j(t))\Big|_{t=0} = \la \nabla_1 d_{\theta_{ij}, \tau}(W_i, W_j), V_i\ra + \la \nabla_2 d_{\theta_{ij}, \tau}(W_i, W_j), V_j\ra\;,
  \end{equation}
the r.h.s. of \eqref{eq:first_step_calc_euclidean_gradient_smooth_energy} becomes
  \begin{equation}\label{eq:second_step_calc_euclidean_gradient_smooth_energy}
    \la \nabla E_\tau(W), V \ra = \sum_{i\in V} \Big( \la P_{T}(\theta_i), V_i\ra + \sum_{j \colon (i, j) \in \mc E} \la \nabla_1 d_{\theta_{ij}, \tau}(W_i, W_j), V_i\ra\Big)
    + \sum_{i\in V} \sum_{j \colon (i, j) \in \mc E} \la \nabla_2 d_{\theta_{ij}, \tau}(W_i, W_j), V_j\ra\;,
  \end{equation}
  where we deliberately separated the outer sum into two parts. Let $\delta_{(k,l)\in\mc E}$ be the function with value $1$ if $(k, l) \in \mc E$ and $0$ if $(k, l) \notin \mc E$.
  Then the second sum of the expression above reads
  \begin{subequations}
    \begin{align}
      \sum_{i\in V} \sum_{j \colon (i, j) \in \mc E} \big\la \nabla_2 d_{\theta_{ij}, \tau}(W_i, W_j), V_j\big\ra 
      &= \sum_{i\in V} \sum_{j \in V} \delta_{(i,j)\in\mc E} \big\la \nabla_2 d_{\theta_{ij}, \tau}(W_i, W_j), V_j\big\ra\\
      &= \sum_{j\in V} \sum_{i \in V} \delta_{(i,j)\in\mc E} \big\la \nabla_2 d_{\theta_{ij}, \tau}(W_i, W_j), V_j\big\ra\\
      &= \sum_{j\in V} \sum_{i\colon (i,j)\in\mc E} \big\la \nabla_2 d_{\theta_{ij}, \tau}(W_i, W_j), V_j\big\ra\\
      &= \sum_{i\in V} \sum_{j\colon (j,i)\in\mc E} \big\la \nabla_2 d_{\theta_{ji}, \tau}(W_j, W_i), V_i\big\ra\;,
    \end{align}
  \end{subequations}
  where the last equation follows by renaming the indices of summation. Substitution into \eqref{eq:second_step_calc_euclidean_gradient_smooth_energy} gives
\begin{subequations}
\begin{align}
\la \nabla E_\tau(W), V \ra 
&= \sum_{i\in V} \Big\la P_{T}(\theta_i) + \sum_{j \colon (i, j) \in \mc E} \nabla_1 d_{\theta_{ij}, \tau}(W_i, W_j)
    + \sum_{j \colon (j, i) \in \mc E} \nabla_2 d_{\theta_{ji}, \tau}(W_j, W_i), V_i\Big\ra \\
&= \sum_{i\in V} \la \nabla_{i} E_\tau(W), V_{i} \ra
\end{align}
\end{subequations}
which proves \eqref{eq:euclidean_gradient_smooth_energy_general}.

\subsection{Proof of Lemma \ref{lem:rewritten_argmax_of_g}}
\label{sec:proof-lem-rewritten_argmax_of_g}
We first show that, if $\ol{\nu}$ is an optimal dual solution, then
\begin{equation}\label{eq:dual-optima-inclusion}
\underset{\nu \in \R^{2 n}}{\argmax}\;g(p,\nu)
\subseteq \ol{\nu} + \mc{N}(\mc{A}^{\T}).
\end{equation}
Let $\ol{\nu}' \neq \ol{\nu}$ be another optimal dual solution, that is $g(p,\ol{\nu})=g(p,\ol{\nu}')$. By \eqref{eq:def-g-dual}, this equation reads
\begin{equation}\label{eq:rewritten_G_difference}
G_{\tau}^{\ast}(\mathcal{A}^\top \ol{\nu} - \Theta) - G_{\tau}^{\ast}(\mathcal{A}^\top \ol{\nu}' - \Theta) = \la p, \ol{\nu} - \ol{\nu}' \ra\;.
\end{equation}
Moreover, due to the optimality conditions \eqref{eq:dtau-opt-conditions}, $\ol{\nu}'$ satisfies
\begin{equation}\label{eq:rewritten_optimality_condition_primal_dual}
\ol{M}' = \nabla G_{\tau}^{\ast}(\mathcal{A}^\top \ol{\nu}' - \Theta),\qquad \mc{A} \ol{M}' = p,
\end{equation}
with a corresponding primal optimal solution $\ol{M}'$. Hence
\begin{equation}\label{eq:rewritten_G_difference2}
\la p, \ol{\nu} - \ol{\nu}' \ra 
= \la \mc{A} \ol{M}', \ol{\nu} - \ol{\nu}' \ra 
= \la \ol{M}', \mc{A}^\top (\ol{\nu} - \ol{\nu}') \ra
\overset{\eqref{eq:rewritten_optimality_condition_primal_dual}}{=} \la \nabla G_{\tau}^{\ast}(\mathcal{A}^\top \ol{\nu}' - \Theta), \mc{A}^\top (\ol{\nu} - \ol{\nu}') \ra\;.
\end{equation}
Using the shorthands 
\begin{equation}
\ol{w} = \mc{A}^\top \ol{\nu} - \Theta,\qquad 
\ol{w}' = \mc{A}^\top \ol{\nu}' - \Theta,
\end{equation} 
we have 
\begin{equation}\label{eq:diff-w1-w2}
\ol{w}' - \ol{w} = \mc{A}^\top (\ol{\nu}' - \ol{\nu})
\end{equation}
and therefore
\begin{equation}
G_{\tau}^{\ast}(\ol{w}') - G_{\tau}^{\ast}(\ol{w}) \overset{\eqref{eq:rewritten_G_difference}}{=} 
\la p, \ol{\nu}' - \ol{\nu} \ra 
\overset{\eqref{eq:rewritten_G_difference2}}{=} 
\la \nabla G_{\tau}^{\ast}(\ol{w}'), \ol{w}' - \ol{w} \ra.
\end{equation}
Since $G_{\tau}^{\ast}$ is strictly convex, this equality can only hold if 
\begin{equation}
0 = \ol{w}' - \ol{w} 
\overset{\eqref{eq:diff-w1-w2}}{=} 
\mc{A}^\top(\ol{\nu}' - \ol{\nu}).
\end{equation}
This shows that $\ol{\nu}$ and $\ol{\nu}'$ can only differ by a nullspace vector, i.e.~we have shown relation \eqref{eq:dual-optima-inclusion}. It remains to show the reverse inclusion, that is vectors characterized by the right-hand side of \eqref{eq:argmax-g} maximize the dual objective function $g(p,\nu)$.

Let again $\ol{\nu}$ be an optimal dual solution, and let $\ol{\nu}' \in \ol{\nu} + \mc{N}(\mc{A}^{\T})$ be an arbitrary vector. Lemma~\ref{lem:kernel_A_transposed} implies that $\ol{\nu}'$ takes the form
\begin{equation}\label{eq:nu-strich-explicit}
\ol{\nu}' = \ol{\nu} + \alpha \bsm \eins_{n} \\ -\eins_{n} \esm,\qquad \alpha \in \R.
\end{equation}
Now suppose $\left\la p, \bsm \eins_{n} \\ -\eins_{n} \esm \right\ra = 0$. Then, since $\mc{A}^\top \ol{\nu}' = \mc{A}^\top \ol{\nu}$, we have
\begin{subequations}
\begin{align}
g(a, \ol{\nu}') 
&= \la p, \ol{\nu} + \alpha \bsm \eins_{n} \\ -\eins_{n} \esm \ra - G_{\tau}^{\ast}\Big(\mathcal{A}^\top \big(\ol{\nu} + \alpha \bsm \eins_{n} \\ -\eins_{n} \esm\big) - \Theta\Big) \\
&= \la p, \nu \ra - G_{\tau}^{\ast}(\mc{A}^{\T}\ol{\nu}-\Theta)
= g(a, \ol{\nu}),
\end{align}
\end{subequations}
that is $\ol{\nu}' \in \argmax_{\nu \in \R^{2 n}} g(p,\nu)$.

Finally, suppose $\left\la p, \bsm \eins_{n} \\ -\eins_{n} \esm \right\ra \neq 0$, $\ol{\nu}$ is an optimal dual solution and $\ol{\nu}'$ is another optimal dual vector, which has the form \eqref{eq:nu-strich-explicit} as just shown.
Inserting \eqref{eq:nu-strich-explicit} into \eqref{eq:rewritten_G_difference} yields
\begin{equation}\label{eq:rewritten_condition_r}
0 = \la p, \ol{\nu}' - \ol{\nu} \ra = \alpha \la p, \bsm \eins_{n} \\ -\eins_{n} \esm \ra.
\end{equation}
Since $\left\la p, \bsm \eins_{n} \\ -\eins_{n} \esm \right\ra \neq 0$, this can only hold if $\alpha = 0$. Thus, $\ol{\nu}' = \ol{\nu}$ by \eqref{eq:nu-strich-explicit}, which shows uniqueness of $\ol{\nu}$ as claimed by  \eqref{eq:argmax-g}.

\section*{Acknowledgements}
We thank Jan Kuske for sharing with us his framework for running series of experiments efficiently.

\bibliographystyle{amsalpha}
\bibliography{TexInput/assignmentFilter}
\end{document}